\newif\ifincludeappendixx
\DeclareAcronym{cli} {
    short = CLI,
    long = Command Line Interface,
    class = abbrev
}
\definecolor{ckeyword}{HTML}{7F0055}
\definecolor{ccomment}{HTML}{3F7F5F}
\definecolor{cstring}{HTML}{2A0099}
\lstdefinestyle{numbers}{
	numbers=left,
	framexleftmargin=20pt,
	numberstyle=\tiny,
	firstnumber=auto,
	numbersep=1em,
	xleftmargin=2em
}
\lstdefinestyle{layout}{
	frame=none,
	captionpos=b,
}
\lstdefinestyle{comment-style}{
	morecomment=[l]//,
	morecomment=[s]{/*}{*/},
	commentstyle={\color{ccomment}\itshape},
}
\lstdefinestyle{string-style}{
	morestring=[b]",%
	morestring=[b]',%
	stringstyle={\color{cstring}},
	showstringspaces=false,%
}
\lstdefinestyle{keyword-style}{
	keywordstyle={\ttfamily\bfseries},
	morekeywords={
		function,
		constructor,
		int,
		bool,
		return,
		returns,
		uint
	},
	morekeywords = [2]{},
	keywordstyle = [2]{\text},
	sensitive=true,
}
\lstdefinestyle{input-encoding}{
	inputencoding=utf8,
	extendedchars=true,
	literate=
	{ℝ}{$\reals$}1%
	{→}{$\rightarrow$}1%
	{α}{$\alpha$}1%
	{β}{$\beta$}1%
	{λ}{$\lambda$}1%
	{θ}{$\theta$}1%
	{ϕ}{$\phi$}1%
}
\lstdefinestyle{escaping}{
	moredelim={**[is][\color{blue}]{\%}{\%}},
	escapechar=|,
	mathescape=true
}
\lstdefinestyle{default-style}{
	basicstyle=\fontencoding{T1}\ttfamily\footnotesize,
	style=numbers,
	style=layout,
	style=comment-style,
	style=string-style,
	style=keyword-style,
	style=input-encoding,
	style=escaping,
	tabsize=2,
	upquote=true
}
\lstdefinelanguage{BASIC}{
	language=C++,
	style=default-style
}[keywords,comments,strings]%
\def\1{\bm{1}}
\def\vc{{\bm{c}}}
\def\vx{{\bm{x}}}
\def\vy{{\bm{y}}}
\def\mR{{\bm{R}}}
\DeclareMathAlphabet{\mathsfit}{\encodingdefault}{\sfdefault}{m}{sl}
\SetMathAlphabet{\mathsfit}{bold}{\encodingdefault}{\sfdefault}{bx}{n}
\DeclareMathOperator*{\argmax}{arg\,max}
\newcommand{\prob}[0]{\mathbb{P}} %
\theoremstyle{plain} %
\newtheorem{theorem}{Theorem}[section]
\newtheorem*{theorem*}{Theorem}
\newtheorem*{lemma*}{Lemma}
\newtheorem*{corollary*}{Corollary}
\newtheorem{proposition}{Proposition}
\newtheorem{proposition*}{Proposition*}
\theoremstyle{definition}
\newcommand{\abstain}{{\ensuremath{\oslash}}\xspace}
\newcommand{\pval}{{\ensuremath{pv}}\xspace}
\newcommand{\naiveCartesian}{\textsc{JointClass}\xspace}
\newcommand{\naiveMulti}{\textsc{IndivClass}\xspace}
\newcommand\theHALG@line{\thealgorithm.\arabic{ALG@line}}
\newcommand{\crefrangeconjunction}{--}
\crefname{listing}{Lst.}{listings}
\crefname{line}{Lin.}{Lin.}
\crefname{appendix}{App.}{App.}
\newcommand{\appref}[1]{%
	\ifbool{includeappendix}{\cref{#1}}{the appendix}%
}
\newcommand{\Appref}[1]{%
	\ifbool{includeappendix}{\cref{#1}}{The appendix}%
}
\newcommand{\titlet}{Scalable Certified Segmentation via Randomized Smoothing}
\icmltitlerunning{\titlet}
\begin{document}
\twocolumn[
\icmltitle{\titlet}

\begin{icmlauthorlist}
\icmlauthor{Marc Fischer}{eth}
\icmlauthor{Maximilian Baader}{eth}
\icmlauthor{Martin Vechev}{eth}
\end{icmlauthorlist}

\icmlaffiliation{eth}{Department of Computer Science, ETH Zurich, Switzerland}

\icmlcorrespondingauthor{Marc Fischer}{marc.fischer@inf.ethz.ch}

\icmlkeywords{Machine Learning, ICML, Randomized Smoothing, Adversarial Robustness, Certified Robustness, Segmentation, Sematic Segmentation, Adversarial Examples}

\vskip 0.3in
]

\printAffiliationsAndNotice{}  %

\begin{abstract}
  We present a new certification method for image and point cloud segmentation based on randomized smoothing. The method leverages a novel scalable algorithm for prediction and certification that correctly accounts for multiple testing, necessary for ensuring statistical guarantees. The key to our approach is reliance on established multiple-testing correction mechanisms as well as the ability to abstain from classifying single pixels or points while still robustly segmenting the overall input. Our experimental evaluation on synthetic data and challenging datasets, such as Pascal Context, Cityscapes, and ShapeNet, shows that our algorithm can achieve, for the first time, competitive accuracy and certification guarantees on real-world segmentation tasks.
We provide an implementation at \url{https://github.com/eth-sri/segmentation-smoothing}.

\end{abstract}

\section{Introduction}
\label{sec:intro}

Semantic image segmentation and point cloud part segmentation are important problems in many safety critical domains including medical imaging \cite{perone2018spinal} and autonomous driving \cite{DengYQWW17}. However, deep learning models used for segmentation are vulnerable to adversarial attacks \cite{XieWZZXY17, ArnabMT18, XiangQL19}, preventing their application to such tasks.
This vulnerability is illustrated in \cref{fig:intro}, where the task is to segment the (adversarially attacked) image shown in \cref{fig:intro:original}. We see that the segmentation of the adversarially attacked image is very different from the ground truth depicted in \cref{fig:intro:groundtruth}, potentially causing unfavorable outcomes.
While provable robustness to such adversarial perturbations is well studied for classification
\cite{KatzBDJK17, GehrMDTCV18, WongK18, CohenRK19},
the investigation of certified segmentation just begun recently
\citep{lorenzRBSV2021robustness,tran2021robustness}.

Certifiably robust segmentation is a challenging task, as the classification of each
component (e.g., a pixel in an image) needs to be certified simultaneously. Many datasets and models in
this domain are beyond the reach of current deterministic verification methods while
probabilistic methods need to account for accumulating uncertainty over the large number of individual certifications.

In this work we propose a novel method to certify the robustness of segmentation models via randomized smoothing \citep{CohenRK19}, a probabilistic certification method able to certify $\ell_2$ robustness around large images.
As depicted in
\cref{fig:intro:certified}, our method enables the certification of challenging segmentation tasks by certifying each component individually, abstaining from unstable ones that cause naive algorithms to fail.
This abstention mechanism also provides strong synergies with the multiple testing correction, required for the soundness of our approach, thus enabling high certification rates.

\begin{figure}
    \newcommand{\subfigwidth}{0.49\columnwidth}
    \begin{subfigure}[b]{\subfigwidth}
        \includegraphics[width=\textwidth]{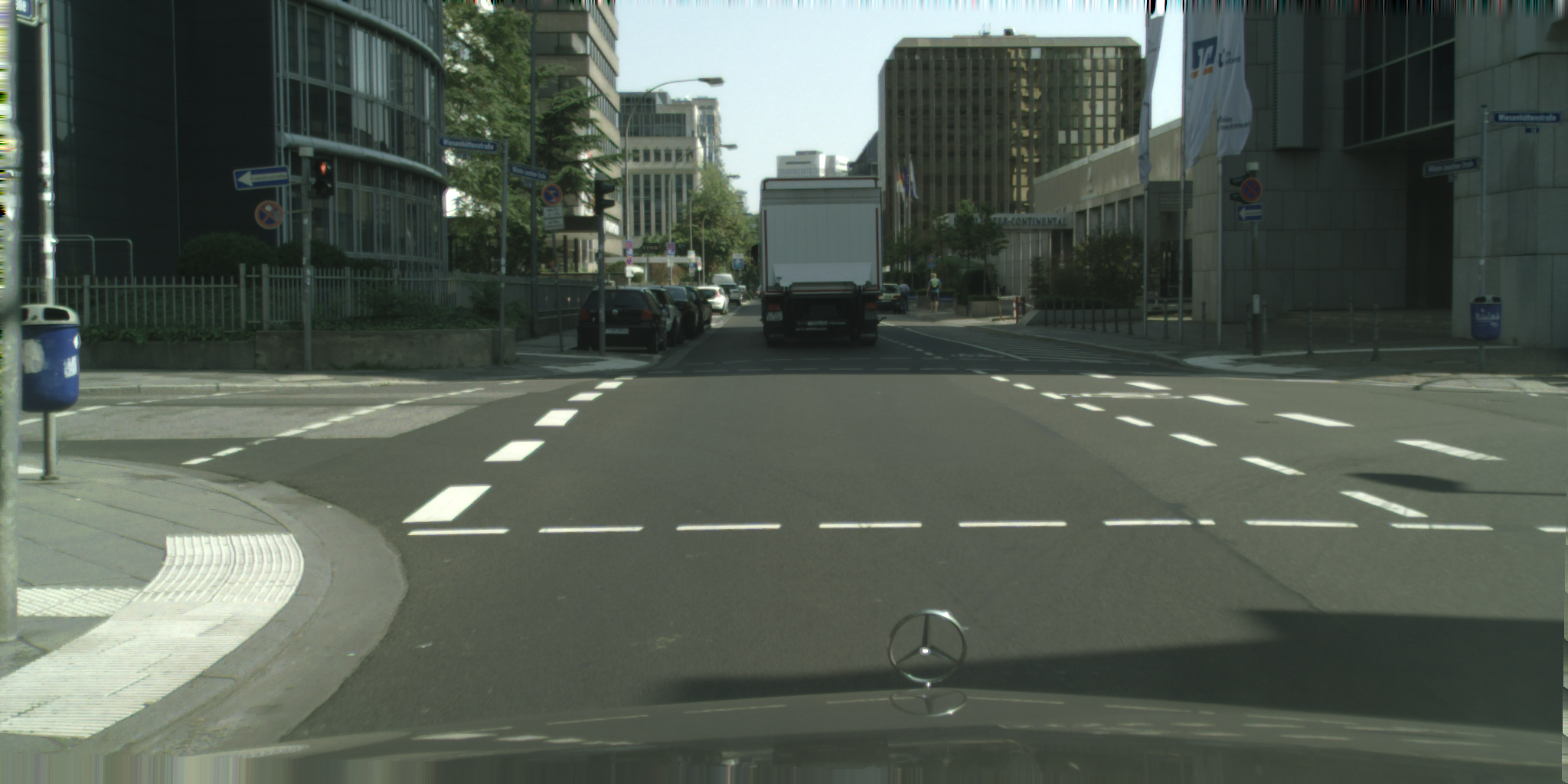}
        \caption{Attacked image} 
        \label{fig:intro:original}
    \end{subfigure}
    \hfill
    \begin{subfigure}[b]{\subfigwidth}
        \includegraphics[width=\textwidth]{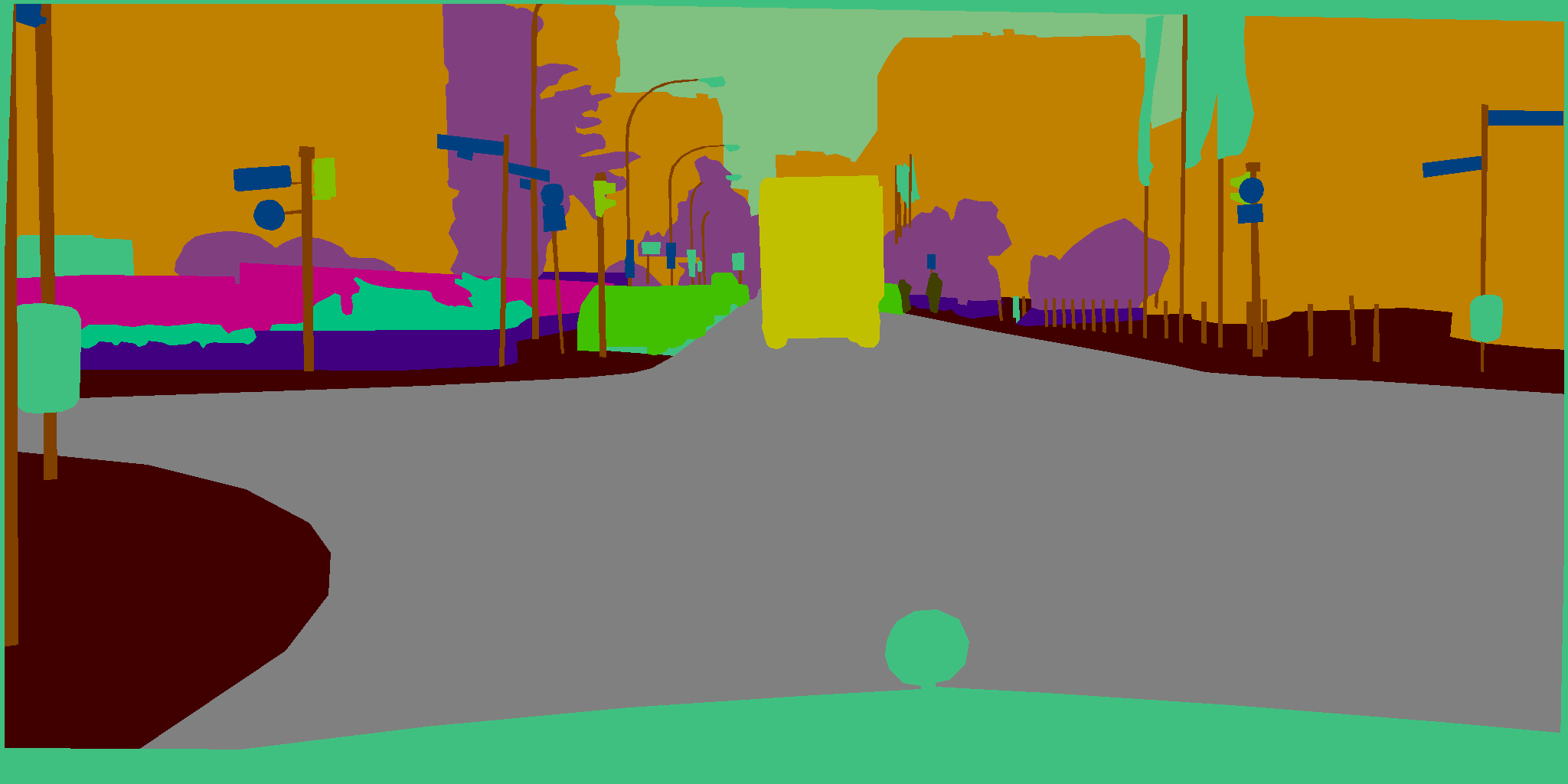}
        \caption{Ground truth segmentation} 
        \label{fig:intro:groundtruth}
    \end{subfigure}
    \linebreak
    \begin{subfigure}[b]{\subfigwidth}
        \includegraphics[width=\textwidth]{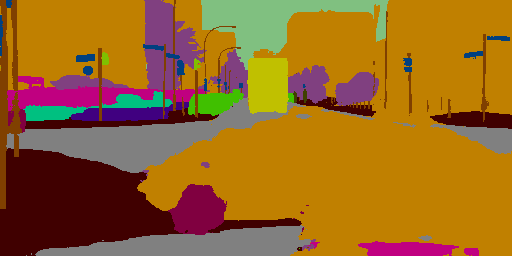}
        \caption{Attacked segmentation} 
        \label{fig:intro:attacked}
    \end{subfigure}
    \hfill
    \begin{subfigure}[b]{\subfigwidth}
        \includegraphics[width=\textwidth]{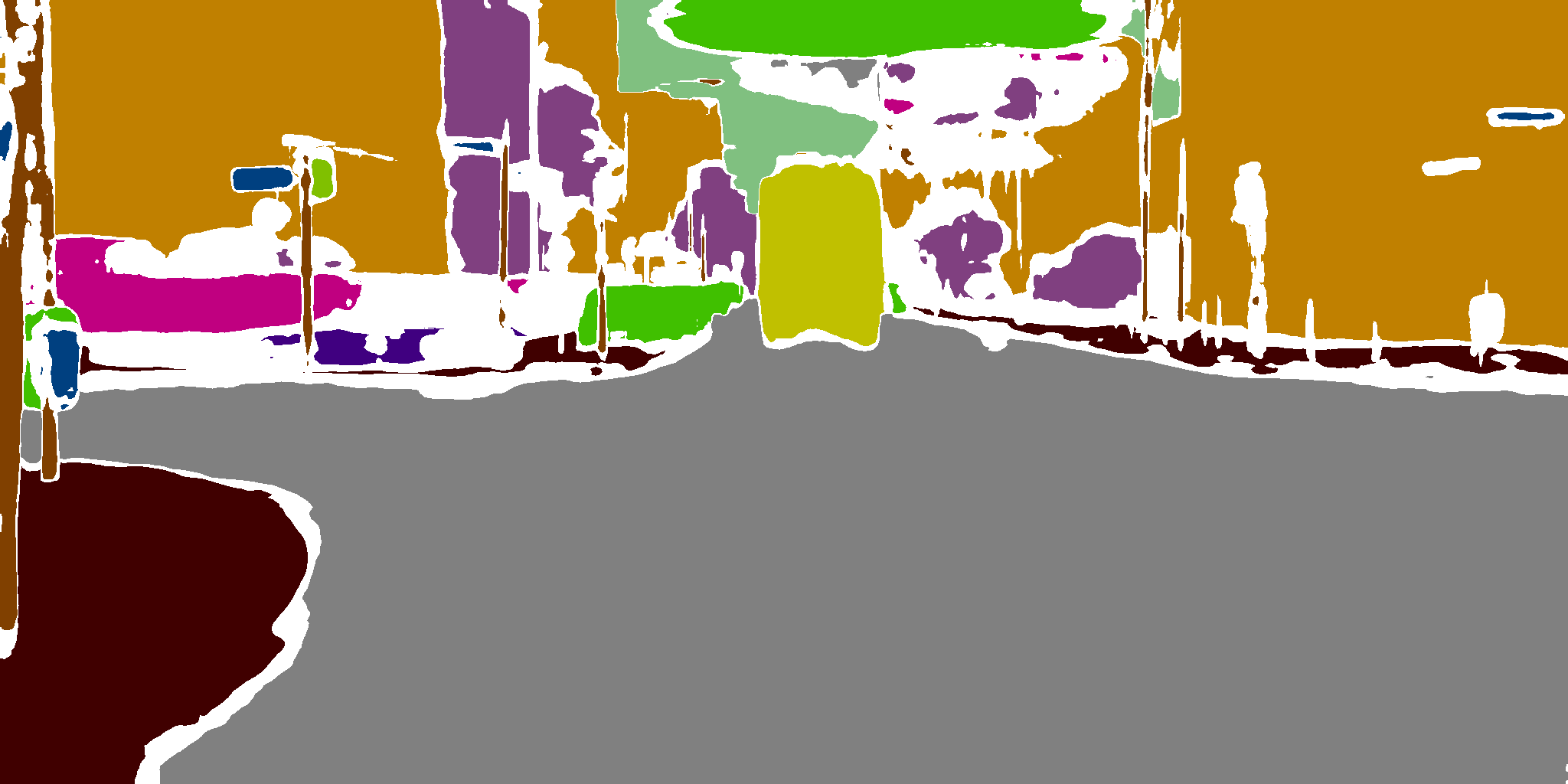}
        \caption{Certified segmentation} 
        \label{fig:intro:certified}
    \end{subfigure}
    \caption{In semantic segmentation, a model segments an input (a) by
      classifying each pixel. While the result should match (b), a non-robust model
      predicts (c) as the input (a) was perturbed by additive $\ell_{2}$ noise (PGD). Our model is certifiably robust to this perturbation (d) by
      abstaining from ambiguous pixels (white). The model abstains where multiple classes meet, causing ambiguity. We provide technical details and further examples in
      \cref{sec:app-details-attacks}. }
    \label{fig:intro}
\end{figure}

While we focus in our evaluation on $\ell^2$ robustness, our method is general and can also combined with randomized smoothing methods that certify robustness to other $\ell^p$-bounded attacks or parametrized transformations like rotations \citep{FischerBV20}.

\textbf{Main Contributions} Our key contributions are:
\begin{itemize}
  \item We investigate the obstacles of scaling randomized smoothing from
  classification to segmentation, identifying two key challenges: the influence of single bad components and the  multiple testing trade-offs
  (\cref{sec:segmentation})
  \item We introduce a scalable algorithm that addresses these issues, allowing,
        for the first time, to certify large scale segmentation models (\cref{sec:scaleable}).
  \item We show that this algorithm can be applied to different generalizations
        of randomized smoothing, enabling defenses against different attacker
        models (\cref{sec:extensions}).
  \item We provide an extensive evaluation on semantic
        image segmentation and point cloud part segmentation, achieving up to
        88\% and 55\% certified pixel accuracy on Cityscapes and Pascal context
        respectively, while obtaining mIoU of 0.6 and 0.2 (\cref{sec:eval}).
\end{itemize}

\section{Related Work}
\label{sec:related}

In the following, we survey the most relevant related work.

\paragraph{Adversarial Attacks}
\citet{BiggioCMNSLGR13,SzegedyAdvExamples} discovered adversarial examples,
which are inputs perturbed in a way that preserves their semantics but fools deep networks. 
To improve performance on these inputs, training can be extended to including adversarial examples, called adversarial training
\citep{KurakinGB17, MadryMSTV18}.
Most important to this work are attacks on semantic segmentation: \citet{XieWZZXY17} introduced a targeted gradient based unconstrained attack, by maximizing the summed individual targeted losses for every pixel. \citet{ArnabMT18} applied FGSM based attacks \citep{GoodfellowAdvExamples} to the semantic segmentation setting. Similarly, the vulnerability of point cloud classifiers was exposed in \citet{XiangQL19, LiuYS19, SunCCM20}.

\paragraph{Certified robustness and defenses}
However, adversarially trained neural networks do not come with robustness guarantees. To address this, different certification methods have been proposed recently using various methods, relying upon SMT solvers \citep{KatzBDJK17, Ehlers17}, semidefinite programming \citep{RaghunathanSL18a} and linear relaxations \citep{GehrMDTCV18, ZhangWCHD18, WangPWYJ18, WengZCSHDBD18, WongK18, SinghGPV19b, SinghGPV19a}. Specifically, linear relaxations have been used beyond the classical $\ell_p$ noise setting to certify against geometric transformations \citep{SinghGPV19b, BalunovicBSGV19, MohapatraWC0D20} and vector field attacks \citep{ruoss2020spatial}.

To further improve certification rates, methods that train networks to be certifiable have been proposed \citep{RaghunathanSL18b, MirmanGV18, GowalDSBQUAMK18, BalunovicV20}.

Notable to our setting are \citet{lorenzRBSV2021robustness,tran2021robustness}, who extend deterministic certification to point cloud segmentation and semantic segmentation respectively. However, due to the limitations of deterministic certification these models are small in scale.
Further, \citet{BielikV20,sheikholeslami2021provably} improved robust classifiers with the ability to abstain from classification.

\paragraph{Randomized Smoothing}
Despite all this, 
deterministic certification performance on complicated datasets remained unsatisfactory. Recently, based on \citet{LiSampling} and \citet{LecuyerAG0J19}, \citet{CohenRK19} presented randomized smoothing, which was the first certification method to successfully certify $\ell_2$ robustness of large neural networks on large images. \citet{SalmanLRZZBY19} improved the results, by combining the smoothing training procedure with adversarial training. \citet{YangDHSR020} derive conditions for optimal smoothing distributions for $\ell_1, \ell_2$ and $\ell_\infty$ adversaries, if only label information is available. \citet{MohapatraKWC0D20} incorporated gradient information to improve certification radii. \citet{ZhaiDHZGRH020,JeongS20} improved the training procedure for base models by introducing regularized losses.

Randomized smoothing has been extended in various ways. \citet{BojchevskiKG20} proposed a certification scheme suitable for discrete data and applied it successfully to certify graph neural networks.
\citep{LevineF20} certified robustness against Wasserstein adversarial examples.
\citet{FischerBV20} and \citet{li2020tss} used randomized smoothing to certify robustness against geometric perturbations. \citet{SalmanSYKK20} showed that using a denoiser, of the shelf classifiers can be turned into certifiable classifiers without retraining. \citet{0001F20a} and \citet{lin2021certified} presented methods to certify robustness against adversarially placed patches.

Most closely related to this work are \citet{ChiangCAK0G20}, which introduces
median smoothing and applies it to certify object detectors, and
\citet{schuchardt2021collective}, which also extends randomized smoothing to
collective robustness certificates over multiple components. They specifically
exploit the locality of the classifier to the data, making their defense
particularly suitable for graphs where an attacker can only modify certain
subsets. While their approach can in principle be applied to semantic
segmentation, modern approaches commonly rely on global information.

\section{Randomized Smoothing for Classification}
\label{sec:smoothing}

\begin{algorithm}[t]
   \caption{adapted from \citep{CohenRK19}}
   \label{alg:cohen}
\begin{algorithmic}
   \STATE \textit{\# evaluate $\bar{f}$ at $\vx$}
   \STATE \textbf{function} \textsc{Predict}($f$, $\sigma$, $\vx$, $n$, $\alpha$)
   \STATE \quad \texttt{cnts} $\leftarrow$ \textsc{Sample}($f$, $\vx$, $n$, $\sigma$)
   \STATE \quad $\hat{c}_A, \hat{c}_B \leftarrow$ top two indices in \texttt{cnts}
   \STATE \quad $n_A, n_B \leftarrow $ \texttt{cnts}[$\hat{c}_A$],  \texttt{cnts}[$\hat{c}_B$]
   \STATE \quad \textbf{if} \textsc{BinPValue}($n_A$, $n_A + n_B$, $=$, 0.5) $\le \alpha$ \textbf{return} $\hat{c}_A$
   \STATE \quad \textbf{else return} \abstain
   \STATE
   \STATE \textit{\# certify the robustness of $\bar{f}$ around $\vx$}
   \STATE \textbf{function} \textsc{Certify}($f$, $\sigma$, $\vx$, $n_0$, $n$, $\alpha$)
   \STATE \quad $\texttt{cnts}^{0} \leftarrow \textsc{Sample}(f, \vx, n_0, \sigma)$
   \STATE \quad $\hat{c}_A \leftarrow$ top index in $\texttt{cnts}^{0}$
   \STATE \quad $\texttt{cnts} \leftarrow \textsc{Sample}(f, \vx, n, \sigma)$
   \STATE \quad $\underline{p_A} \leftarrow \textsc{LowerConfBnd}$($\texttt{cnts}[\hat{c}_A]$, $n$, $1 - \alpha$)
   \STATE \quad \textbf{if} $\underline{p_A} > \frac{1}{2}$ \textbf{return} prediction $\hat{c}_A$ and radius $\sigma \, \Phi^{-1}(\underline{p_A})$
   \STATE \quad \textbf{else return} \abstain
\end{algorithmic}
\end{algorithm}

In this section we will briefly review the necessary background and notation on
randomized smoothing, before extending it in \cref{sec:segmentation,sec:scaleable}.
Randomized smoothing \citep{CohenRK19} constructs a robust (smoothed)
classifier $\bar{f}$ from a (base) classifier $f$. The classifier $\bar{f}$ is
then provably robust to $\ell_{2}$-perturbations up to a certain radius.

Concretely, for a classifier $f \colon \mathbb{R}^m \to \mathcal{Y}$ and random
variable $\epsilon \sim \mathcal{N}(0, \sigma^2 \mathbb{1})$: we define a smoothed classifier $\bar{f}$ as
\begin{equation}
  \label{eq:g}
    \bar{f}(\vx) := \argmax_c \prob_{\epsilon \sim \mathcal{N}
        (0, \sigma^2 \mathbb{1})}(f(\vx + \epsilon)=c).
\end{equation}

This classifier $\bar{f}$ is then robust to adversarial perturbations:

\begin{theorem}[From \cite{CohenRK19}] \label{thm:original}
    Suppose $c_A \in \mathcal{Y}$, $\underline{p_A}, \overline{p_B} \in [0,1]$. If
    \begin{equation*}
        \prob_{\epsilon}(f(\vx + \epsilon)=c_A)
        \geq
        \underline{p_A}
        \geq
        \overline{p_B}
        \geq
        \max_{c \neq c_A}\prob_{\epsilon}(f(\vx + \epsilon)=c),
    \end{equation*}
    then $\bar{f}(\vx + \delta) = c_A$ for all $\delta$ satisfying $\|\delta\|_2 \leq R$
    with $R := \tfrac{\sigma}{2}(\Phi^{-1}(\underline{p_A}) - \Phi^{-1}(\overline{p_B}))$.
\end{theorem}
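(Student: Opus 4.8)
The plan is to prove the contrapositive-free direct statement: that the smoothed classifier returns $c_A$ at the shifted point, i.e. that $\prob_\epsilon(f(\vx + \delta + \epsilon) = c_A) > \prob_\epsilon(f(\vx + \delta + \epsilon) = c)$ for every competing class $c \neq c_A$ whenever $\|\delta\|_2 \leq R$. Writing $X \sim \mathcal{N}(\vx, \sigma^2 \mathbb{1})$ and $Y \sim \mathcal{N}(\vx + \delta, \sigma^2 \mathbb{1})$, and noting $f(\vx + \epsilon)$ is distributed as $f(X)$ while $f(\vx + \delta + \epsilon)$ is distributed as $f(Y)$, the task reduces to bounding how much the probability mass of the event $\{f = c_A\}$ can shrink, and the mass of a competitor event can grow, when the Gaussian is recentered from $\vx$ to $\vx + \delta$.

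The main tool is the Neyman--Pearson lemma. First I would observe that the likelihood ratio $\mu_Y(z)/\mu_X(z)$ of the two Gaussians is a strictly monotone function of the linear functional $\delta^\top z$, so every level set of this ratio is a half-space with normal direction $\delta$. Neyman--Pearson then says that among all measurable sets $A$ with $\prob(X \in A) \geq \underline{p_A}$, the one \emph{minimizing} $\prob(Y \in A)$ is exactly a half-space $A = \{z : \delta^\top(z - \vx) \leq t\}$, with $t$ calibrated so that $\prob(X \in A) = \underline{p_A}$. Applying the lemma in the opposite direction to a runner-up class, whose $X$-mass is at most $\overline{p_B}$, the \emph{maximizing} set is the complementary half-space.

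Next I would carry out the resulting one-dimensional Gaussian computation by projecting onto the direction $\delta$. Since $\delta^\top X \sim \mathcal{N}(\delta^\top \vx, \sigma^2\|\delta\|_2^2)$, the constraint $\prob(X \in A) = \underline{p_A}$ fixes $t = \delta^\top \vx + \sigma\|\delta\|_2\,\Phi^{-1}(\underline{p_A})$; substituting this into the $Y$-measure, where $\delta^\top Y \sim \mathcal{N}(\delta^\top\vx + \|\delta\|_2^2, \sigma^2\|\delta\|_2^2)$, yields $\prob(Y \in A) = \Phi\!\left(\Phi^{-1}(\underline{p_A}) - \|\delta\|_2/\sigma\right)$. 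The symmetric argument gives worst-case competitor growth $\Phi\!\left(\Phi^{-1}(\overline{p_B}) + \|\delta\|_2/\sigma\right)$. Combined with the hypothesis, this lower-bounds $\prob_\epsilon(f(\vx+\delta+\epsilon)=c_A)$ by the former quantity and upper-bounds every competitor probability by the latter.

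Finally I would invoke the strict monotonicity of $\Phi$ to turn the desired inequality $\Phi(\Phi^{-1}(\underline{p_A}) - \|\delta\|_2/\sigma) > \Phi(\Phi^{-1}(\overline{p_B}) + \|\delta\|_2/\sigma)$ into the linear condition $\Phi^{-1}(\underline{p_A}) - \Phi^{-1}(\overline{p_B}) > 2\|\delta\|_2/\sigma$, which rearranges to precisely $\|\delta\|_2 < R$, with the boundary $\|\delta\|_2 = R$ handled by continuity or a tie-breaking convention. I expect the main obstacle to be the careful deployment of Neyman--Pearson: verifying that the extremal set is genuinely the half-space, dealing with the edge case where $\prob(X \in A)$ cannot be made to equal $\underline{p_A}$ exactly (which needs the randomized version of the lemma), and keeping the two optimizations straight, since one minimizes the $c_A$ mass under $Y$ while the other maximizes the competitor mass under $Y$.
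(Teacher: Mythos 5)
Your proposal is correct and follows essentially the same route as the source of this result: the paper states \cref{thm:original} without proof, importing it verbatim from \citet{CohenRK19}, whose argument is exactly your Neyman--Pearson scheme (half-space level sets of the Gaussian likelihood ratio, calibration and projection onto the direction $\delta$, then monotonicity of $\Phi$). One minor caveat: your derivation, like the original, gives the guarantee only for $\|\delta\|_2 < R$ strictly, and at $\|\delta\|_2 = R$ the two worst-case bounds coincide, so the boundary case in the statement's $\|\delta\|_2 \leq R$ is covered by a tie-breaking convention in the $\argmax$ rather than by continuity.
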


In order to evaluate the model $\bar{f}$ and calculate its robustness radius
$R$, we need to be able to compute $\underline{p_{A}}$ for the input $\vx$.
However, since the computation of the probability in \cref{eq:g} is not
tractable (for most choices of $f$) $\bar{f}$ can not be evaluated exactly.
To still query it, \citet{CohenRK19} suggest \textsc{Predict} and
\textsc{Certify} (\cref{alg:cohen}) which use Monte-Carlo sampling to
approximate $\bar{f}$. Both algorithms utilize the procedure
\textsc{Sample}, which samples $n$ random realizations of
$\epsilon \sim \mathcal{N}(0, \sigma)$ and computes $f(\vx + \epsilon)$, which is returned as a vector
of counts for each class in $\mathcal{Y}$. These samples are then used to
estimate the class $c_{A}$ and radius $R$ with confidence $1 - \alpha$, where $\alpha \in [0, 1]$.
\textsc{Predict} utilizes a two-sided binomial p-value test to determine
$\bar{f}(\vx)$.
With probability at most $\alpha$ it will abstain, denoted as \abstain (a predefined value), else
it will produce $\bar{f}(\vx)$.
\textsc{Certify} uses the Clopper-Pearson confidence interval
\citep{clopper1934use} to soundly estimate $\underline{p_{A}}$ and then invoke
\cref{thm:original} with $\overline{p_{B}} = 1 - \underline{p_{A}}$ to obtain
$R$. If \textsc{Certify} returns a class other than \abstain and a radius $R$,
then with probability $1-\alpha$ the guarantee in \cref{thm:original} holds for
this $R$. The certification radius $R$ increases if (i) the value of
$\underline{p_{A}}$ increases, which increases if the classifier $f$ is robust
to noise, (ii) the number of samples $n$ used to estimate it increases, or (iii)
the $\alpha$ increases which means that the confidence decreases.

\section{Randomized Smoothing for Segmentation}
\label{sec:segmentation}

To show how randomized smoothing can be applied in the segmentation setting we
first discuss the mathematical problem formulation and two direct adaptations
of randomized smoothing to the problem. By outlining how these fail in practice,
we determine the two key challenges preventing their success. Then in,
\cref{sec:scaleable}, we address these challenges.

\paragraph{Segmentation}
Given an input $\vx = \{\vx_{i}\}_{i=0}^{N}$ of $N$ components $\vx_{i} \in
\mathcal{X}$ (e.g., points or pixels) and a set of possible classes $\mathcal{Y}$,
segmentation can be seen as a function $f \colon \mathcal{X}^{N} \to \mathcal{Y}^{N}$.
That is, to each $\vx_{i}$ we assign a $f_{i}(\vx) = y_{i} \in \mathcal{Y}$,
where $f_{i}$ denotes the $i$-th component of the output of $f$ invoked on input
$\vx$. Here we assume $\mathcal{X} := \mathbb{R}^{m}$. Unless specified, we will
use $m=3$ as this allows for RGB color pixels as well as 3d point clouds.

\paragraph{Direct Approaches}
To apply randomized smoothing, as introduced in \cref{sec:smoothing}, to
segmentation, we can reduce it to one or multiple classification problems.

We can recast segmentation $f \colon \mathcal{X}^{N} \to \mathcal{Y}^{N}$ as a
classification problem by considering the cartesian product of the co-domain
$\mathcal{V} := \bigtimes_{i=1}^{N} \mathcal{Y}$ and a new function
$f'\colon \mathcal{X}^{N} \to \mathcal{V}$ that performs classification.
Thus we can apply \textsc{Certify} (\cref{alg:cohen}) to the base classifier
$f'$.
This provides a mathematically sound mapping of segmentation to
classification. However, a change in the classification of a single component $\vx_{i}$ will change
the overall class in $\mathcal{V}$ making it hard to find a majority class
$\hat{c}_{A}$ with high $\underline{p_{A}}$ in practice.
We refer to this method as \naiveCartesian, short for ``joint classification''.

Alternatively, rather than considering all components at the same time, we can
also classify each component individually. To this end, we let $f_{i}(\vx)$
denote the $i$-th component of $f(\vx)$ and apply \textsc{Certify}, in
\cref{alg:cohen}, $N$ times to evaluate $\tilde{f_{i}}(\vx)$ to obtain classes
$\hat{c}_{A, 1}, \dots, \hat{c}_{A, N}$ and radii $R_{1}, \dots, R_{N}$.
Then the overall radius is given as $R = \min_{i} R_{i}$ and a single abstention
will cause an overall abstention. To reduce evaluation cost we can reuse the
same input samples for all components of the output vector, that is sample
$f(\vx)$ rather than individual $f_{i}(\vx)$.
We refer to this method as \naiveMulti.
Further, the result of each call to \textsc{Certify} only holds with probability
$1-\alpha$. Thus, by the union bound, the overall probability is limited by $\prob(\bigvee_{i} i\text{-th
  test incorrect}) \leq \min(\sum_{i} \prob(i\text{-th test
  incorrect}), 1) = \min(N \alpha, 1) $, which for large $N$
quickly becomes problematic. This can be compensated by carrying out calls to \textsc{Certify} with
$\alpha' = \frac{\alpha}{N}$, which becomes prohibitively expensive as $n$ needs
to be increased for the procedure to not abstain.

\paragraph{Key Challenges}
These direct applications of randomized smoothing to segmentation suffer from
multiple problems that can be reduced to two challenges:
\begin{itemize}
  \item \emph{Bad Components}: Both algorithms can be forced to abstain or
        report a small radius by a single bad component $\vx_i$ for which the base
        classifier is unstable.
  \item \emph{Multiple Testing Trade-off}: Any algorithm that, like \naiveMulti, reduces
        the certification segmentation to multiple stochastic tests (such as
        \textsc{Certify}) suffers from the multiple testing problem. As outlined
        before, if each partial result only holds with probability $\alpha$ then the
        overall probability decays, using the union bound, linearly in the number of tests. Thus, to
        remain sound one is forced to choose between scalability or low confidence.
\end{itemize}

\section{Scalable Certified Segmentation} \label{sec:scaleable} %
We now introduce our algorithm for certified segmentation by
addressing these challenges. In particular, we will side-step the bad component issue and allow for a more favorable trade-off in the multiple-testing setting.

To limit the impact of bad components on the overall result, we introduce a
threshold $\tau \in [\frac{1}{2}, 1)$ and define a model
$\bar{f}^{\tau}: \mathcal{X}^{N} \to \hat{\mathcal{Y}}^{N}$, with
$\hat{\mathcal{Y}} = \mathcal{Y} \cup \{\abstain\}$, that abstains  if the
probability of the top class for component $\vx_{i}$ is below $\tau$:
\begin{align*}
  &\bar{f}^{\tau}_{i}(\vx) =
  \begin{cases}
    c_{A, i} & \text{if } \prob_{\epsilon \sim \mathcal{N}(0, \sigma)}(f_{i}(\vx + \epsilon)) > \tau\\
    \abstain & \text{else}
  \end{cases},
\end{align*}
where $c_{A, i} = \argmax_{c \in \mathcal{Y}}
\prob_{\epsilon \sim \mathcal{N}(0, \sigma)}(f_{i}(\vx + \epsilon) = c)$.

This means that on components with fluctuating classes, the model $\bar{f}^{\tau}$ does not need to commit to a class.
For the model $\bar{f}^{\tau}$, we obtain a safety guarantee similar to the original theorem by \citep{CohenRK19}:

\begin{theorem}
  \label{thm:fbartau}
  Let $\mathcal{I}_{\vx} = \{i \mid \bar{f}_{i}^{\tau}(\vx) \neq \abstain, i \in 1, \dots, N \}$ denote the set of non-abstain indices for $\bar{f}^{\tau}(\vx)$. Then,
  \begin{align*}
    &\bar{f}^{\tau}_{i}(\vx + \delta)
    = \bar{f}^{\tau}_{i}(\vx), \quad \forall i \in \mathcal{I}_{\vx}
  \end{align*}
  for $\delta \in \mathbb{R}^{N \times m} $ with $\|\delta\|_{2} \leq R := \sigma \Phi^{-1}(\tau)$.
\end{theorem}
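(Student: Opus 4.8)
The plan is to reduce the statement to a component-wise application of \cref{thm:original}. The key observation is that for each fixed index $i$, the map $f_i \colon \mathbb{R}^{N \times m} \to \mathcal{Y}$ is itself an ordinary classifier on the \emph{full} input space, and the smoothing noise $\epsilon \sim \mathcal{N}(0, \sigma^2 \mathbb{1})$ acts on all $N \times m$ coordinates jointly. Hence \cref{thm:original} applies verbatim to each $f_i$, with the perturbation $\delta \in \mathbb{R}^{N \times m}$ measured in the full $\ell_2$ norm $\|\delta\|_2$; there is no need to isolate a per-component perturbation $\delta_i$.

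First I would fix $i \in \mathcal{I}_{\vx}$. By definition of $\mathcal{I}_{\vx}$ we have $\bar{f}^{\tau}_i(\vx) = c_{A,i}$ with $\prob_{\epsilon}(f_i(\vx + \epsilon) = c_{A,i}) > \tau$. I would then instantiate \cref{thm:original} for $f_i$ with the bounds $\underline{p_A} := \tau$ and $\overline{p_B} := 1 - \tau$. The three hypotheses are checked directly: the top-class probability exceeds $\tau = \underline{p_A}$; the runner-up probability satisfies $\max_{c \neq c_{A,i}} \prob_{\epsilon}(f_i(\vx+\epsilon)=c) \le 1 - \prob_{\epsilon}(f_i(\vx+\epsilon)=c_{A,i}) < 1 - \tau = \overline{p_B}$; and the ordering $\underline{p_A} \ge \overline{p_B}$ holds precisely because $\tau \ge \tfrac{1}{2}$, which is exactly where the lower bound on the threshold is used.

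\cref{thm:original} then yields that $c_{A,i}$ remains the dominating (arg-max) class at $\vx + \delta$ for all $\delta$ with $\|\delta\|_2 \le R_i := \tfrac{\sigma}{2}(\Phi^{-1}(\tau) - \Phi^{-1}(1-\tau))$. Using the symmetry $\Phi^{-1}(1-\tau) = -\Phi^{-1}(\tau)$ of the standard Gaussian quantile, this collapses to $R_i = \sigma \Phi^{-1}(\tau) = R$, independent of $i$. Because the radius is the \emph{same} for every confident component, the single bound $\|\delta\|_2 \le R$ certifies all indices in $\mathcal{I}_{\vx}$ at once, and the claim follows by ranging over $i \in \mathcal{I}_{\vx}$.

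The step I expect to be most delicate is the uniformity/simultaneity argument: it is essential that the per-component radius depends only on the common floor $\tau$ and not on the actual value of $\prob_{\epsilon}(f_i(\vx+\epsilon)=c_{A,i})$, since this is exactly what permits certifying all $N$ components under one norm bound without any union bound over components (in sharp contrast to \naiveMulti). A secondary point worth spelling out carefully is the reading of the conclusion at the perturbed input: \cref{thm:original} guarantees that $c_{A,i}$ stays the arg-max class on $\mathcal{I}_{\vx}$, and one must argue that this arg-max is precisely the value returned by $\bar{f}^{\tau}_i(\vx+\delta)$ on the certified components, so that equality with $\bar{f}^{\tau}_i(\vx)$ holds rather than merely stability of the underlying prediction.
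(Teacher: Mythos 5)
Your proof is correct and takes essentially the same route as the paper's: both treat each component $f_i$ independently as a classifier on the full input space and invoke \cref{thm:original} with $\underline{p_A}=\tau$ and $\overline{p_B}=1-\tau$, where the symmetry $\Phi^{-1}(1-\tau)=-\Phi^{-1}(\tau)$ collapses the radius to $R=\sigma\Phi^{-1}(\tau)$, uniform over $i\in\mathcal{I}_{\vx}$ and hence requiring no union bound since the claim concerns the exact smoothed model rather than its sampled approximation. The one subtlety you flag at the end --- that \cref{thm:original} only guarantees $c_{A,i}$ remains the arg-max at $\vx+\delta$, whereas $\bar{f}^{\tau}_i(\vx+\delta)$ could in principle return \abstain{} if the perturbed top-class probability lands in $(\tfrac{1}{2},\tau]$ --- is likewise left implicit in the paper's own two-line proof, which reads the conclusion at the level of the underlying arg-max prediction.
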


\begin{proof}
  We consider $\bar{f}_{1}^{\tau}, \dots, \bar{f}_{N}^{\tau}$ independently.
  With $p_{A, i} := \prob_{\epsilon \sim \mathcal{N}(0, \sigma)}(f_{i}(\vx + \epsilon) = c_{A, i}) > \tau$, we
  invoke \cref{thm:original} with $\underline{p_{A}} = \tau$ for $\bar{f}^{\tau}$ and
  obtain robustness radius $R := \sigma \Phi^{-1}(\tau)$ for $\bar{f}^{\tau}_{i}(\vx)$.
  This holds for all $i$ where the class probability $p_{A, i} > \tau$, denoted by the
  set $\mathcal{I}_{\vx}$.
\end{proof}

\begin{algorithm}[t]
   \caption{algorithm for certification and prediction}
   \label{alg:segcertify}
\begin{algorithmic}
   \STATE \textit{\# evaluate $\bar{f}^{\tau}$ at $\vx$}
   \STATE \textbf{function} \textsc{SegCertify}($f$, $\sigma$, $\vx$, $n$, $n_{0}$, $\tau$, $\alpha$)
   \STATE \quad $\text{\texttt{cnts}}^{0}_{1}, \dots, \text{\texttt{cnts}}^{0}_{N}$ $\leftarrow$ \textsc{Sample}($f$, $\vx$, $n_{0}$, $\sigma$)
   \STATE \quad $\text{\texttt{cnts}}_{1}, \dots,  \text{\texttt{cnts}}_{N}$ $\leftarrow$ \textsc{Sample}($f$, $\vx$, $n$, $\sigma$)
   \STATE \quad\textbf{for} $i \leftarrow \{1, \dots, N\}$:
   \STATE \quad\quad $\hat{c}_{i} \leftarrow$ top index in $\text{\texttt{cnts}}^{0}_{i}$
   \STATE \quad\quad $n_{i} \leftarrow \text{\texttt{cnts}}_{i}[\hat{c}_{i}]$
   \STATE \quad\quad $\pval_{i} \leftarrow $\textsc{BinPValue}($n_i$, $n$, $\leq$, $\tau$)
   \STATE \quad $r_{1}, \dots, r_{N} \leftarrow $ \textsc{FwerControl}($\alpha$, $\pval_{1}$, $\dots$, $\pval_{N}$)
   \STATE \quad\textbf{for} $i \leftarrow \{1, \dots, N\}$:
   \STATE \quad\quad \textbf{if} $\lnot r_{i}$: $\hat{c}_{i} \leftarrow \abstain$
   \STATE \quad $R \leftarrow \sigma \Phi^{-1}(\tau)$
   \STATE \quad \textbf{return} $\hat{c}_{1}, \dots, \hat{c}_{N}$, $R$
\end{algorithmic}
\end{algorithm}

\paragraph{Certification}
Similarly to \citet{CohenRK19}, we cannot invoke our theoretically constructed
model $\bar{f}^{\tau}$ and must approximate it. The simplest way to do so would be
to invoke \textsc{Certify} for each component and replace the check
$\underline{p_{A}} > \frac{1}{2}$ by $\underline{p_{A}} > \tau$. However, while
this accounts for the bad component issue, it still suffers from the
outlined multiple testing problem.
To address this issue, we now introduce
the \textsc{SegCertify} procedure in \cref{alg:segcertify}.

We will now briefly outline the steps in \cref{alg:segcertify} before describing
them in further detail and arguing about its correctness and properties.
\cref{alg:segcertify} relies on the same primitives as  \cref{alg:cohen},  as well as the \textsc{FwerControl} function, which performs multiple-testing correction, which we will formally introduce shortly.
As before, \textsc{Sample} denotes the evaluation of samples $f(\vx + \epsilon)$
and $\texttt{cnts}_{i}$ denotes the class frequencies observed for the $i$-th
component. Similar to \textsc{Certify}, we use two sets of samples \texttt{cnts}
and $\texttt{cnts}^{0}$ to avoid a model selection bias (and thus invalidate our
statistical test): We use $n_{0}$ samples, denoted $\texttt{cnts}^{0}_{i}$ to
guess the majority class $c_{A, i}$ for the $i$-th component, denoted
$\hat{c}_{i}$ and then determine its appearance count $n_{i}$ out of $n$ trails
from $\texttt{cnts}_{i}$. With these counts, we then perform a one-sided
binomial test, discussed shortly, to obtain its
$p$-value $\pval_{i}$. Given these $p$-values we can employ
\textsc{FwerControl},  which determines which tests we
can reject in order to obtain overall confidence $1-\alpha$. The rejection of the
$i$-th test is denoted by the boolean variable $r_{i}$. If we reject the $i$-th
test $(r_{i} = 1)$, we assume its alternate hypothesis
$p_{A,i} > \tau$ and return the class $c_{A, i}$, else we return \abstain.

To establish the correctness of the approach and improve on the multiple testing
trade-off, we will now briefly review statistical (multiple) hypotheses
testing.

\paragraph{Hypothesis Testing}
Here we consider a fixed but arbitrary single component $i$: If we guessed the majority class $c_{A, i}$ for the $i$-th component to be
$\hat{c}_{i}$, we assume that $f_{i}(\vx + \epsilon)$ returns $\hat{c}_{i}$ with
probability $p_{\hat{c}_{i}, i}$ (denoted $p$ in the following). We now want to
infer wether $p > \tau$ or not, to determine whether $\bar{f}^{\tau}$ will output
$\hat{c}_{i}$ or \abstain. We phrase this as a statistical test with null
hypothesis $(H_{0, i})$ and alternative $(H_{A, i})$:
\begin{align*}
 (H_{0, i}): \;p \leq \tau \qquad\qquad
 (H_{A, i}): \; p > \tau
\end{align*}

A statistical test lets us assume a null hypothesis $(H_{0, i})$ and check how
plausible it is given our observational data.
We can calculate the $p$-value $\pval$, which denotes the probability of the observed
data or an even more extreme event under the null hypothesis. Thus, in our case
rejecting $(H_{0, i})$ means returning $\hat{c}_{i}$ while accepting it means returning \abstain.

Rejecting the null hypothesis when it is actually true (in our case returning a
class if we should abstain) is called a type I error. The opposite, not
rejecting the null hypothesis when it is false is called a type II error.
In our setting type II errors mean additional abstentions on top of those $\bar{f}^{\tau}$ makes by design.
Making a sound statement about robustness means controlling the
type I error while reducing type II errors means fewer abstentions due to the testing procedure.

Commonly, a test is rejected if its $p$-value is below some threshold
$\alpha$, as in the penultimate line of \textsc{Predict} (\cref{alg:cohen}). This bounds the
probability of type I error at probability (or ``level'') $\alpha$.

\paragraph{Multiple Hypothesis Testing}
Multiple tests performed at once require additional care. Usually, if we
perform $N$ tests, we want to bound the probability of any type I error
occurring. This probability is called the \emph{family wise error rate} (FWER). The goal of FWER control is, given a set of $N$ tests and $\alpha$, to reject tests such that the FWER is limited at $\alpha$.
In \cref{alg:segcertify} this is denoted as \textsc{FwerControl}, which is a
procedure that takes the value $\alpha$ and $p$-values
$\pval_{1}, \dots, \pval_{N}$ and decides on rejections $r_{1}, \dots, r_{N}$.

The simplest procedure for this is the Bonferroni method
\citep{bonferroni1936teoria}, which rejects individual tests with $p$-value
$\pval_{i} \leq \frac{\alpha}{N}$. It is based on the same consideration we applied for the
failure probability of \naiveMulti in \cref{sec:segmentation}.
While this controls the FWER at level $\alpha$ it also increases the type II error,
and thus would reduce the number of correctly classified components. To maintain
the same number of correctly classified components we would need to increase $n$
or decrease $\alpha$.

A better alternative to the Bonferroni method is the Holm correction (or
Holm-Bonferroni method) \citep{holm1979simple} which orders the tests by
acceding $p$-value and steps through them at levels
$\frac{\alpha}{N}, \dots, \frac{\alpha}{1}$ (rejecting null if the associated p value is smaller then the level) until the first level where no additional test
can be rejected. This method also controls the FWER at level $\alpha$ but allows for
a lower rate of type II errors.

Further improving upon these methods, usually requires additional information.
This can either be specialization to a certain kind of test
\citep{tukey1949comparing}, additional knowledge (e.g., no negative
dependencies between tests, for procedures such as \citep{vsidak1967rectangular}) or an
estimate of the dependence structure computed via Bootsrap or permutation
methods \citep{westfall1993resampling}.

While our formulation of \textsc{SegCertify} admits all of these corrections, Bootstrap or permutaion procedures are generally infeasible for large $N$.
Thus, in the rest of the paper we consider both Holm and Bonferroni correction and compare them in \cref{sec:synthetic}.

We note that while methods \citep{westfall1985simultaneous} exist for assessing
that $N$ Clopper-Pearson confidence intervals hold jointly at level $\alpha$, thus
allowing better correction for \naiveMulti, these methods do not scale to
realistic segmentation problems as they require re-sampling operations that are
expensive on large problems.

\subsection{Properties of \textsc{SegCertify}}
\label{sec:properties}

\textsc{SegCertify} is a conservative algorithm, as it will rather abstain from
classifying a component than returning a wrong or non-robust result. We
formalize this as well as a statement about the correctness of
\textsc{SegCertify} in \cref{prob:correctness}.

\begin{proposition}
  \label{prob:correctness}
  Let $\hat{c}_{1}, \dots, \hat{c}_{N}$ be the output of \textsc{SegCertify} for input $\vx$
  and $\hat{\mathcal{I}}_{\vx} := \{i \mid \hat{c}_{i} \neq \abstain \}$. Then with probability at least $1-\alpha$ over the randomness in \textsc{SegCertify} $\hat{\mathcal{I}}_{\vx} \subseteq \mathcal{I}_{\vx}$, where $\mathcal{I}_{\vx}$ denotes the previously defined non-abstain indices of $\bar{f}^{\tau}(\vx)$. Therefore $\hat{\vc}_{i} = \bar{f}_{i}^{\tau}(\vx) = \bar{f}_{i}^{\tau}(\vx + \delta)$ for $i \in \hat{\mathcal{I}}_{\vx}$ and $\delta \in \mathbb{R}^{N \times m}$ with $\|\delta\|_{2} \leq R$.
\end{proposition}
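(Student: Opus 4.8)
The plan is to reduce the inclusion claim $\hat{\mathcal{I}}_{\vx}\subseteq\mathcal{I}_{\vx}$ to the family‑wise error control of \textsc{FwerControl}. The key observation is that the indices in $\hat{\mathcal{I}}_{\vx}$ are exactly those components $i$ for which the $i$-th null hypothesis $(H_{0,i})\colon p_{\hat{c}_i,i}\le\tau$ was rejected, i.e.\ $r_i=1$. An index $i$ lying in $\hat{\mathcal{I}}_{\vx}$ but not in $\mathcal{I}_{\vx}$ is therefore precisely a component on which we committed a type~I error: we returned the class $\hat{c}_i$ although the true success probability satisfies $p_{\hat{c}_i,i}\le\tau$. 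Hence the event $\{\hat{\mathcal{I}}_{\vx}\not\subseteq\mathcal{I}_{\vx}\}$ is contained in the event that at least one type~I error occurs among the $N$ tests.

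First I would verify that each $\pval_i$ is a valid $p$-value for $(H_{0,i})$. Since $\hat{c}_i$ is determined from the independent sample block $\texttt{cnts}^{0}$, conditioning on the guesses $\hat{c}_1,\dots,\hat{c}_N$ makes the count $n_i$ distributed as $\mathrm{Binomial}(n,p_{\hat{c}_i,i})$, so the one‑sided binomial $p$-value stochastically dominates the uniform under the null. This independence between the two sample blocks is exactly what rules out selection bias and is the main point to get right. Conditionally on the guesses, \textsc{FwerControl} (Bonferroni or Holm) then guarantees $\prob(\text{at least one type~I error}\mid\hat{c}_1,\dots,\hat{c}_N)\le\alpha$ irrespective of which nulls are true; integrating over the distribution of the guesses gives the same marginal bound. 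Combined with the previous paragraph this yields $\prob(\hat{\mathcal{I}}_{\vx}\subseteq\mathcal{I}_{\vx})\ge 1-\alpha$.

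Next I would work on the complementary good event, on which no type~I error occurred. For every $i\in\hat{\mathcal{I}}_{\vx}$ the rejection is then correct, so $p_{\hat{c}_i,i}>\tau$. Because $\tau\ge\tfrac12$, at most one class can carry probability exceeding $\tfrac12$, so $\hat{c}_i$ must coincide with the true majority class $c_{A,i}$ and $p_{A,i}=p_{\hat{c}_i,i}>\tau$. By the definition of $\bar{f}^{\tau}$ this gives both $i\in\mathcal{I}_{\vx}$ and $\bar{f}_i^{\tau}(\vx)=c_{A,i}=\hat{c}_i$, establishing the inclusion together with the identity $\hat{c}_i=\bar{f}_i^{\tau}(\vx)$ for all non‑abstaining indices.

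Finally I would invoke \cref{thm:fbartau}: since $\hat{\mathcal{I}}_{\vx}\subseteq\mathcal{I}_{\vx}$, every $i\in\hat{\mathcal{I}}_{\vx}$ satisfies $\bar{f}_i^{\tau}(\vx+\delta)=\bar{f}_i^{\tau}(\vx)$ for all $\delta$ with $\|\delta\|_2\le R=\sigma\Phi^{-1}(\tau)$. Chaining this with $\hat{c}_i=\bar{f}_i^{\tau}(\vx)$ gives $\hat{c}_i=\bar{f}_i^{\tau}(\vx)=\bar{f}_i^{\tau}(\vx+\delta)$, the asserted conclusion, and everything holds on the event of probability at least $1-\alpha$. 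I expect the only genuinely delicate step to be the conditional $p$-value argument of the second paragraph—making precise that the two independent sample blocks keep the binomial test valid, so that FWER control can legitimately be layered on top of it.
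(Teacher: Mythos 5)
Your proof is correct and takes essentially the same route as the paper's: identify every $i \in \hat{\mathcal{I}}_{\vx}\setminus\mathcal{I}_{\vx}$ as a type~I error of the corresponding binomial test, invoke the FWER guarantee of \textsc{FwerControl} to bound the probability of any such error by $\alpha$, and conclude via \cref{thm:fbartau}. The extra details you supply---the conditional validity of the $p$-values given the independent $\texttt{cnts}^{0}$ block, and the $\tau\ge\tfrac{1}{2}$ uniqueness argument forcing $\hat{c}_i=c_{A,i}$ on the good event---merely make explicit steps that the paper's shorter proof leaves implicit.
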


\begin{proof}
  Suppose that $i \in \hat{\mathcal{I}}_{\vx} \setminus \mathcal{I}_{\vx}$.
  This $i$ then represents a type I error. However, the overall probability of any such type I error is controlled at level $\alpha$ by the \textsc{FwerControl} step.
  Thus with probability at least $1 - \alpha$, $\hat{\mathcal{I}}_{\vx} \subseteq \mathcal{I}_{\vx}$. The rest follows from \cref{thm:fbartau}.
\end{proof}

We note that there are fundamentally two reasons for
\textsc{SegCertify} to abstain from the
classification of a component: either because the true class probability
$p_{A, i}$ is less than or equal to $\tau$ in which case $\bar{f}^{\tau}_{i}$ abstains
by definition or because of a type II error in \textsc{SegCertify}. This type II error can occur either if we guess the wrong $\hat{c}_{i}$ from our $n_{0}$ samples or because we could not gather sufficient evidence to reject $(H_{0,i})$ under $\alpha$-FWER.
\cref{prob:correctness} only makes a statement about type I errors, but not type
II errors, e.g., $\hat{\mathcal{I}}$ could always be the empty set. In general,
the control of type II errors (called ``power'') of a testing procedure can be
hard to quantify without a precise understanding of the underlying procedure (in our case the model $f$).
However, in \cref{sec:synthetic} we will investigate this experimentally.

In contrast to \citet{CohenRK19} we do not provide separate procedures for
prediction and certification, as for a fixed $\tau$ both only need to determine
$p_{A, i} > \tau$ for all $i$ with high probability (w.h.p.).
For faster prediction, the algorithm can be invoked
with larger $\alpha$ and smaller $n$.

For $N=1$, the algorithm exhibits the same time complexity as those proposed by
\citet{CohenRK19}. For larger $N$, time complexity stays the same (plus the time for sorting the p-values in Holm procedure), however due to
type II errors the number of abstentions increases. Counteracting this leads
to the previously discussed multiple-testing trade-off.

\paragraph{Summary}
Having presented \textsc{SegCertify} we now summarize how it addresses the two key challenges. First, the bad component issue is offset by the introduction of a threshold parameter $\tau$. This allows to exclude single components, which would not be provable, from the overall certificate.
Second, due to this threshold we now aim to show a fixed lower bound $\underline{p_{A}} = \tau$, where we can use a binomial p-value test rather than the Clopper-Pearson interval for a fixed confidence. The key difference here is that the binomial test produces a p-value that is small unless the observed rate $\tfrac{n_{i}}{n}$ is close to $\tau$, while the Clopper-Pearson interval aims to show the maximal $\underline{p_{A}}$ for a fixed confidence. This former setting generally lends itself better to the multiple testing setting. In particular for correction algorithms such as Holm this lets small p-values offset larger ones.

\subsection{Generality}
\label{sec:extensions}

While presented in the $\ell_{2}$-robustness setting as in \citet{CohenRK19},
\textsc{SegCertify} is not tied to this and is compatible with many
extensions and variations of \cref{thm:original} in a plug-and-play manner. For example,
\citet{LiSampling,LecuyerAG0J19,YangDHSR020} showed robustness to $\ell_{1}$
(and other $\ell_{p}$) perturbations. \citet{FischerBV20} introduced an extension
that allows computing a robustness radius over the parameter of a parametric transformation.
In both of these, it suffices for practical purposes to update
\textsc{Sample} to produce perturbations from a suitable distribution (e.g.,
rotation or Laplace noise) as well as update the calculation of the radius $R$
to reflect the different underlying theorems. Similarly, in $\bar{f}^{\tau}$ and
\cref{thm:fbartau}, it suffices to update the class probability according to
the sample distribution and radius prescribed by the relevant theorem.

\begin{figure*}[ht!]
  \centering
  \newcommand{\subfigwidth}{0.3\textwidth}
    \begin{subfigure}[b]{\subfigwidth}
      \includegraphics[width=\textwidth]{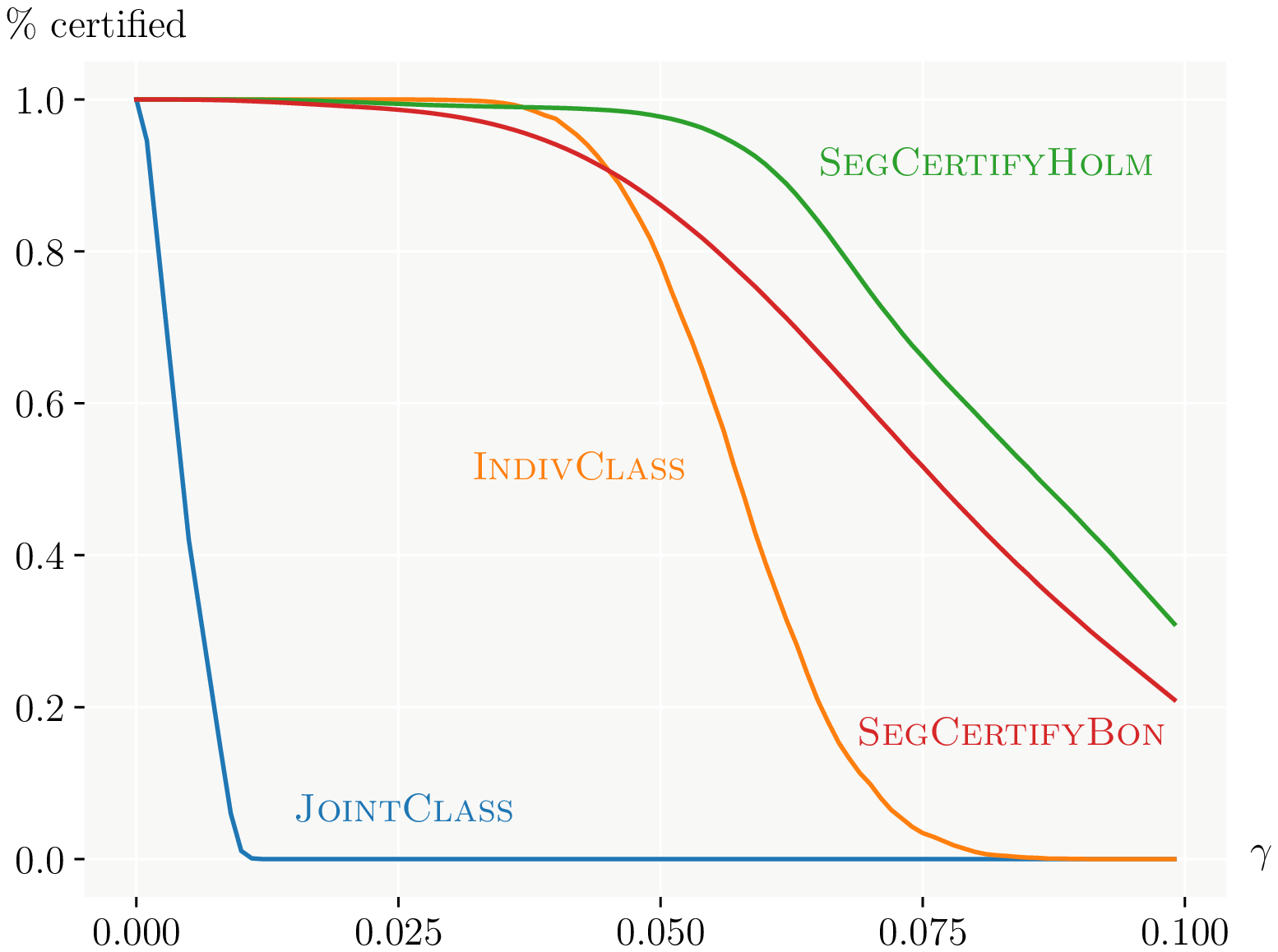}
      \caption{On $N=100$ components, with a classifier that has error rate $5\gamma$ on one component and $\gamma$ on all others.}
        \label{fig:plot_abstain_gamma}
    \end{subfigure}
    \hfill
    \begin{subfigure}[b]{\subfigwidth}
      \includegraphics[width=\textwidth]{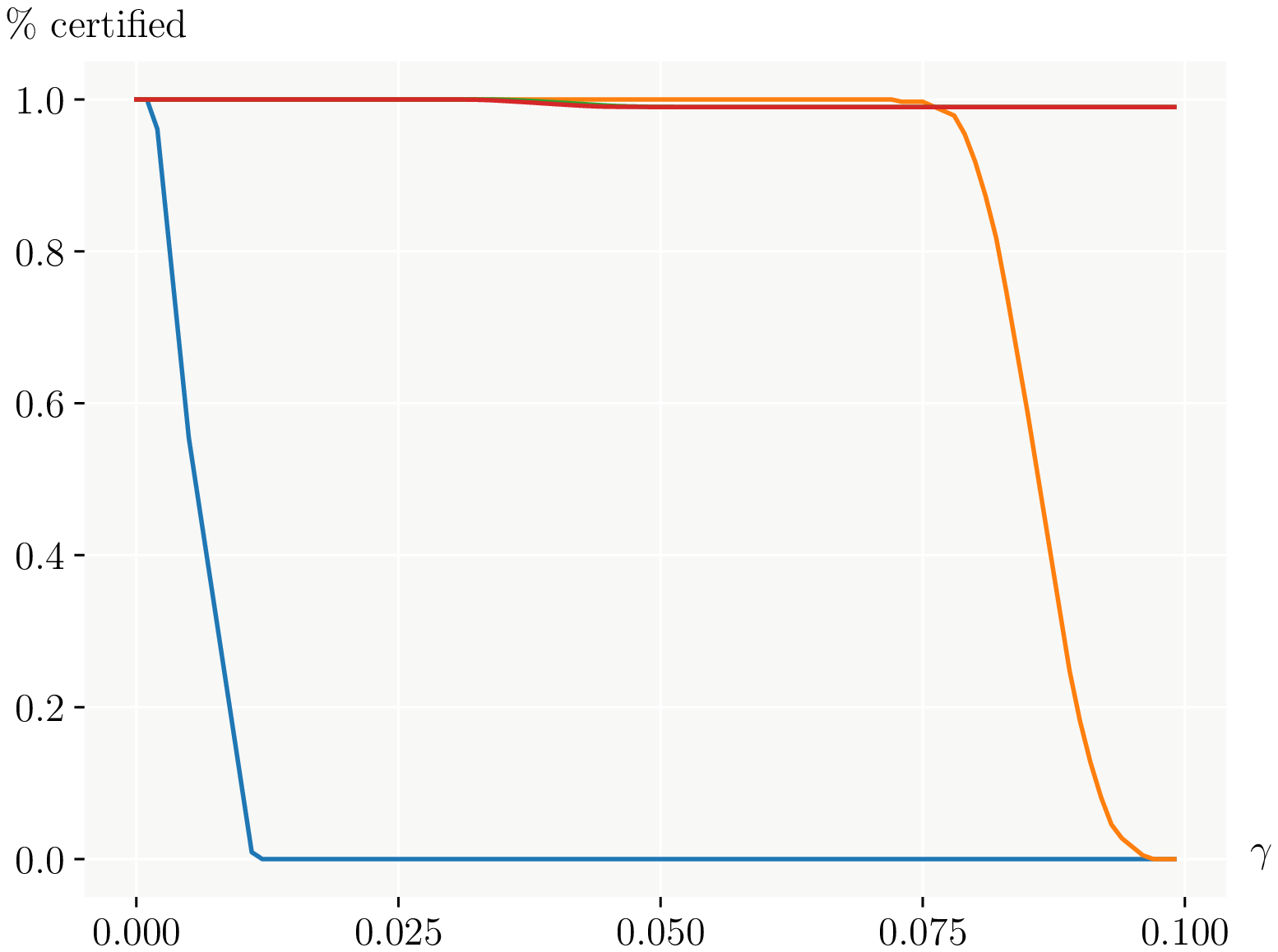}
      \caption{Same setting as \cref{fig:plot_abstain_gamma}, but the algorithms use
        more samples $n_{0}, n$ allowing for less abstentions.}
        \label{fig:plot_abstain_n_gamma}
    \end{subfigure}
    \hfill
    \begin{subfigure}[b]{\subfigwidth}
      \includegraphics[width=\textwidth]{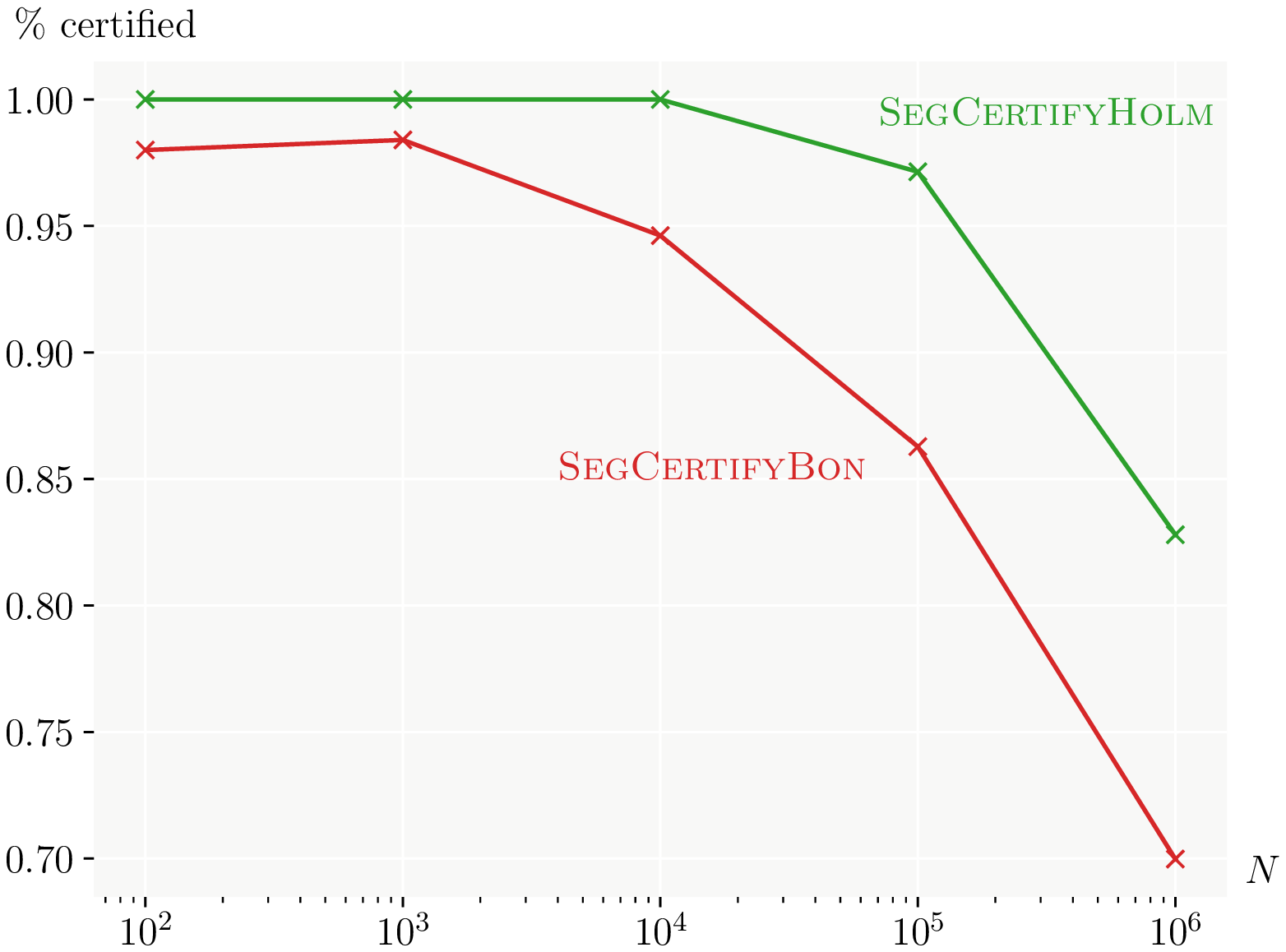}
      \caption{\textsc{SegCertify} with different testing corrections for various numbers of components $N$ with error rate $\gamma = 0.05$.}
        \label{fig:plot_abstain_N}
    \end{subfigure}
    \caption{We empirically investigate the power (ability to avoid type II
      errors -- false abstention) of multiple algorithms on synthetic data. The
      $y$-axis shows the rate of certified (rather than abstained) components. An optimal algorithm would achieve
      $1.0$ or $0.99$ in all plots. }
  \label{fig:plot_abstain}
\end{figure*}

\paragraph{Extensions to $k$-FWER}
When we discussed FWER control so far, we bounded the probability of making
any type I error by $\alpha$. However, in the segmentation setting we have many
individual components, and depending on our objective, we may allow a small
budget of few errors. In this setting, we can employ $k$-FWER control
\citep{lehmann2012generalizations}. It controls the probability
of observing $k$ or more type I errors at probability $\alpha$. Thus with probability
$1-\alpha$ we have observed at most $k-1$ type I errors (false non-abstentions).
\citet{lehmann2012generalizations} introduces a procedure similar to Holm
method that allows for $k$-FWER that is optimal (without the use of further
information on test dependence) and can be simply plugged into
\textsc{SegCertify}. We provide further explanation and evaluation in
\cref{sec:app-kfwer}.

\section{Experimental Evaluation}
\label{sec:eval}

We evaluate our approach in three different settings: (i) investigating the two key challenges on toy data in \cref{sec:synthetic}, (ii)
semantic image segmentation in \cref{sec:eval:semantic}, and (iii) point clouds
part segmentation in \cref{sec:pointcl-part-segm}. All timings are for a single
Nvidia GeForce RTX 2080 Ti. For further details, see \cref{sec:app-experiments}.

\subsection{Toy Data} \label{sec:synthetic}

We now investigate how well \textsc{SegCertify} and the two naive algorithms
handle the identified challenges.

Here we consider a simple setting, where our input is of length $N$
and each component can be from two possible classes. Further, our base model is
an oracle (that does not need to look at the data) and is correct for each of $N-k$
component with probability $1-\gamma$ for some $\gamma \in [0, 1]$ and $k \in {0, \dots, N}$. On the remaining $k$ components it is correct with probability $1-5\gamma$ (clamped to $[0,1]$).

We evaluate \naiveCartesian, \naiveMulti with Bonferroni correction and
\textsc{SegCertify} with Holm, denoted \textsc{SegCertifyHolm}, as well as
Bonferroni correction, denoted \textsc{SegCertifyBon}.
Note that \textsc{SegCertifyBon} in contrast to \naiveMulti employs thresholding.
We investigate the performance of all methods under varying levels of noise by
observing the rate of certified components. As the naive algorithms either
provide a certificate for all pixels or none, we have repeated the experiment
600 times, showing a smoothed average in \cref{fig:plot_abstain_gamma} (the
unsmoothed version can be found in \cref{sec:app-experiments}).
In this setting, we assume $N=100$ components, $k=1$ and $\gamma$ varies between $0$
and $0.1$ along the $x$-axis. All algorithms use $n_{0} = n = 100$ and
$\alpha = 0.001$. \textsc{SegCertify} uses $\tau = 0.75$. This means that for $\gamma \leq 0.05$,
all abstentions are due to the testing procedure (type II errors) and not
prescribed by $\tau$ in the definition of $\bar{f}^{\tau}$. For $0.05 < \gamma \leq 0.1$, there
is one prescribed abstention as $k=1$. Thus the results provide an empirical assessment of the statistical power of the algorithm.

As outlined \cref{sec:segmentation}, \naiveCartesian and \naiveMulti are very
susceptible to a single bad component. While \naiveCartesian almost fails
immediately for $\gamma > 0$, \naiveMulti works until $\gamma$ becomes large enough so
that the estimated confidence interval of $1-5\gamma$ cannot be guaranteed to be
larger than 0.5. Both variants of \textsc{SegCertify} also deteriorate
with increasing noise. The main culprit for this is guessing a wrong class in
the $n_{0}$ samples. We showcase this in \cref{fig:plot_abstain_n_gamma}, where
we use the same setting as in \cref{fig:plot_abstain_gamma} but with
$n_{0} = n = 1000$. Here, both \textsc{SegCertify} variants have the
expected performance (close to $1$) even with increasing noise. For \naiveMulti,
these additional samples also delay complete failure, but
do not prevent it. The main takeaway from
\cref{fig:plot_abstain_n_gamma,fig:plot_abstain_gamma} is that
\textsc{SegCertify} overcomes the bad component issue and the number of
samples $n, n_{0}$ is a key driver for its statistical power.

\begin{table*}[ht]
  \centering
  \renewcommand{\arraystretch}{1.2}
  \caption{\footnotesize Segmentation results for 100 images. acc. shows the mean per-pixel accuracy,
    mIoU the mean intersection over union, \%\abstain
    abstentions and $t$ runtime in seconds. All \textsc{SegCertify}
    ($n_{0} = 10, \alpha = 0.001$) results are certifiably robust at radius $R$ w.h.p. multiscale uses $0.5, 0.75, 1.0, 1.25, 1.5, 1.75$ as well as their flipped variants for Cityscapes and additionally $2.0$ for Pascal.
  } \label{tab:semanticsegmentation} \footnotesize
\begin{tabular}{@{}ll@{\hskip 1em}rr@{\hskip 4em}rrrrr@{\hskip 4em}rrrr@{}}
\toprule
  & & & &
  \multicolumn{4}{c}{Cityscapes}  &&
  \multicolumn{4}{c}{Pascal Context}\\
  \cmidrule{5-8}
  \cmidrule{10-13}
  scale & &  $\sigma$ & $R$
  &  acc. &  mIoU & \%\abstain & $t$ &
  &  acc. &  mIoU & \%\abstain & $t$ \\
\midrule
 $0.25$ &
  non-robust model & - & - & 0.93 & 0.60 & 0.00 & 0.38 &
                     & 0.59 &  0.24 & 0.00 &  0.12 \\[0.2em]
  & base model & - & - & 0.87 &  0.42 & 0.00 & 0.37 &
                     & 0.33 & 0.08 & 0.00 &  0.13 \\[0.2em]
  &\multirow{3}{*}{\shortstack[l]{\textsc{SegCertify}\\$n=100,\tau=0.75$}}
                        &  0.25 & 0.17 &  0.84 & 0.43  & 0.07 & 70.00 &&  0.33 & 0.08  &  0.13  &   14.16 \\
    &                    &  0.33 & 0.22 & 0.84  & 0.44  & 0.09 & 70.21 &&  0.34 & 0.09  & 0.17 &   14.20\\
    &                    &  0.50 & 0.34 & 0.82 & 0.43  & 0.13 & 71.45 &&  0.23 &  0.05 &  0.27  &   14.23\\[0.2em]

  &\multirow{3}{*}{\shortstack[l]{\textsc{SegCertify}\\$n=500,\tau=0.95$}}
  & 0.25 & 0.41 & 0.83 & 0.42 & 0.11 & 229.37 && 0.29 &  0.01 & 0.30  &   33.64\\
  &                    & 0.33 & 0.52 & 0.83  & 0.42  & 0.12  & 230.69 && 0.26 &  0.01 &   0.39 &   33.79 \\
  &                   & 0.50 & 0.82 & 0.77  & 0.38  & 0.20 & 230.09 && 0.10 & 0.00 &   0.61 & 33.44\\
\hline \\[-0.6em]
 $0.5$ &
  non-robust model & - & - & 0.96  & 0.76 & 0.00 & 0.39 &
                     & 0.74 & 0.38 & 0.00 &  0.16 \\[0.2em]
  & base model & - & - & 0.89 &  0.51 & 0.00 & 0.39 && 0.47 & 0.13 &   0.00 &    0.14  \\[0.2em]
  &\multirow{3}{*}{\shortstack[l]{\textsc{SegCertify}\\$n=100,\tau=0.75$}}
  & 0.25 & 0.17 & 0.88  & 0.54  & 0.06 & 75.59 &&  0.48  & 0.16 & 0.09 &   16.29 \\
  &                       & 0.33 & 0.22 & 0.87  & 0.54  & 0.08  & 75.99 && 0.50  & 0.17 & 0.11  &   16.08 \\
  &                       & 0.50 & 0.34 & 0.86  & 0.54  & 0.10 & 75.72 && 0.36  & 0.10  & 0.21   &   16.14 \\
  \hline \\[-0.6em]
 $1.0$ &
  non-robust model & - & - & 0.97 & 0.81 & 0.00 & 0.52 &
                     & 0.77 &  0.42 & 0.00 &  0.18 \\[0.2em]
  & base model & - & - & 0.91  & 0.57 & 0.00 & 0.52  && 0.53 &  0.18 & 0.00 & 0.18\\[0.2em]
  &\multirow{3}{*}{\shortstack[l]{\textsc{SegCertify}\\$n=100,\tau=0.75$}}
  & 0.25 & 0.17 & 0.88 & 0.59 & 0.11  & 92.75 && 0.55  & 0.22  &  0.22 &   18.53\\
  &                      & 0.33 & 0.22 & 0.78 & 0.43 & 0.20  & 92.85 && 0.46  & 0.18  &  0.34  &   18.57\\
     &                     & 0.50 & 0.34 & 0.34  & 0.06  & 0.40 & 92.48 && 0.17  & 0.03  &  0.41  &   18.46 \\
  \hline \\[-0.6em]
 multi &
  non-robust model & - & - & 0.97 & 0.82 & 0.00 & 8.98 &
                     & 0.78 & 0.45 & 0.00 & 4.21 \\[0.2em]
  & base model & - & -  & 0.92 & 0.60  & 0.00 & 9.04  && 0.56 &  0.19 &   0.00 & 4.22\\[0.4em]
  &\multirow{1}{*}{\shortstack[l]{\textsc{SegCertify}\\$n=100,\tau=0.75$}}
  &   0.25 & 0.17 & 0.88 & 0.57  & 0.09  & 1040.55 && 0.52  & 0.21  & 0.29  &  355.00 \\[0.8em]
\bottomrule
\end{tabular}
\end{table*}

Lastly, in \cref{fig:plot_abstain_N}
we compare \textsc{SegCertifyBon} and \textsc{SegCertifyHolm}. Here, $\gamma = 0.05$, $k=0$,
$n = n_{0} = 1000$, $\alpha = 0.1$ and $N$ increases (exponentially) along the
$x$-axis. We see both versions deteriorate in performance as $N$ grows. As
before, this is to be expected as out of $N$ components some of the initial guesses
based on $n_{0}$ samples may be wrong or $n$ may be too small to correctly
reject the test. However, we note that the certification gap between
\textsc{SegCertifyBon} and \textsc{SegCertifyHolm} grows as $N$ increases due to their difference in statistical power.

We conclude that \textsc{SegCertify} addresses both challenges, solving the bad
component issue and achieving better trade-offs in multiple testing.
Particularly in the limit the benefit of Holm correction over Bonferrroni
becomes apparent.

\subsection{Semantic Image Segmentation} \label{sec:eval:semantic}%
Next, we evaluate \textsc{SegCertify} on the task of
semantic image segmentation considering two datasets, Cityscapes, and Pascal
Context.
The Cityscapes dataset \citep{Cordts2016Cityscapes} contains high-resolution ($1024 \times 2048$ pixel)
images of traffic scenes annotated in 30 classes, 19 of which are
used for evaluation. An example of this can be seen in \cref{fig:intro} (the hood
of the car is in an unused class).
The Pascal Context dataset \citep{mottaghi_cvpr14} contains images  with 59 foreground and 1 background classes, which before segmentation are resized to $480 \times 480$. There are two common evaluation schemes,
either using all 60 classes or just the 59 foreground classes. Here we use the
later setting.

As a base model in this settings, we use a HrNetV2
\citep{SunXLW19,WangSCJDZLMTWLX19}. Like many segmentation models, it can be
invoked on different scales. For example, to achieve faster inference we scale
an image down half its length and width, invoke the segmentation model, and
scale up the result. Or, to achieve a more robust or more accurate segmentation
we can invoke segmentation on multiple scales, scale the the output
probabilities to the original input size, average them, and then predict the
class for each pixel. Here we investigate how different scales allow different
trade-offs for provably robust segmentation via \textsc{SegCertify}. We note
that at the end, we always perform certification on the $1024 \times 2048$-scale
result.
We trained our base models with Gaussian Noise ($\sigma = 0.25$),  as in \citet{CohenRK19}. Details about the training procedure and
further results can be found in \cref{sec:details-semantic,sec:additional-results-semantic}.

Evaluation results for 100 images on both datasets are given in
\cref{tab:semanticsegmentation}. We observe that the certified model has accuracy close to that of the
base model, even outperforming it sometimes, even though it is abstaining for a number of pixels.
This means that abstentions generally happens in wrongly predicted areas or on ignored classes.
Similar to \citet{CohenRK19}, we observe best performance
on the noise level we trained the model on ($\sigma = 0.25$) and degrading
performance for higher values. Interestingly, by comparing the results for
$\sigma = 0.5$ at scale 1.0 and smaller scales, we observe that the resizing for
scales $< 1.0$ acts as a natural denoising step, allowing better performance
with more noise.
Further, in \cref{fig:semseg} we show how the certified accuracy degrates with different values of $\tau$.
We use scale $0.25$ and $n=300$ and the same parameters as in \cref{tab:semanticsegmentation} otherwise.
Up to $\tau=0.92$ we observe very gradual drop-off with increasing $\tau$. Then observe a sudden drop-off as $n$ is becomes insufficient to proof any component (even those that are always correct).
All values use Holm correction. We contrast them with Bonferroni correction in \cref{sec:additional-results-semantic}.

\begin{figure}[ht!]
  \centering
  \includegraphics[width=0.8\columnwidth]{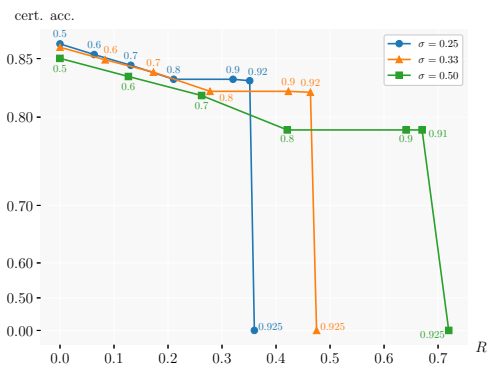}
  \caption{Radius versus certified mean per-pixel accuracy for semantic segmentation on Cityscapes at scale 0.25. Numbers next to dots show $\tau$. The $y$-axis is scaled to the fourth power for clarity.}
  \label{fig:semseg}
\end{figure}

Lastly, we observe that our algorithm comes with a large increase in run-time as
is common with randomized smoothing. However, the effect here is amplified as
semantic segmentation commonly considers large models and large images. On
average, $30$s (for $n=100$) are spent on computing the $p$-values, $0.1$s on
computing the Holm correction and the test of the reported time on sampling. We
did not optimize our implementation for speed, other than choosing the largest
possible batch size for each scale, and we believe that with further engineering
run time can be reduced.

\subsection{Pointcloud Part Segmentation}
\label{sec:pointcl-part-segm}

\begin{table}[t]
  \centering
  \footnotesize
  \caption{Results on $100$ point clouds part segmentations. acc. shows the mean
    per-point accuracy, \%\abstain abstentions and $t$ runtime in seconds. all
    \textsc{segcertify} ($n_{0} = 100, \alpha = 0.001$) results are certifiably
    robust w.h.p.} \label{tab:pointcloud}
\begin{tabular}{@{}lrrrrrr@{}}
\toprule
 &     $n$ &   $\tau$ & $\sigma$ &   acc &  \%\abstain &     $t$ \\
\midrule
  $f_{\sigma=0.25}$
  & - & - & - & 0.81 &  0.00  &   0.57 \\
 & 1000 &  0.75 & 0.250 &  0.62 & 0.32 &   54.47 \\
 & 1000 &  0.85 & 0.250 &  0.52 & 0.44 &   54.41 \\
 &10000 &  0.95 & 0.250 &  0.41 & 0.56 &   496.57 \\
 & 10000 &  0.99 & 0.250&  0.20 & 0.78&  496.88\\
  \midrule
  $f_{\sigma=0.25}^{n}$
  & - & - & - & 0.86 &  0.00  &   0.57 \\

 & 1000 &  0.75 & 0.250 &  0.78 & 0.16 &   54.46 \\
 & 1000 &  0.85 & 0.250 &  0.71 & 0.25 &   54.41 \\
 &10000 &  0.95 & 0.250 &  0.62 & 0.35 &   496.65 \\
 & 10000 &  0.99 & 0.250&  0.39 &   0.60 &  496.68 \\
  \midrule
  $f_{\sigma=0.5}$
   & - & - & - & 0.70 &   0.00    &   0.55 \\
 & 1000 &  0.75 & 0.250 &  0.57 &   0.23 &   54.40 \\
 & 1000 &  0.75 & 0.500 &  0.47 &   0.44 &   54.47 \\
 & 1000 &  0.85 & 0.500 &  0.41 &   0.54 &   54.39 \\
 &10000 &  0.95 & 0.500 &  0.30 &   0.67 &   496.30 \\
 &10000 &  0.99 & 0.500 &  0.11 &   0.87 &  496.47 \\
  \midrule
  $f_{\sigma=0.5}^{n}$
   & - & - & - & 0.78 &   0.00 & 0.56 \\
 & 1000 &  0.75 & 0.250 &  0.74 &   0.10 & 54.78 \\
 & 1000 &  0.75 & 0.500 &  0.67 &   0.21 & 54.58 \\
 & 1000 &  0.85 & 0.500 &  0.61 &   0.30 & 54.44 \\
 &10000 &  0.95 & 0.500 &  0.53 &   0.41 & 497.40 \\
 &10000 &  0.99 & 0.500 &  0.37 &   0.60 & 497.87 \\

\bottomrule
\end{tabular}
\end{table}

\begin{figure}[ht]
  \centering
  \includegraphics[width=0.8\columnwidth]{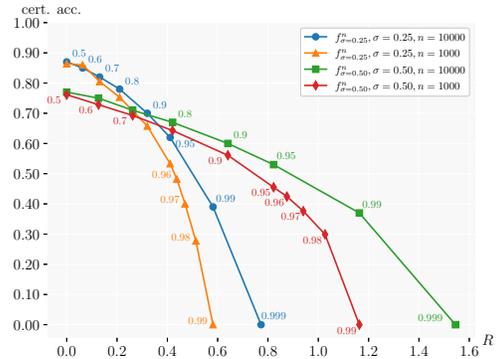}
  \caption{Radius versus certified accuracy at different radii for Pointcloud part segmentation. Numbers next to dots show $\tau$.}
  \label{fig:partseg}
\end{figure}

For point clouds part segmentation we use the ShapeNet
\citep{shapenet2015} dataset, which contains $3d$ CAD models of different
objects from 16 categories, with multiple parts, annotated from 50 labels.
Given the overall label and points sampled from the surface of the object, the
goal is to segment them into their correct parts.
There exist two variations of the described ShapeNet task, one where every point
is a $3d$ coordinate and one where each coordinate also contains its
surface normal vector. Each input consists of 2048 points and before it is
fed to the neural network its coordinates are centered at the origin and scaled
such that the point furthest away from the origin has distance 1.

Using the PointNetV2 architecture \citep{QiSMG17,QiYSG17,yan2020pointasnl}, we
trained four base models: $f_{\sigma=0.25}$, $f_{\sigma=0.25}^{n}$, $f_{\sigma=0.5}^{n}$ and $f_{\sigma=0.5}$ where
$\cdot^{n}$ denotes the inclusion of normal vectors and $\sigma$ the noise used in training.
We apply all smoothing after the rescaling to work on consistent sizes and we
only perturb the location, not the normal vector.
Commonly, the underlying network is invoked 3 times and the results averaged as
the classification algorithm involves randomness. As
\cref{thm:original,thm:fbartau} do not require the underlying $f$ to be
deterministic, we also use the average of $3$ runs for each invocation of the
base model.

We provide the results in \cref{tab:pointcloud,fig:partseg}. We note that on the
same data non-robust model achieves 0.91 and 0.90 with and without normal vectors
respectively. While certification results here are good compared
with the base model, this ratio is worse than in image segmentation and more
samples ($n$) are needed. This is because here -- in contrast to image
segmentation -- a perturbed point can have different semantics if it is moved to
a different part of the object, causing label fluctuation. Further, as
\cref{fig:partseg}, shows we see a gradual decline in accuracy when increasing
$\tau$ rather than the sudden drop in \cref{fig:semseg}. As before, all values
are computed using Holm correction and we provide results for Bonferroni in
\cref{sec:additional-results-pointcl-part-sgm}.

Using the same base models, we apply the randomized smoothing variant by
\citet{FischerBV20} and achieve an accuracy of $0.69$ while showing robustness
to 3d rotations. \cref{sec:app-beyond-ell_p} provides further details and the  obtained guarantee.

\subsection{Discussion \& Limitation}
\label{sec:discussion}
By reducing the problem of certified segmentation to only non-fluctuating components,
we significantly reduce the difficulty and achieve strong results on challenging datasets.
However, a drawback of the method is the newly introduced hyperparameter $\tau$. In practice a
suitable value can be determined by the desired radius or the empirical
performance of the base classifier.
High values of $\tau$ will permit a higher certification radius, but also lead to more abstentions and require more samples $n$ for both certification and inference (both done via \textsc{SegCertify}).
Further, as is common with adversarial defenses is a loss of performance compared to a non-robust model.
However we expect further improvements in accuracy by the application of specialized training procedures
\citep{SalmanLRZZBY19,ZhaiDHZGRH020,JeongS20}, which we briefly
investigate in
\cref{sec:additional-results-semantic}.

\section{Conclusion}
\label{sec:conclusion}

In this work we investigated the problem of provably robust segmentation algorithms and
identified two key challenges: bad components and trade-offs introduced by
multiple testing that prevent naive solutions from working well. Based on this we
introduced \textsc{SegCertify}, a simple yet powerful algorithm that clearly
overcomes the bad component issue and allows for significantly better trade-offs
in multiple-testing. It enables certified performance on a similar level as an
undefended base classifier and permits guarantees to multiple threat models in a
plug-and-play manner.

\message{^^JLASTBODYPAGE \thepage^^J}

\clearpage
\bibliography{references}
\bibliographystyle{icml2021}

\message{^^JLASTREFERENCESPAGE \thepage^^J}

\ifincludeappendixx
	\clearpage
	\appendix
	\twocolumn[
\icmltitle{Supplemental Material for\\ \titlet}
\vskip 0.3in
]

\section{Experimental Details}
\label{sec:app-experiments}

\subsection{Experimental Details for \cref{sec:eval:semantic}}
\label{sec:details-semantic}
We use a HrNetV2 \citep{SunXLW19,WangSCJDZLMTWLX19} with the HRNetV2-W48
backbone from their official PyTorch 1.1 \citep{PaszkeGMLBCKLGA19}
implementation\footnote{\url{https://github.com/HRNet/HRNet-Semantic-Segmentation}}.
For Cityscapes we follow the outlined training procedure, only adding
the $\sigma=0.25$ Gaussian noise. For Pascal Context we doubled the number of
training epochs (and learning rate schedule) and added the $\sigma=0.25$ Gaussian noise.
During inference we use different batch sizes for different scales. These are summarized in \cref{tab:batch_sizes}.
All timing timing results are given for a single Nvidia GeForce RTX 2080 Ti and using 12 cores of a Intel(R) Xeon(R) Silver 4214R CPU @ 2.40GHz.
When training on an machine with 8 Nvidia GeForce RTX 2080 Ti one training epoch
takes around 4 minutes for both of the data sets.

We evaluate on 100 images each, that is for Cityscapes we use every \nth{5} image in the test set and for Pascal every \nth{51}.

We consider two metrics:

\begin{itemize}
  \item (certified) per-pixel accuracy: the rate of pixels correctly classified (over all images)
  \item (certified) mean intersection over union (mIoU): For each image $i$ and each class $c$ (ignoring the \abstain ``class'') we calculate the ratio $IoU_{c}^{i} = \tfrac{|P_{c}^{i} \cap G_{c}^{i}|}{|P_{c}^{i} \cup G_{c}^{i}|}$ where $P_{c}^{i}$ denotes the pixel locations predicted as class $c$ for input $i$ and $G_{c}^{i}$ denotes the pixel locations for class $c$ in the ground truth of input $i$. We then average $IoU_{c}^{i}$ over all inputs and classes.
\end{itemize}

\begin{table}[t]
  \centering
   \caption{Batch sizes used in segmentation inference.}
  \label{tab:batch_sizes}
 \begin{tabular}{lcc}
   \toprule
    scale & Cityscapes & Pascal Context\\
   \midrule
    0.25 & 24 & 80\\
    0.50 & 12 & 64\\
    0.75 & 4 & 32\\
    1.00 & 4 & 20\\
    multi & 4 & 10\\
   \bottomrule
  \end{tabular}
\end{table}

\subsection{Experimental Details for \cref{sec:pointcl-part-segm}}
\label{sec:details-pointcl-partsegm}

Using the PointNetV2 architecture \citep{QiSMG17,QiYSG17,yan2020pointasnl}
implemented in PyTorch
\footnote{\url{https://github.com/yanx27/Pointnet_Pointnet2_pytorch}}
\citep{PaszkeGMLBCKLGA19}.
Again we keep the training parameters unchanged other than the addition of noise during training. One training epoch on a single Nvidia GeForce RTX 2080 Ti takes 6 minutes.
In inference we use a batch size of 50.
All timing results are given for a single Nvidia GeForce RTX 2080 Ti and using 12 cores of a Intel(R) Xeon(R) Silver 4214R CPU @ 2.40GHz.

Again, we evaluate on 100 inputs. This corresponds to every \nth{28} input in the test set.
As a metric we consider the (certified) per-component accuracy: the rate of parts/components correctly classified (over all inputs).

\section{Additional Results}

\subsection{Additional Results for \cref{sec:synthetic}}
\label{sec:additional-results-synthetic}

In \cref{fig:plot_abstain_n_gamma,fig:plot_abstain_N} we show smoothed plots as
\naiveCartesian and \naiveMulti are either correct on all components or non at
all. Here, we provide the unsmoothed results in \cref{fig:plot_abstain_app}. In order
to obtain the plots in \cref{fig:plot_abstain} we apply a Savgol filter
\citep{savitzky1964smoothing} of degree 1 over the 11 closest neighbours (using
the SciPy implementation) and use a step size of $0.001$ for $\gamma$.

\begin{figure*}[h]
  \centering
  \newcommand{\subfigwidth}{0.45\textwidth}
    \begin{subfigure}[b]{\subfigwidth}
        \includegraphics[width=\textwidth]{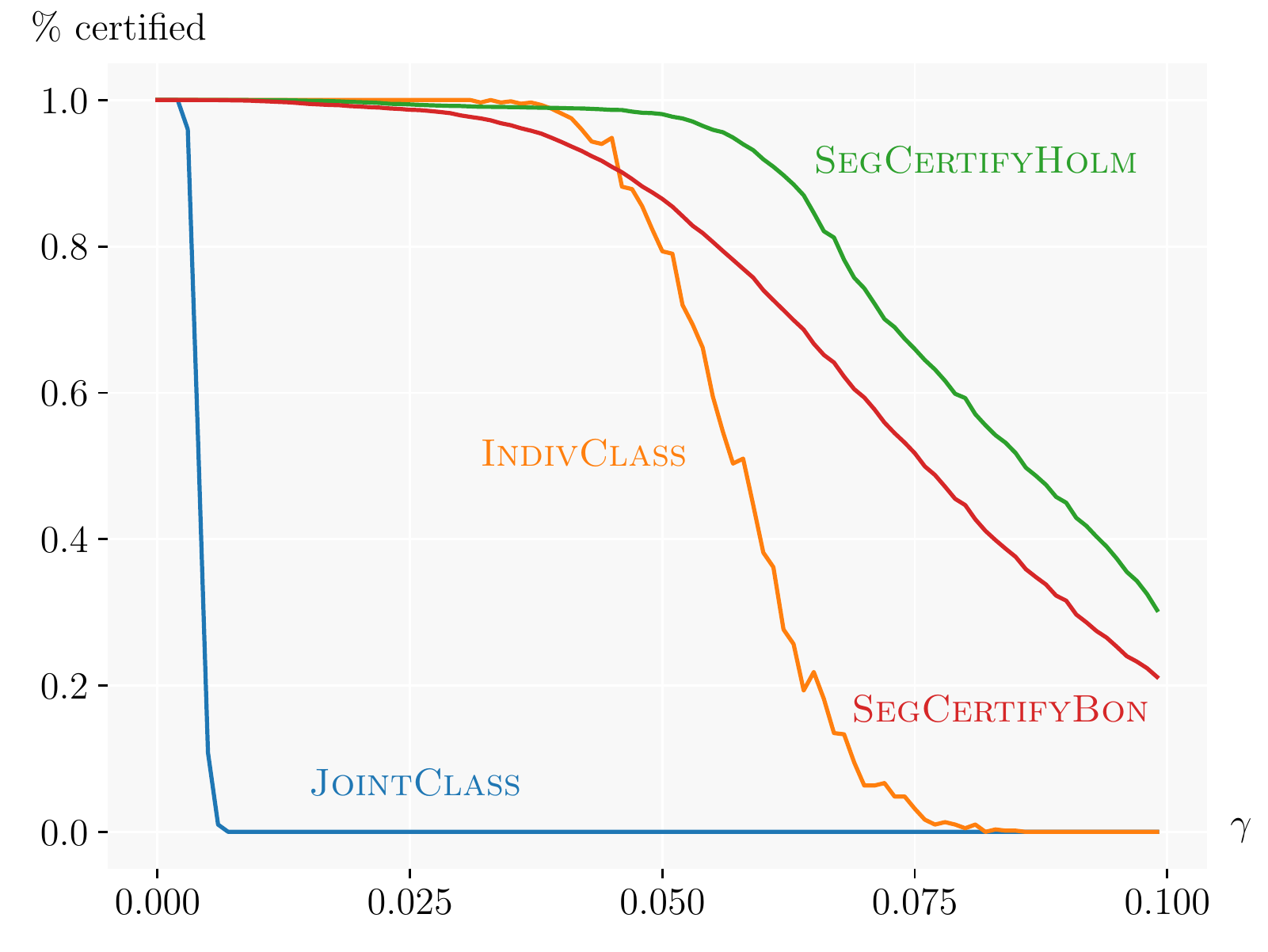}
      \caption{Unsmoothed version of \cref{fig:plot_abstain_n_gamma}. On $N=100$ components, with a classifier that has error rate $5\gamma$ on one component and $\gamma$ on all others.}
        \label{fig:plot_abstain_gamma_app}
    \end{subfigure}
    \hfill
    \begin{subfigure}[b]{\subfigwidth}
\includegraphics[width=\textwidth]{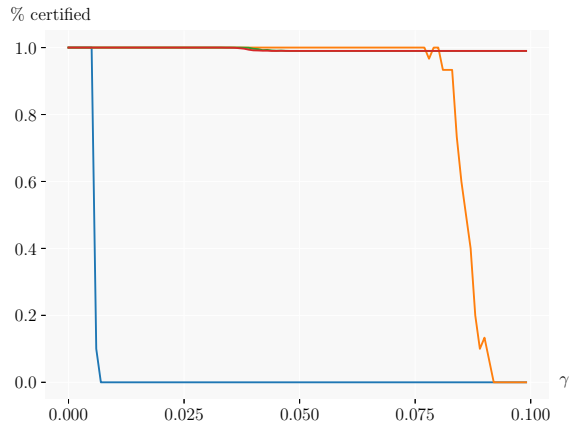}
      \caption{Unsmoothed version of \cref{fig:plot_abstain_N}.
\textsc{SegCertify} with different testing corrections for various numbers of components $N$ with error rate $\gamma = 0.05$.}
      \label{fig:plot_abstain_n_app}
    \end{subfigure}
    \caption{Unsmoothed versions of \cref{fig:plot_abstain}.
      We empirically investigate the power (ability to avoid type II
      errors -- false abstention) of multiple algorithms on synthetic data. The
      $y$-axis shows the rate of certified (rather than abstained) components. An optimal algorithm would achieve
      $1.0$ or $0.99$ in all plots.
     }
  \label{fig:plot_abstain_app}
\end{figure*}

\begin{figure*}[ht!]
  \centering
  \newcommand{\subfigwidth}{0.45\textwidth}
  \begin{subfigure}[b]{\subfigwidth}
    \includegraphics[width=\textwidth]{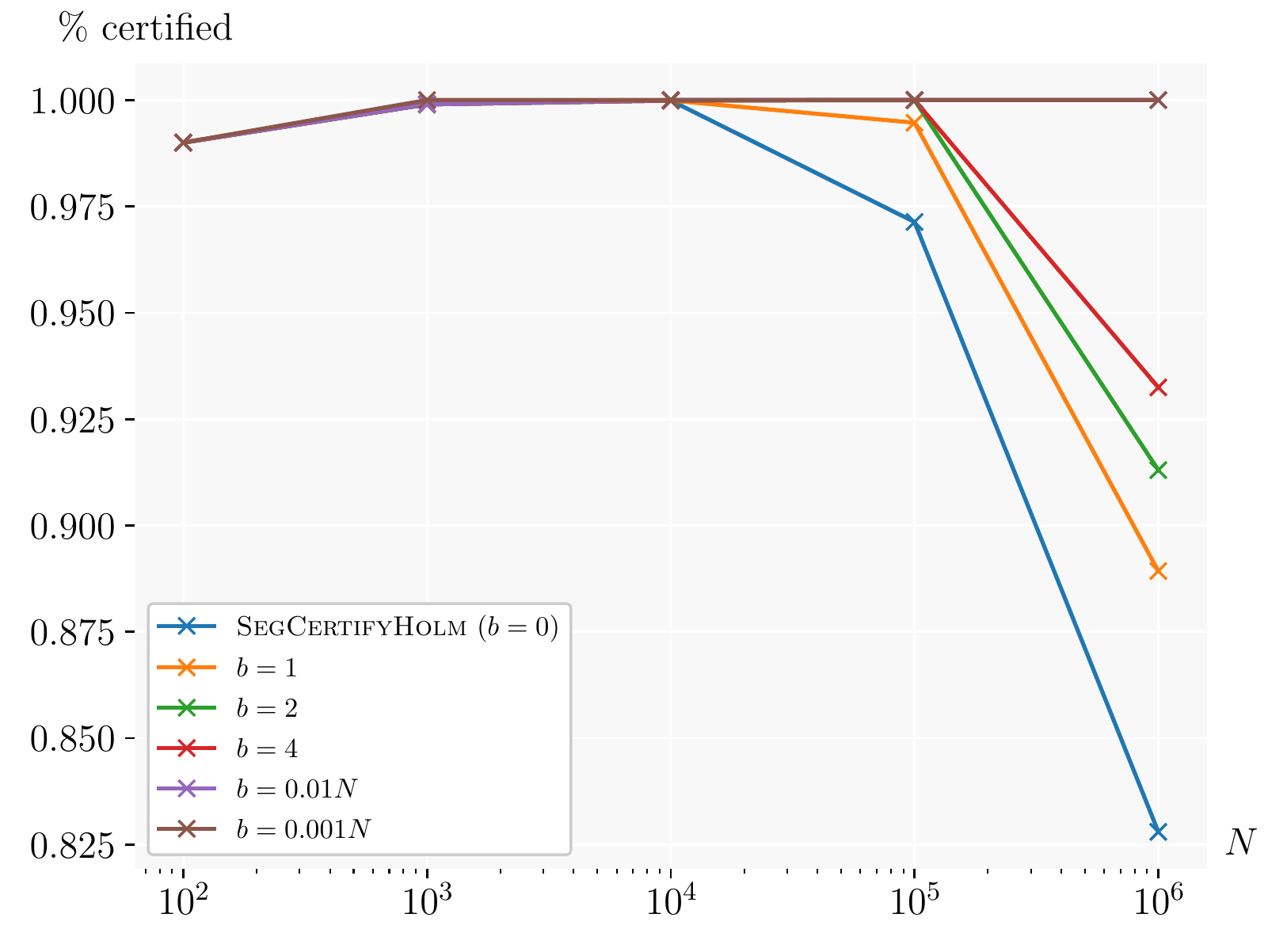}
    \caption{$\alpha = 0.1$.}
    \label{fig:plot_abstain_N_kfwer}
  \end{subfigure}
    \hfill
    \begin{subfigure}[b]{\subfigwidth}
      \includegraphics[width=\textwidth]{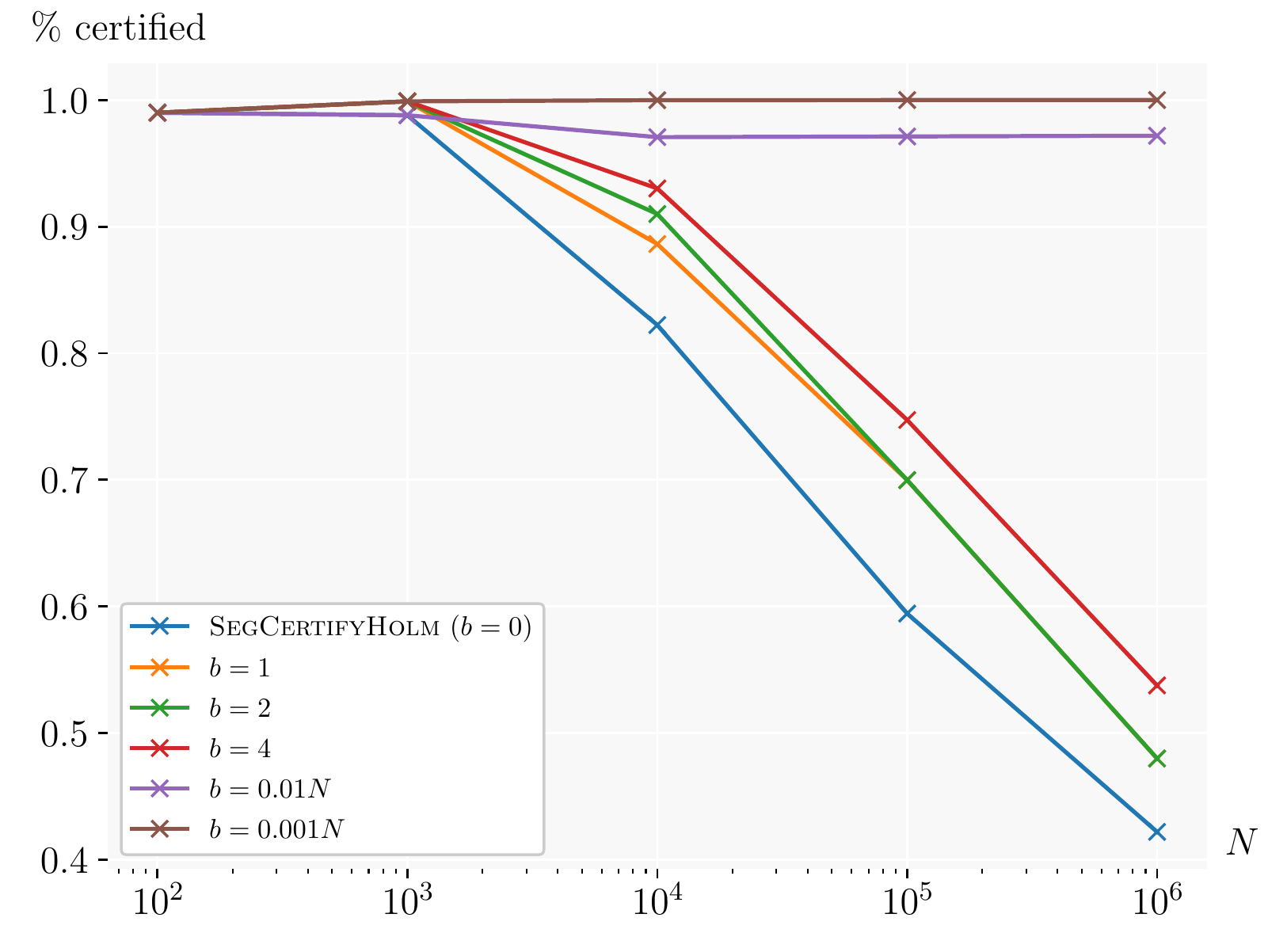}
      \caption{$\alpha = 0.001$.}
        \label{fig:plot_abstain_N_kfwer0001}
    \end{subfigure}
    \caption{Evaluation where an error budget of up top $b$ type I errors is
      allowed. Potentially allowing a small amount of errors greatly increases
      the power of the test. $N$ varies along the $x$-axis, $\gamma=0.05$ and $k$
      components with error rate $5\gamma$.}
  \label{fig:plot_abstain_kfwer}
\end{figure*}

\subsection{$k$-FWER and error budget}
\label{sec:app-kfwer}

Here we discuss the gains from allowing a small budget of errors
and applying $k$-FWER control as outlined in \cref{sec:extensions}. Control for
$k$-FWER at level $\alpha$ means that $P(\geq k \text{ type I errors}) \leq \alpha$. Which for
$k=1$ recovers standard FWER control. Thus, if we allow a budget of $b$ type I
errors at level $\alpha$ we need to perform $k$-FWER control with $k = b+1$. In the
following we will refer only to the budget $b$, to avoid confusion between the
$k$ in $k$-FWER and the $k$ noisy components in the setting of
\cref{sec:synthetic}.

\cref{fig:plot_abstain_kfwer} shows an empirical evaluation of this approach for
different $b$ for $\gamma = 0.05$, one noisy components and different levels of
$N$. \cref{fig:plot_abstain_N_kfwer} uses $\alpha = 0.1$ and
\cref{fig:plot_abstain_N_kfwer0001} $\alpha = 0.001$. We see that allowing a single
error leads to a huge gain in power for the $N = 10^{6}$ setting. Similarly,
allowing 1 percent or 1 permille errors greatly strengthen the method.

\paragraph{False Discovery Rate}
Similarly, \textsc{SegCertify} can employ false discovery rate
(FDR) control rather than ($k$-)FWER control. FDR control limits the expected
number of type I errors. Since this is a much weaker statement than FWER it
allows for less type II errors. However, while useful this kind of control
leaves the area of (statistical) certified robustness for a more relaxed
probabilistic setting which we do not investigate here further.

\subsection{Additional Results for \cref{sec:eval:semantic}}
\label{sec:additional-results-semantic}

\begin{table*}
  \centering
  \renewcommand{\arraystretch}{1.2}
  \footnotesize
  \caption{Difference when Bonferroni correction rather than Holm correction is used in    \cref{tab:semanticsegmentation_extended}. Only differences $\geq 10^{-4}$ are shown. We observed no such differences on the Pascal Context dataset.}
  \vspace{1em}
 \label{tab:semanticsegmentation_bon}
\begin{tabular}{@{}ll@{\hskip 1em}rr@{\hskip 4em}rrr@{}}
\toprule
  & & & &
          \multicolumn{3}{c}{Cityscapes}\\
  \cmidrule{5-7}
  scale & &  $\sigma$ & $R$
        &  acc. &  mIoU & \%\abstain\\
\midrule

 $0.5$ &
  &\multirow{3}{*}{\shortstack[l]{\textsc{SegCertify}\\$n=100,\tau=0.75$}}
    0.25 & 0.17 & -0.0008 & -0.0005 & 0.0019 \\
    & &0.33 & 0.22 & -0.0014 & -0.0010 & 0.0024 \\
    & &0.50 & 0.34 & -0.0021 & -0.0019 & 0.0031 \\
  \hline \\[-0.6em]

 $0.5$ &
  &\multirow{3}{*}{\shortstack[l]{\textsc{SegCertify}\\$n=100,\tau=0.75$}}
    0.25 & 0.17 & -0.0009 & -0.0014 & 0.0014  \\
    & &0.33 & 0.22 & -0.0012 & -0.0020 & 0.0016  \\
    & &0.50 & 0.34 & -0.0005 & -0.0004 & 0.0006  \\
  \hline \\[-0.6em]

 $1.0$ &
  &\multirow{3}{*}{\shortstack[l]{\textsc{SegCertify}\\$n=100,\tau=0.75$}}
    0.25 & 0.17 & -0.0012 & -0.0032 & 0.0016  \\
    & &0.33 & 0.22 & -0.0020 & -0.0018 & 0.0024 \\
    & &0.50 & 0.34 & -0.0000 & -0.0000 & 0.0001  \\[0.4em]
  & &\shortstack[l]{\textsc{SegCertify}\\$n=300,\tau=0.90$}
    0.25 & 0.17 & 0.0001 & 0.0002 & -0.0002 \\
  \hline \\[-0.6em]

 multi &
  &\shortstack[l]{\textsc{SegCertify}\\$n=100,\tau=0.75$}
    0.25 & 0.17 & -0.0125 & -0.0185 & 0.0173 \\

\bottomrule
\end{tabular}
\end{table*}

\cref{tab:semanticsegmentation_extended} shows an extended version of \cref{tab:semanticsegmentation}. Both of these tables use Holm correction.
\cref{tab:semanticsegmentation_bon} shows the difference to  \cref{tab:semanticsegmentation_extended} if instead Bonferroni correction was used.
Generally this difference is very small in this setting as it appears the true $p_{A}$ are far from $\tau$. However in some settings (such as multi resolution) the effect of Holm correction can be up to 2\%.
Since for a particular base classifier or $\tau$ this gain can quickly go from neglectable to significant and since the additional evaluation time ($< 0.1s$) is neglectable compared to the time for sampling we believe Holm correction to be preferable in most cases.

\paragraph{Consistency Training}
Here we investigate a naive instantiation of training approach from \citet{JeongS20}.

\citet{JeongS20} improve the training for classification models used as base models in randomized smoothing by adding a consistency regularization term in training. We use this same term, but compute it for every pixel and average the results.
To obtain the results in \cref {tab:semanticsegmentation_consistency} we used $m=2$, $\lambda = 1$, $\eta = 0.5$ and $\sigma = 0.25$.
Depending on the scale we see either slightly better or slightly worse results than with standard Gaussian data augmentation in training. This shows the promise of the method but also highlights the need for potential further specialization to the segmentation setting, particularly by considering the effect of scaling.

\begin{table*}
  \centering
  \renewcommand{\arraystretch}{1.2}
  \footnotesize
  \caption{Same setting as \cref{tab:semanticsegmentation_extended} but using a model trained with consitency regularization.}
  \vspace{1em}
  \label{tab:semanticsegmentation_consistency}
\begin{tabular}{@{}ll@{\hskip 1em}rr@{\hskip 4em}rrr@{}}
\toprule
  & & & &
          \multicolumn{3}{c}{Cityscapes}\\
  \cmidrule{5-7}
  scale & &  $\sigma$ & $R$
  &  acc. &  mIoU & \%\abstain\\
\midrule
 $0.25$ &
  base model & - & - & 0.87 & 0.40  & 0.00\\[0.2em]
  &\multirow{3}{*}{\shortstack[l]{\textsc{SegCertify}\\$n=100,\tau=0.75$}}
                        &  0.25 & 0.17 &  0.83 &  0.42 & 0.07 \\
    &                    &  0.33 & 0.22 & 0.84 &  0.42 & 0.09 \\
    &                    &  0.50 & 0.34 & 0.82 &  0.43 & 0.14 \\[0.2em]
 \hline \\[-0.6em]

 $0.50$ &
  base model & - & - & 0.91 &  0.53 & 0.00\\[0.2em]
  &\multirow{3}{*}{\shortstack[l]{\textsc{SegCertify}\\$n=100,\tau=0.75$}}
                        &  0.25 & 0.17 &  0.89 &  0.57 & 0.06 \\
    &                    &  0.33 & 0.22 & 0.89 &  0.57 & 0.07 \\
    &                    &  0.50 & 0.34 & 0.86 &  0.57 & 0.11 \\[0.2em]
 \hline \\[-0.6em]

 $1.00$ &
  base model & - & - & 0.92 & 0.62 & 0.00\\[0.2em]
  &\multirow{3}{*}{\shortstack[l]{\textsc{SegCertify}\\$n=100,\tau=0.75$}}
                        &  0.25 & 0.17 &  0.88 &  0.63 & 0.12 \\
    &                    &  0.33 & 0.22 & 0.79 &  0.46 & 0.20 \\
  &                    &  0.50 & 0.34 & 0.39 &  0.06 & 0.32 \\

\bottomrule
\end{tabular}

\end{table*}

\subsection{Additional Results for \cref{sec:pointcl-part-segm}}
\label{sec:additional-results-pointcl-part-sgm}

\begin{table*}[h]
  \centering
  \footnotesize
  \caption{Difference when Bonferroni correction rather than Holm correction is used in    \cref{tab:pointcloud}.} 
  \vspace{1em}
  \label{tab:pointcloud_bon}
\begin{tabular}{@{}lrrrrrr@{}}
\toprule
 &     $n$ &   $\tau$ & $\sigma$ &   acc &  \%\abstain \\
\midrule
  $f_{\sigma=0.25}$\\
 & 1000 &  0.75 & 0.250 & -0.0012  & 0.0019 \\
 & 1000 &  0.85 & 0.250 & -0.0023  & 0.0029 \\
 &10000 &  0.95 & 0.250 & -0.0013  & 0.0009 \\
 & 10000 &  0.99 & 0.250& -0.0008  & 0.0009 \\
  \midrule
  $f_{\sigma=0.25}^{n}$\\
 & 1000 &  0.75 & 0.250 &  -0.0020 & 0.0026 \\
 & 1000 &  0.85 & 0.250 &  -0.0026 & 0.0032 \\
 &10000 &  0.95 & 0.250 &  -0.0011 & 0.0010 \\
 & 10000 &  0.99 & 0.250&  -0.0013 & 0.0011 \\
  \midrule
  $f_{\sigma=0.5}$ \\
 & 1000 &  0.75 & 0.250 &  -0.0007 & 0.0010 \\
 & 1000 &  0.75 & 0.500 &  -0.0002 & 0.0010 \\
 & 1000 &  0.85 & 0.500 &  -0.0017 & 0.0028 \\
 &10000 &  0.95 & 0.500 &  -0.0007 & 0.0010 \\
 &10000 &  0.99 & 0.500 &  -0.0003 & 0.0003 \\
  \midrule
  $f_{\sigma=0.5}^{n}$ \\
 & 1000 &  0.75 & 0.250 & -0.0009  & 0.0017  \\
 & 1000 &  0.75 & 0.500 & -0.0015  & 0.0024  \\
 & 1000 &  0.85 & 0.500 & -0.0019  & 0.0028  \\
 &10000 &  0.95 & 0.500 & -0.0043  & 0.0013  \\
 &10000 &  0.99 & 0.500 & -0.0008  & 0.0007  \\

\bottomrule
\end{tabular}
\end{table*}

\cref{tab:pointcloud_bon} shows the change when executing the experiments in \cref{tab:pointcloud} with Bonferroni correction instead of Holm correction.

\subsection{Certification beyond $\ell_p$}
\label{sec:app-beyond-ell_p}

As outlined in \cref{sec:extensions}, \textsc{SegCertify} can be easily adapted
to non-$\ell_{2}$-settings. Here we show that we can certify against and adversary
rotating 3d point clouds. A 3d rotation is parameterized by 3 angles which we
will denote $\epsilon \in \mathbb{R}^{3}$ and define
$\psi_{\epsilon}(\vx) \colon \mathbb{R}^{N \times 3} \to \mathbb{R}^{N \times 3}$ as
\begin{equation}
  \label{eq:rot}
  (\psi_{\epsilon}(\vx))_{i} = \mR_{\epsilon} \vx_{i},
\end{equation}

where $\mR_{\epsilon}$ denotes the 3d rotation matrix specified by $\epsilon$. The randomized
smoothing approach of \citet{FischerBV20} allows to certify robustness in this
case: $f(\vx) = f(\psi_{\delta}(\vx))$ for $\delta$ with $\|\delta\|_{2} \leq R$, by sampling
rotations $\psi_{\epsilon}(\vx)$ with $\epsilon \sim \mathcal{N}(0, \sigma^{2})$. The parameter
robustness radius $R $ is computed the same as throughout the paper. When
applied to points with a normal vector, \cref{eq:rot} can be extended to apply
$R_{\epsilon}$ to the point coordinates as well as the normal vector.

Using one of the model from \cref{sec:pointcl-part-segm}, $f_{\sigma=0.5}^{n}$ ,
we perform this version of randomized smoothing.

The results are shown in \cref{tab:result_rot}. Since the models was not
specifically trained to be robust under rotations, the performance quickly
deteriorates. Nevertheless we can certify robustness to rotations with a
parameter radius $R$ of 0.17 and 0.085 for $\sigma$ of $0.25$ and $0.125$
respectively.

The same approach can be applied to models that are
empirically invariant to most rotations while not formally rotation invariant.
In these cases we need to certify a radius of
$R = \sqrt{3} \pi$ (when measuring angles in radians). When using a fixed $\tau$, an
appropriate $\sigma$ can be chosen as $\sigma = \frac{\sqrt{3} \pi}{\Phi^{-1}(\tau)}$. While this
is relativity large $\sigma$, this does not pose an obstacle for a mostly robust base
model.

\begin{minipage}{\columnwidth}
  \centering
  \footnotesize
  \captionof{table}{Results for point cloud part segmentation under 3d rotation. The
    baseline and base model is $f^{n}_{\sigma=0.5}$. \textsc{SegCertify} uses
    $\tau =0.75$, $n_{0}=100$, $n=1000$ and $\alpha=0.0001$.}
  \label{tab:result_rot}
  \vspace{0.5em}
 \begin{tabular}{@{}rrrr@{}}
   \toprule
    model~/~$\sigma$ & \text{acc} & \%\abstain & $t$\\
    \midrule
baseline & 0.77 & 0.00 & 0.72\\
0.125 & 0.69 & 0.16 & 74.13\\
0.25 & 0.61 & 0.26 & 74.51\\
   \bottomrule
  \end{tabular}
\end{minipage}

\begin{table*}[p]
  \centering
  \renewcommand{\arraystretch}{1.1}
  \footnotesize
  \caption{Extended version of \cref{tab:semanticsegmentation}. Segmentation
    results for 100 images. acc. shows the mean per-pixel accuracy, mIoU the
    mean intersection over union, \%\abstain abstentions and $t$ runtime in
    seconds. All \textsc{SegCertify} ($n_{0} = 10, \alpha = 0.001$) results are
    certifiably robust at radius $R$ w.h.p. multiscale uses
    $0.5, 0.75, 1.0, 1.25, 1.5, 1.75$ as well as their flipped variants for
    Cityscapes and additionally $2.0$ for Pascal. All numbers are obtained via Holm correction.
  } 
  \vspace{1em}
  \label{tab:semanticsegmentation_extended}
\begin{tabular}{@{}ll@{\hskip 1em}rr@{\hskip 4em}rrrrr@{\hskip 4em}rrrr@{}}
\toprule
  & & & &
  \multicolumn{4}{c}{Cityscapes}  &&
  \multicolumn{4}{c}{Pascal Context}\\
  \cmidrule{5-8}
  \cmidrule{10-13}
  scale & &  $\sigma$ & $R$
  &  acc. &  mIoU & \%\abstain & $t$ &
  &  acc. &  mIoU & \%\abstain & $t$ \\
\midrule

 $0.25$ &
  non-robust model & - & - & 0.93 & 0.60 & 0.00 & 0.38 &
                     & 0.59 &  0.24 & 0.00 &  0.12 \\[0.2em]
  & base model & - & - & 0.87 &  0.42 & 0.00 & 0.37 &
                     & 0.33 & 0.08 & 0.00 &  0.13 \\[0.2em]
  &\multirow{3}{*}{\shortstack[l]{\textsc{SegCertify}\\$n=100,\tau=0.75$}}
                        &  0.25 & 0.17 &  0.84 & 0.43  & 0.07 & 70.00 &&  0.33 & 0.08  &  0.13  &   14.16 \\
    &                    &  0.33 & 0.22 & 0.84  & 0.44  & 0.09 & 70.21 &&  0.34 & 0.09  & 0.17 &   14.20\\
    &                    &  0.50 & 0.34 & 0.82 & 0.43  & 0.13 & 71.45 &&  0.23 &  0.05 &  0.27  &   14.23\\[0.2em]

    &\multirow{3}{*}{\shortstack[l]{\textsc{SegCertify}\\$n=300,\tau=0.90$}}
                        & 0.25 & 0.32 &  0.83 &  0.43 & 0.10 & 143.37 && 0.32 &  0.08&   0.23 &   24.33\\
  &                     & 0.33 & 0.42 &  0.82 &  0.43 & 0.12 & 143.30 && 0.32 &  0.09 &  0.29&   24.42\\
   &                    & 0.50 & 0.64 & 0.79 &  0.40 & 0.18 & 143.54 && 0.20&  0.04&   0.43&   24.13\\

  &\multirow{3}{*}{\shortstack[l]{\textsc{SegCertify}\\$n=500,\tau=0.95$}}
  & 0.25 & 0.41 & 0.83 & 0.42 & 0.11 & 229.37 && 0.29 &  0.01 & 0.30  &   33.64\\
  &                    & 0.33 & 0.52 & 0.83  & 0.42  & 0.12  & 230.69 && 0.26 &  0.01 &   0.39 &   33.79 \\
  &                   & 0.50 & 0.82 & 0.77  & 0.38  & 0.20 & 230.09 && 0.10 & 0.00 &   0.61 & 33.44\\

  &\multirow{3}{*}{\shortstack[l]{\textsc{SegCertify}\\$n=10000,\tau=0.99$}}
                        &  0.25 & 0.58 &  - & -  & - & - && 0.25 &  0.07 &   0.48 &    557.29\\
   &                     &  0.33 & 0.77 &  - & -  & - & - && 0.24 & 0.07 & 0.58 & 557.34 \\
   &                     &  0.50 & 1.17 &  - & -  & - & - && 0.11 & 0.03 & 0.77 & 557.32\\[0.4em]
  \hline \\[-0.6em]

 $0.5$ &
  non-robust model & - & - & 0.96  & 0.76 & 0.00 & 0.39 &
                     & 0.74 & 0.38 & 0.00 &  0.16 \\[0.2em]
  & base model & - & - & 0.89 &  0.51 & 0.00 & 0.39 && 0.47 & 0.13 &   0.00 &    0.14  \\[0.2em]
  &\multirow{3}{*}{\shortstack[l]{\textsc{SegCertify}\\$n=100,\tau=0.75$}}
  & 0.25 & 0.17 & 0.88  & 0.54  & 0.06 & 75.59 &&  0.48  & 0.16 & 0.09 &   16.29 \\
  &                       & 0.33 & 0.22 & 0.87  & 0.54  & 0.08  & 75.99 && 0.50  & 0.17 & 0.11  &   16.08 \\
  &                       & 0.50 & 0.34 & 0.86  & 0.54  & 0.10 & 75.72 && 0.36  & 0.10  & 0.21   &   16.14 \\

   &\multirow{3}{*}{\shortstack[l]{\textsc{SegCertify}\\$n=300,\tau=0.90$}}
                          & 0.25 & 0.32 & 0.87 & 0.53 & 0.08 & 143.40 && 0.46 &  0.15 &   0.17 &  27.17 \\
  &                       & 0.33 & 0.42 & 0.86 & 0.52 & 0.10 & 145.90 && 0.47 & 0.16 &   0.21 &   27.17 \\
  &                       & 0.50 & 0.64 & 0.83 & 0.50 & 0.15 & 144.61 && 0.31 &  0.10 &   0.38 &   27.32\\[0.4em]
   &\multirow{3}{*}{\shortstack[l]{\textsc{SegCertify}\\$n=500,\tau=0.95$}}
                          & 0.25 & 0.41 & 0.86 & 0.52 & 0.09 & 228.63 && 0.45 & 0.19  & 0.21  & 38.27\\
  &                       & 0.33 & 0.52 & 0.85 & 0.51 & 0.11 & 228.38 &&  0.46 & 0.16  & 0.26 & 38.43\\
  &                       & 0.50 & 0.82 & 0.82 & 0.49 & 0.16 & 228.73 && 0.30 & 0.09 & 0.44 & 38.37\\
  \hline \\[-0.6em]

  $0.75$ &
non-robust model & - & - & 0.97 & 0.80 & 0.00 & 0.46 && 0.76 &  0.41 &   0.00 &   0.15  \\[0.2em]
&base model & - & - & 0.90 & 0.59 & 0.00 & 0.47 && 0.55 &  0.18 &   0.00 &    0.15  \\[0.2em]
   &\multirow{3}{*}{\shortstack[l]{\textsc{SegCertify}\\$n=100,\tau=0.75$}}
                          & 0.25 & 0.17 & 0.88 & 0.57 & 0.09 & 82.69 && 0.53 &  0.19 &   0.15 &   16.78 \\
  &                       & 0.33 & 0.22 & 0.86 & 0.56 & 0.12 & 82.87 && 0.54 &  0.20 &   0.20 &   16.83 \\
  &                       & 0.50 & 0.34 & 0.64 & 0.27 & 0.31 & 82.19 && 0.29 &  0.07 &   0.33 &   16.83\\[0.4em]
    &\multirow{3}{*}{\shortstack[l]{\textsc{SegCertify}\\$n=300,\tau=0.90$}}
                          & 0.25 & 0.32 & 0.84 & 0.54 & 0.13 & 177.44 && 0.51 &  0.19 &   0.22 &   29.48 \\
  &                       & 0.33 & 0.42 & 0.84 & 0.52 & 0.15 & 177.22 && 0.51 &  0.20 &   0.28 &   29.58 \\
  &                       & 0.50 & 0.64 & 0.60 & 0.24 & 0.37 & 177.67 && 0.25 &  0.06 &   0.45 &   29.53 \\
 \hline \\[-0.6em]

   $1.0$ &
  non-robust model & - & - & 0.97 & 0.81 & 0.00 & 0.52 &
                     & 0.77 &  0.42 & 0.00 &  0.18 \\[0.2em]
  & base model & - & - & 0.91  & 0.57 & 0.00 & 0.52  && 0.53 &  0.18 & 0.00 & 0.18\\[0.2em]
  &\multirow{3}{*}{\shortstack[l]{\textsc{SegCertify}\\$n=100,\tau=0.75$}}
  & 0.25 & 0.17 & 0.88 & 0.59 & 0.11  & 92.75 && 0.55  & 0.22  &  0.22 &   18.53\\
  &                      & 0.33 & 0.22 & 0.78 & 0.43 & 0.20  & 92.85 && 0.46  & 0.18  &  0.34  &   18.57\\
     &                     & 0.50 & 0.34 & 0.34  & 0.06  & 0.40 & 92.48 && 0.17  & 0.03  &  0.41  &   18.46 \\

    &\multirow{3}{*}{\shortstack[l]{\textsc{SegCertify}\\$n=300,\tau=0.90$}}
                          & 0.25 & 0.32 & 0.86 & 0.56 & 0.14 & 204.82 && 0.53 &  0.21 &   0.29 &   33.83 \\
  &                       & 0.33 & 0.42 & 0.75 & 0.40 & 0.24 & 204.58 && 0.42 &  0.17 &   0.44 &   33.78 \\
  &                       & 0.50 & 0.64 & 0.31 & 0.05 & 0.47 & 204.57 && 0.15 &  0.03 &   0.52 &   33.43 \\
  \hline \\[-0.6em]

 multi &
  non-robust model & - & - & 0.97 & 0.82 & 0.00 & 8.98 &
                     & 0.78 & 0.45 & 0.00 & 4.21 \\[0.2em]
  & base model & - & -  & 0.92 & 0.60  & 0.00 & 9.04  && 0.56 &  0.19 &   0.00 & 4.22\\[0.4em]
  &\multirow{1}{*}{\shortstack[l]{\textsc{SegCertify}\\$n=100,\tau=0.75$}}
  &   0.25 & 0.17 & 0.88 & 0.57  & 0.09  & 1040.55 && 0.52  & 0.21  & 0.29  &  355.00 \\[0.8em]
\bottomrule
\end{tabular}
\end{table*}

\clearpage
\begin{figure*}[ht!]
    \newcommand{\subfigwidth}{0.4\columnwidth}
    \center
    \begin{subfigure}[b]{\subfigwidth}
        \includegraphics[width=\textwidth]{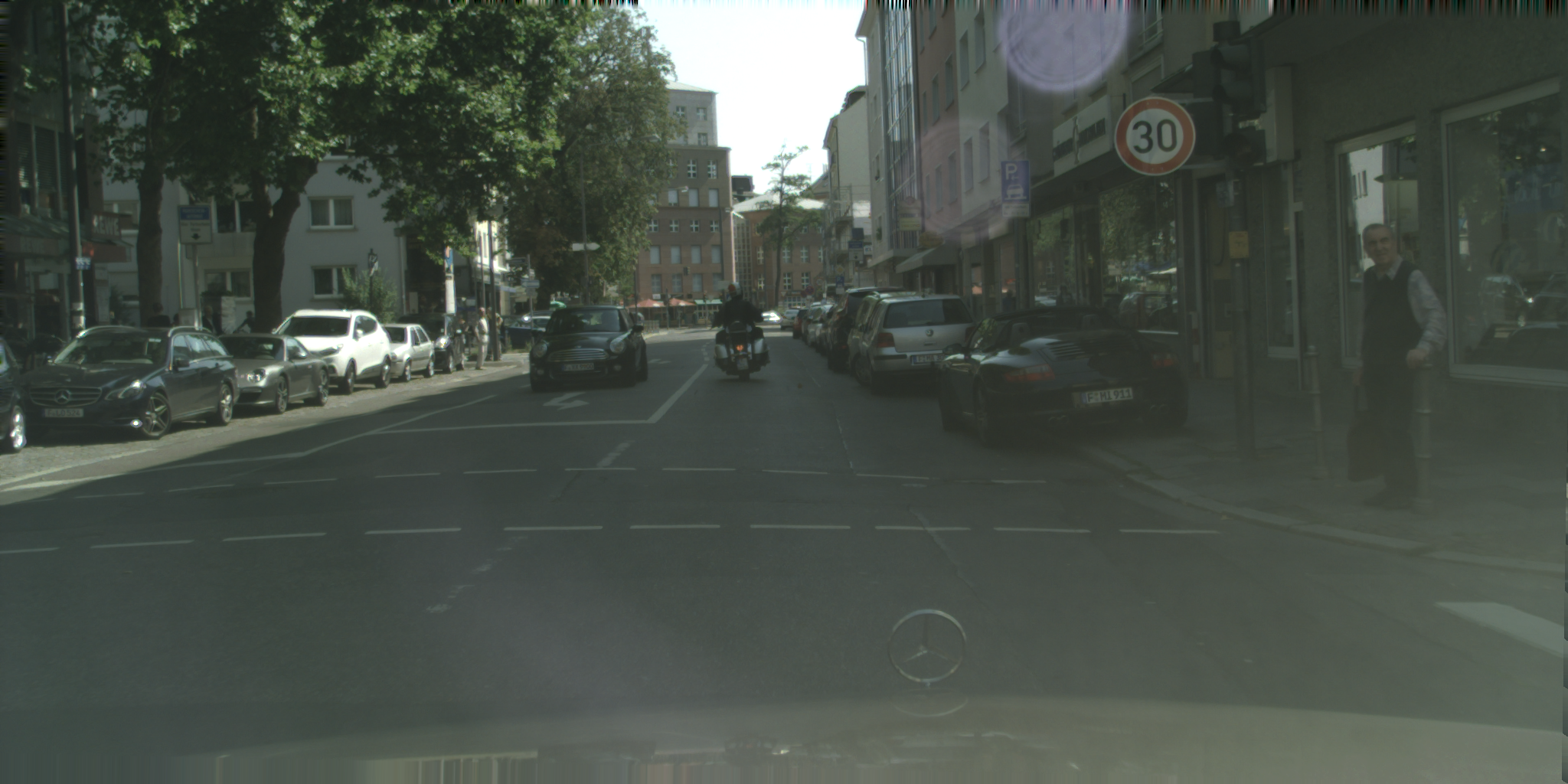}
        \caption{Attacked image}
    \end{subfigure}
    \hfill
    \begin{subfigure}[b]{\subfigwidth}
        \includegraphics[width=\textwidth]{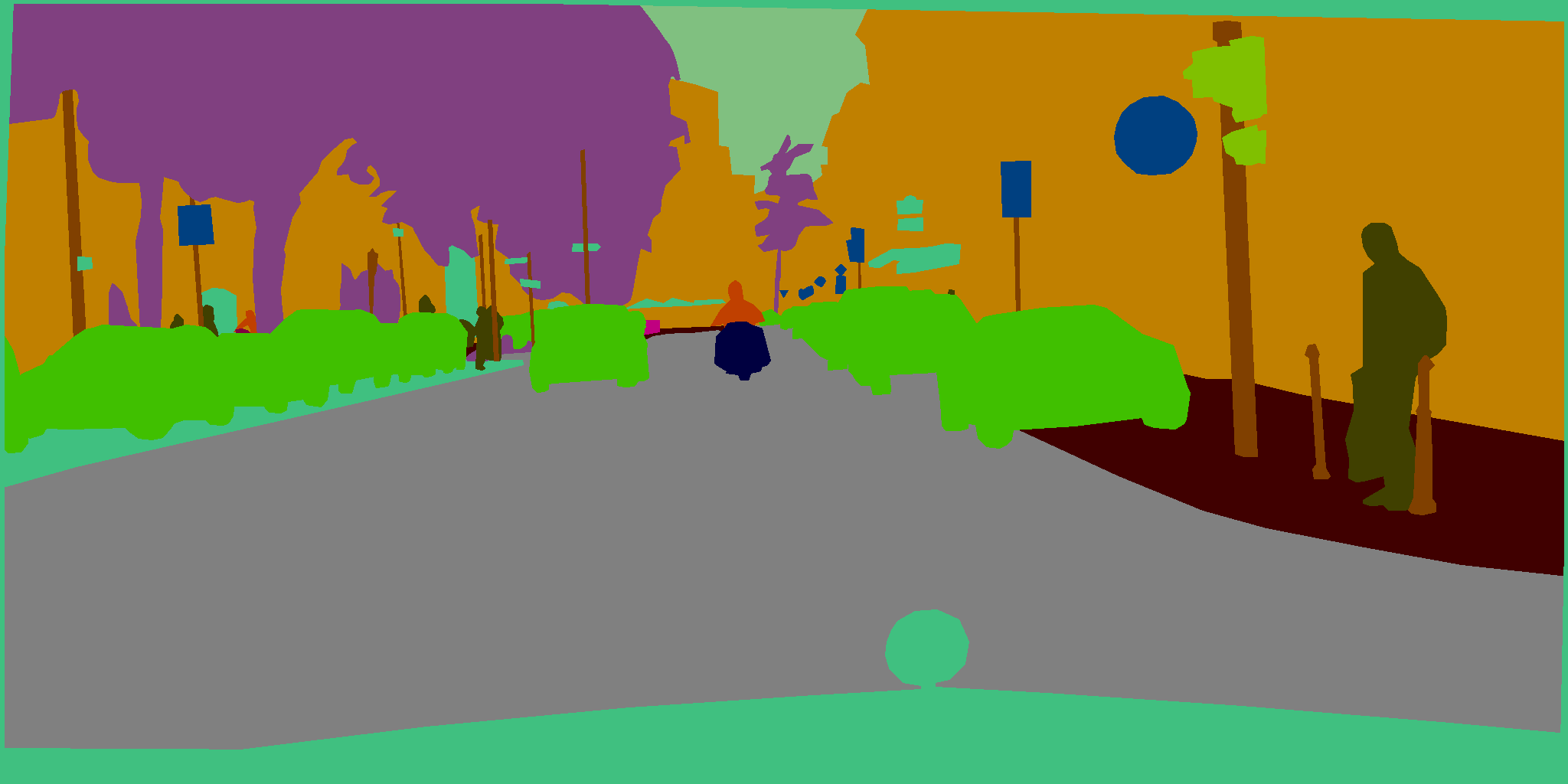}
        \caption{Ground truth seg.}
    \end{subfigure}
    \hfill
    \begin{subfigure}[b]{\subfigwidth}
        \includegraphics[width=\textwidth]{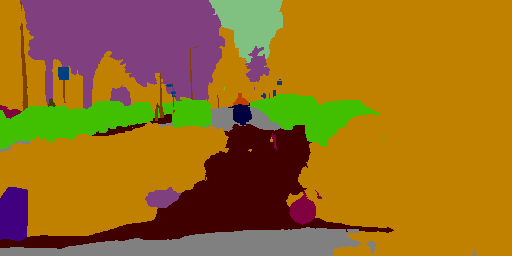}
        \caption{Attacked segmentation}
    \end{subfigure}
    \hfill
    \begin{subfigure}[b]{\subfigwidth}
        \includegraphics[width=\textwidth]{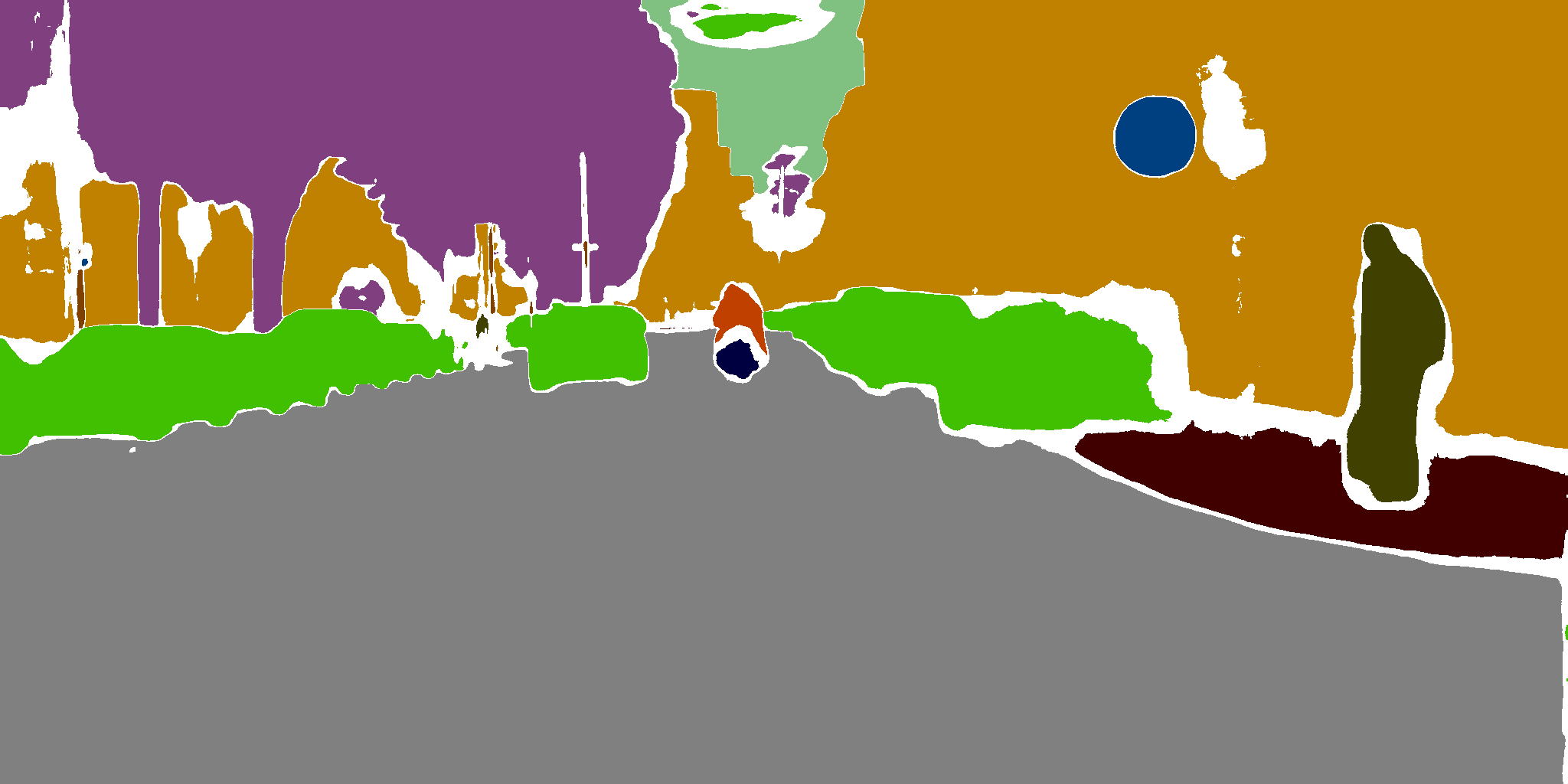}
        \caption{Certified segmentation}
    \end{subfigure}
    \caption{Another hand-picked example like \cref{fig:intro}.}
    \label{fig:add_vis_hand}
\end{figure*}

\section{Details on Attacks \& \cref{fig:intro}}
\label{sec:app-details-attacks}

To produce the visualization in \cref{fig:intro} we used a custom PGD attack
described below
and produced an $\ell_{2}$ adversarial example within a range of $0.16$.
We performed certification with
$n=100,\alpha=0.001,\sigma=0.25$ and $\tau=0.75$, which certifies robustness at an radius of
$R = \sigma \Phi^{-1}(\tau) = 0.1686$.
We perform this certification on the clean input and thus show robustness to the attack.
As the non-robust model for \cref{fig:intro:attacked} we used a
pretrained HrNet from the same repository as outlined in
\cref{sec:details-semantic}.

We use $k = 100$ steps for the attack and step size $s = \frac{10 * R }{k}$.
While scaling can be simply incorporated into the attack, we use scale $1.0$ for
both the attacked an the certified classifier.

We used an Nvidia Titan RTX to perform these attacks as the memory requirement exceeds that of a single Nvidia GeForce RTX 2080 Ti.

Here, in \cref{fig:add_vis_hand,fig:add_vis} we provide further visualizations like \cref{fig:intro}.
The visualization in \cref{fig:add_vis_hand} is hand-picked (like \cref{fig:intro}), while the ones in
\cref{fig:add_vis} are chosen randomly from the evaluated images.

\subsection{PGD for Segmentation}
\label{sec:pgd-segmentation}

Following the work of \citet{MadryMSTV18}, \citet{ArnabMT18} and
\citet{XieWZZXY17} we use a slightly generalized form of the untargeted $\ell_{2}$
projected gradient decent (PGD) attack. This version is the same as untargeted
PGD \citep{MadryMSTV18} in the classification setting, but we adapt the loss to
be the average of all pixel losses as in \citet{XieWZZXY17}. Formally, for an
input $\vx \in \mathcal{X}^{N} = [0, 1]^{N\times3}$ with ground truth segmentation
$\vy \in \mathbb{Y}^{N}$ and model $f$ a radius $R$ and stepsize $s \in \mathbb{R}$ we produce an adversarial example $\vx'_{k}$ after $k$ steps.

\begin{align*}
&\vx'_{0} = \vx + \epsilon, \qquad \epsilon \sim [0,1] \text{ with } \|\epsilon\|_{2} \leq R\\
&\vx'_{i+1} = c_{R, \vx}(\vx'_{i} + s \nabla_{\vx'_{i}} \mathcal{L}(\vx, \vy))\\
\end{align*}

with clamping function
\begin{align*}
  &c_{R} \colon \mathbb{R}^{N \times 3} \to [0, 1]^{N \times 3}\\
  &c_{R, \vx}(\vx') = \left[\vx + R \frac{\vx' - \vx}{\|\vx' - \vx\|_{2}}\right],
\end{align*}
where $[\cdot]$ denotes component-wise clamping to $[0, 1]$, and loss
\begin{equation}
 \mathcal{L}(\vx, \vy) = \frac{1}{N} \sum_{i = 1}^{N} \mathcal{H}(f_{i}(\vx), \vy_{i}),
\end{equation}
where $\mathcal{H}$ denotes the cross entropy function.

\begin{figure*}[ht]
    \newcommand{\subfigwidth}{0.4\columnwidth}
    \center
    \begin{subfigure}[b]{\subfigwidth}
        \includegraphics[width=\textwidth]{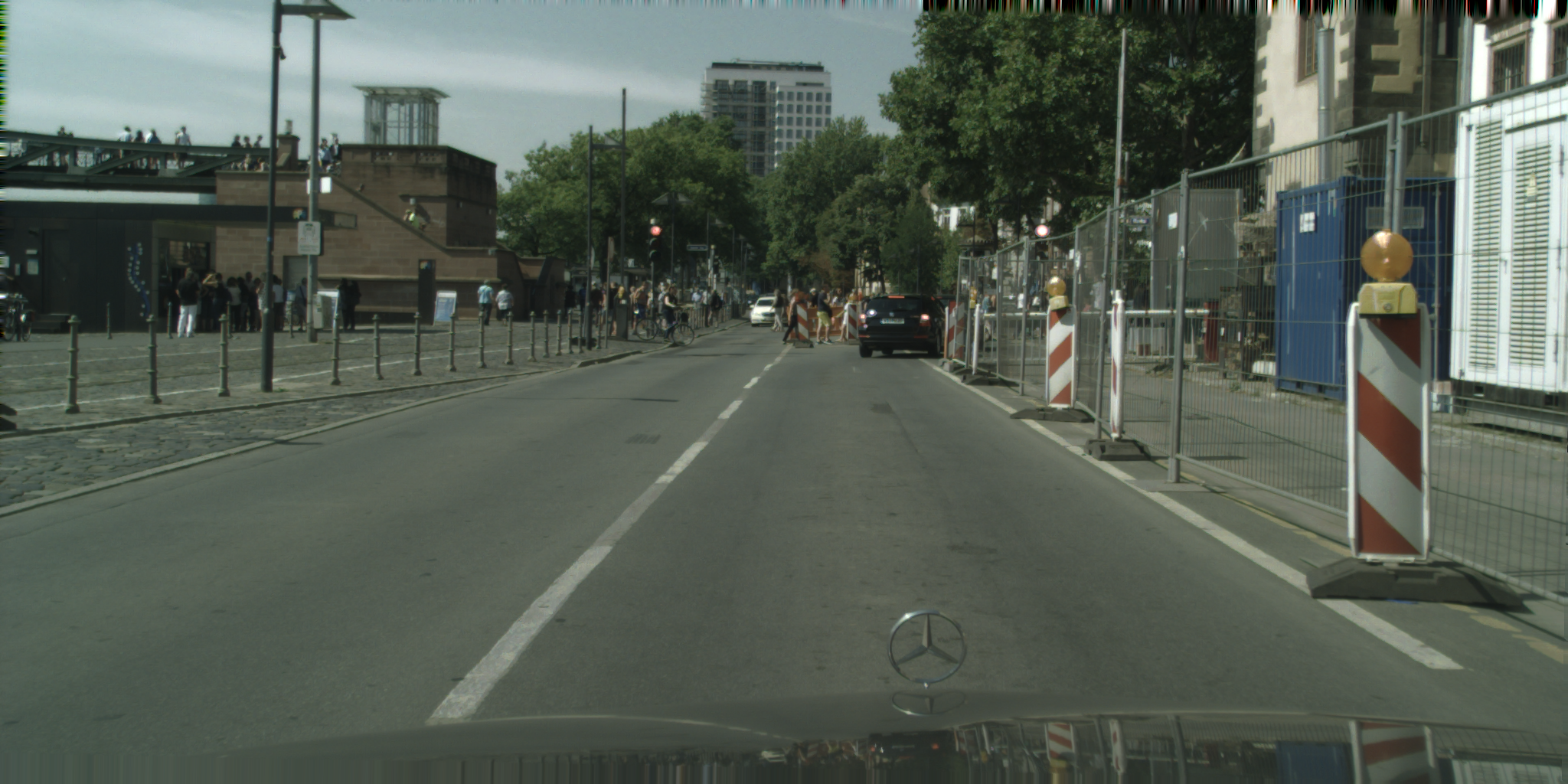}
    \end{subfigure}
    \hfill
    \begin{subfigure}[b]{\subfigwidth}
        \includegraphics[width=\textwidth]{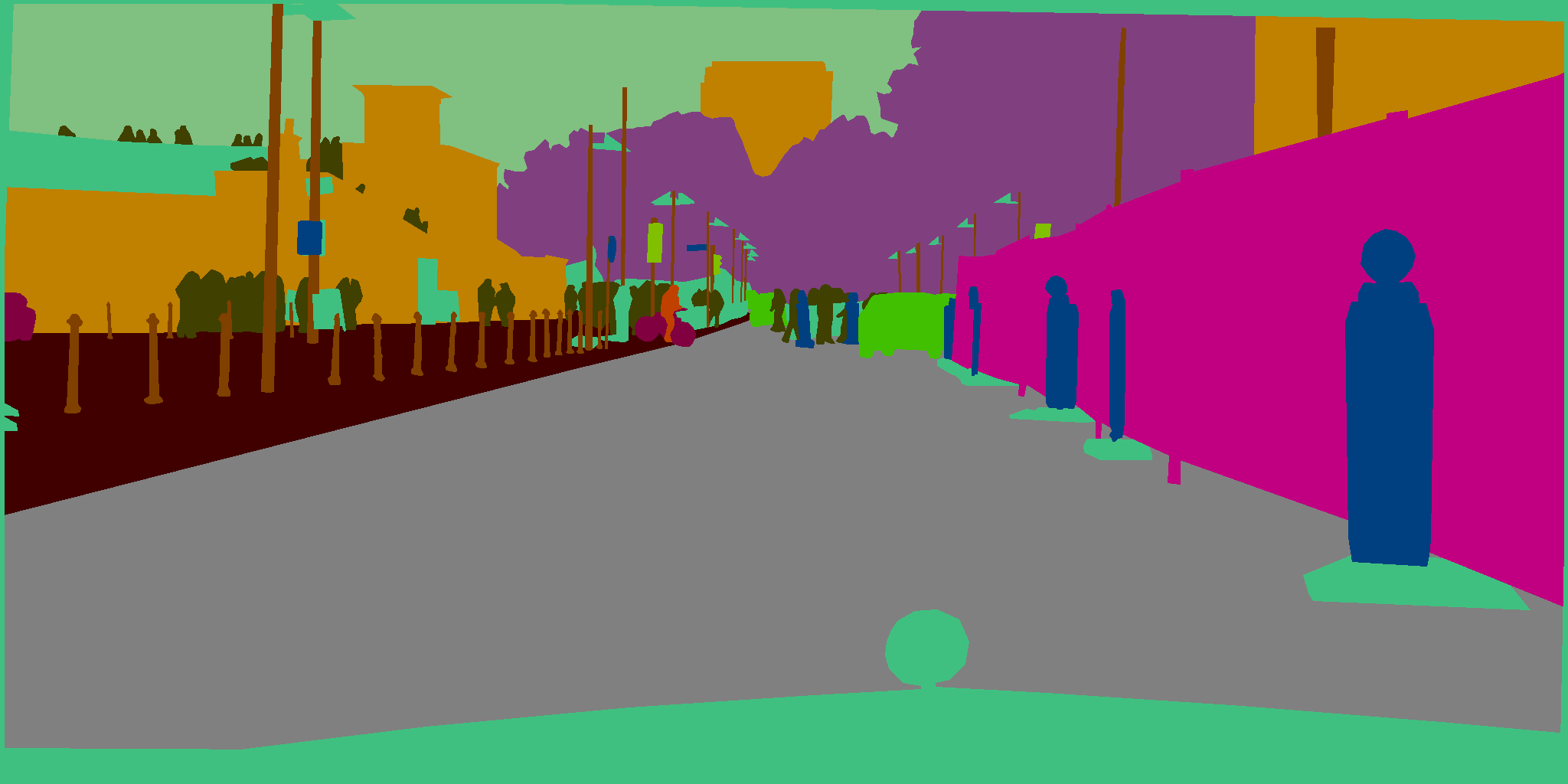}
    \end{subfigure}
    \hfill
    \begin{subfigure}[b]{\subfigwidth}
        \includegraphics[width=\textwidth]{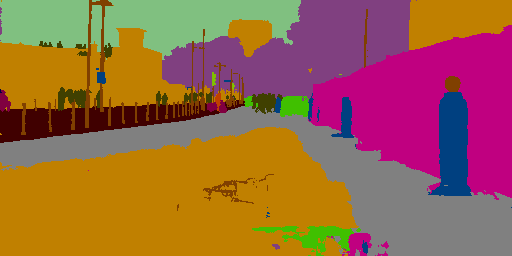}
    \end{subfigure}
    \hfill
    \begin{subfigure}[b]{\subfigwidth}
        \includegraphics[width=\textwidth]{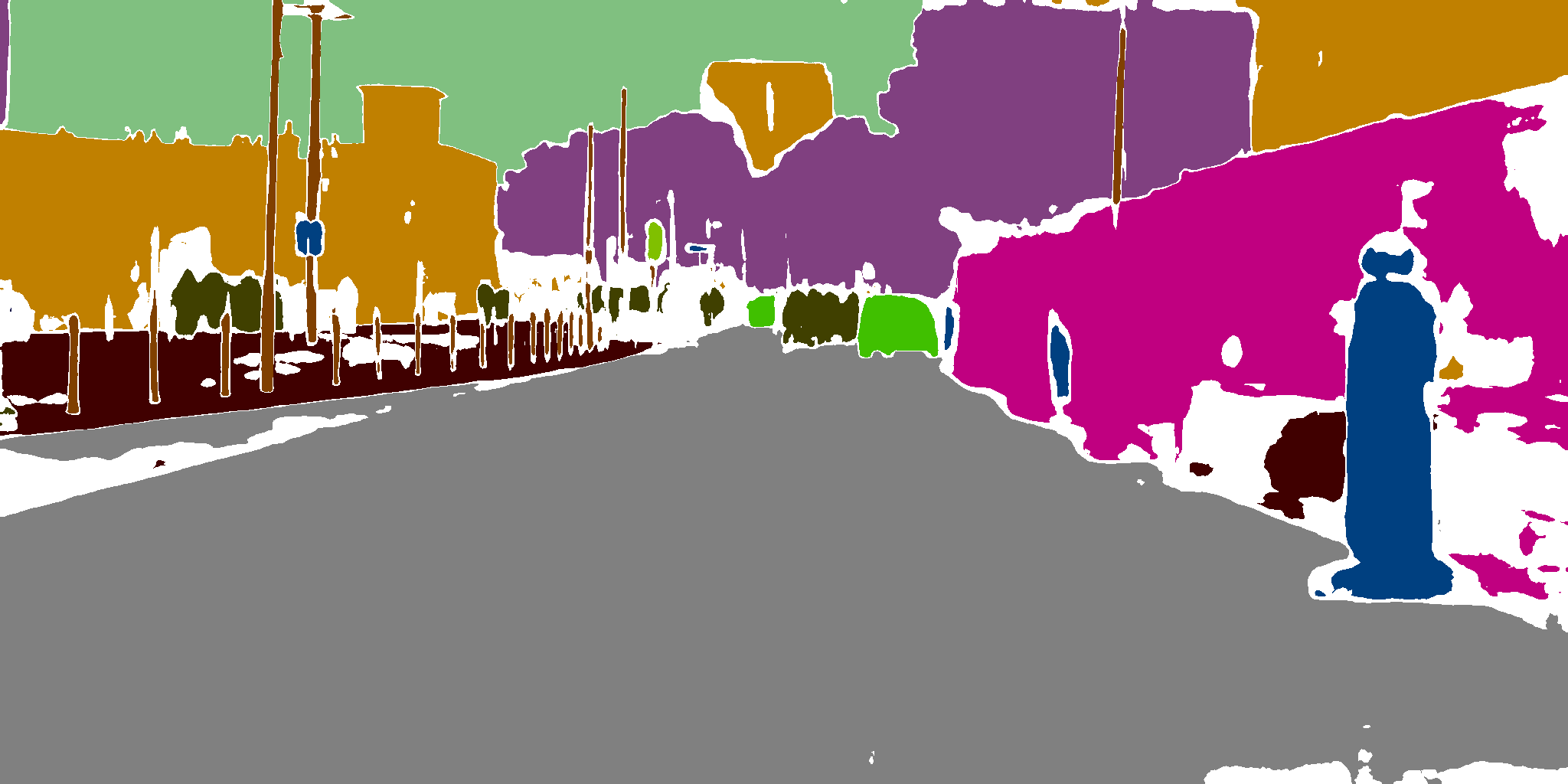}
    \end{subfigure}\\[2em]

    \begin{subfigure}[b]{\subfigwidth}
        \includegraphics[width=\textwidth]{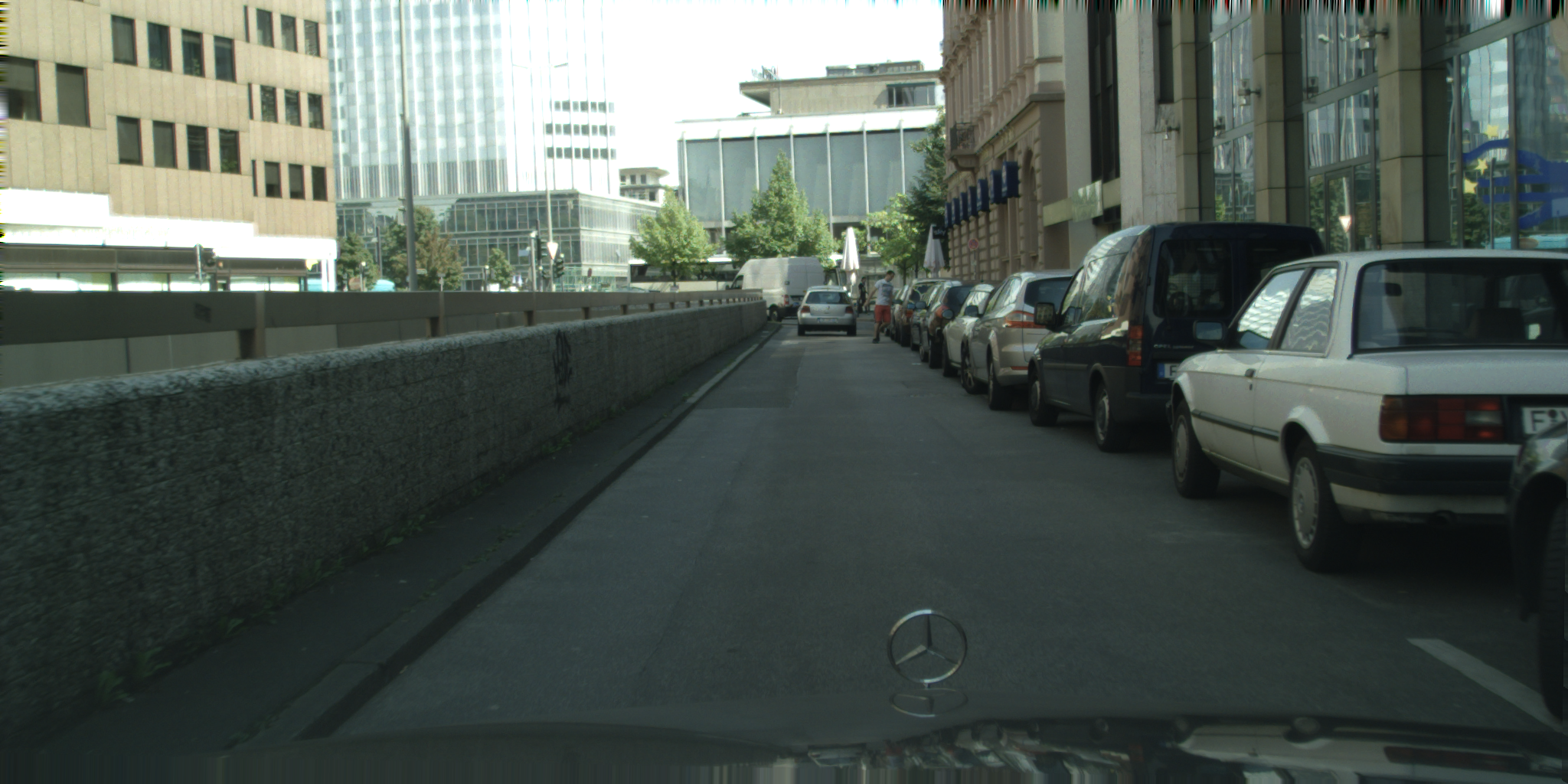}
    \end{subfigure}
    \hfill
    \begin{subfigure}[b]{\subfigwidth}
        \includegraphics[width=\textwidth]{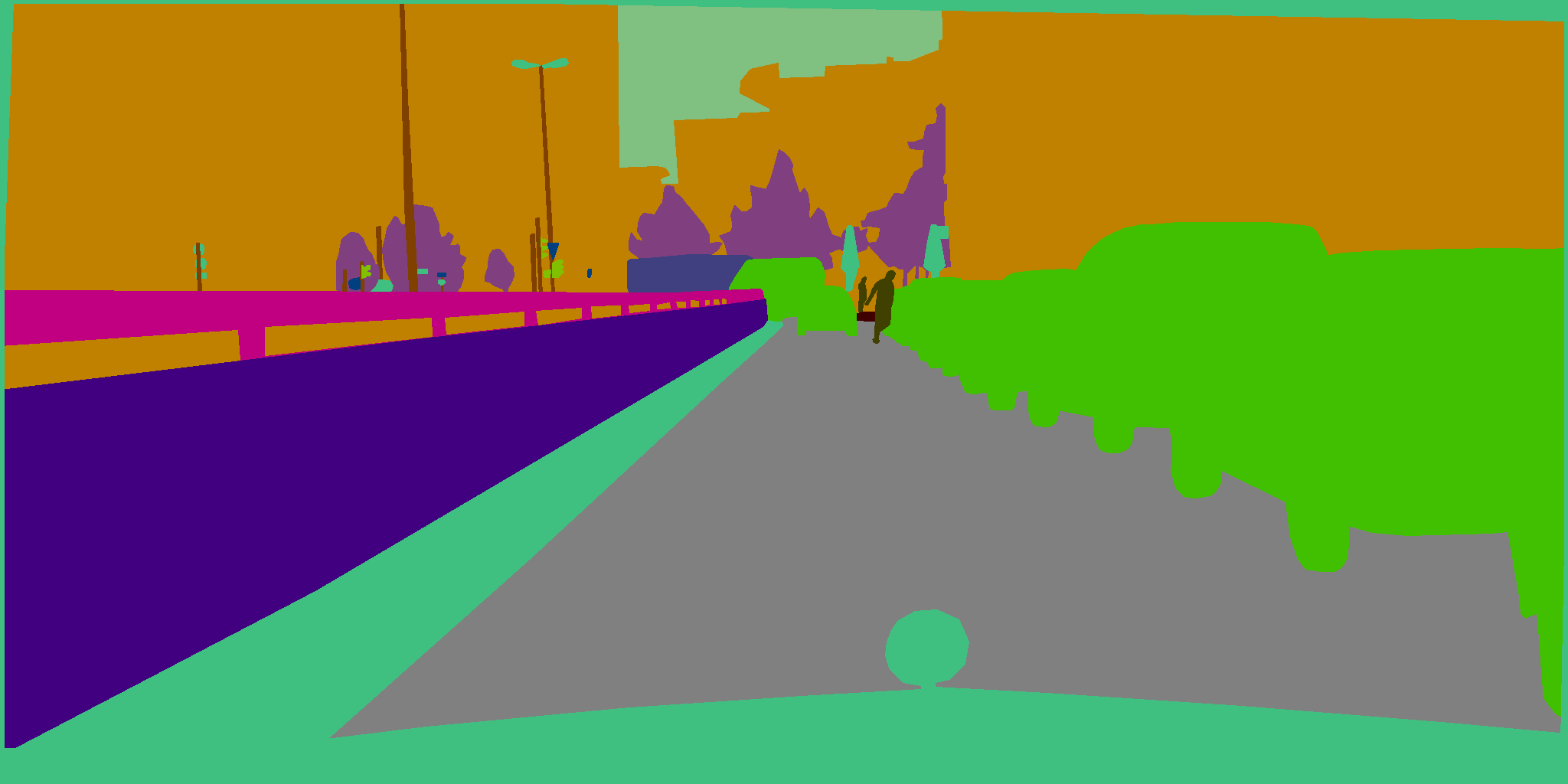}
    \end{subfigure}
    \hfill
    \begin{subfigure}[b]{\subfigwidth}
        \includegraphics[width=\textwidth]{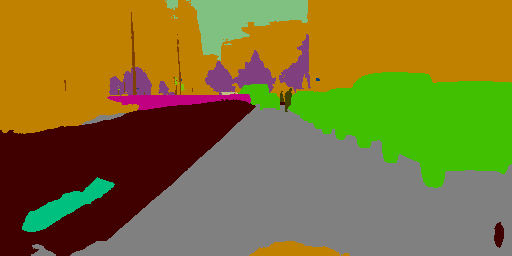}
    \end{subfigure}
    \hfill
    \begin{subfigure}[b]{\subfigwidth}
        \includegraphics[width=\textwidth]{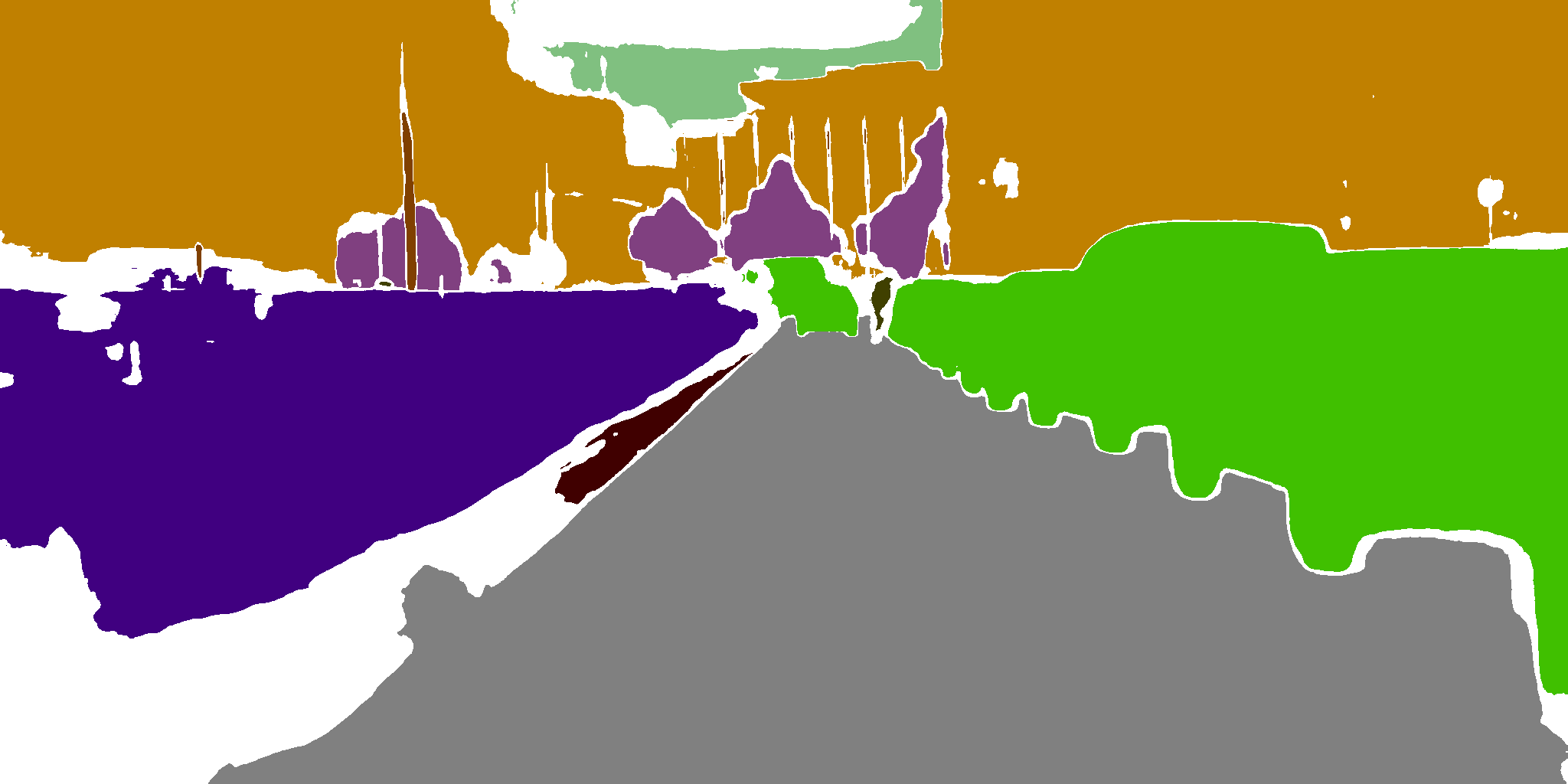}
    \end{subfigure}\\[2em]

    \begin{subfigure}[b]{\subfigwidth}
        \includegraphics[width=\textwidth]{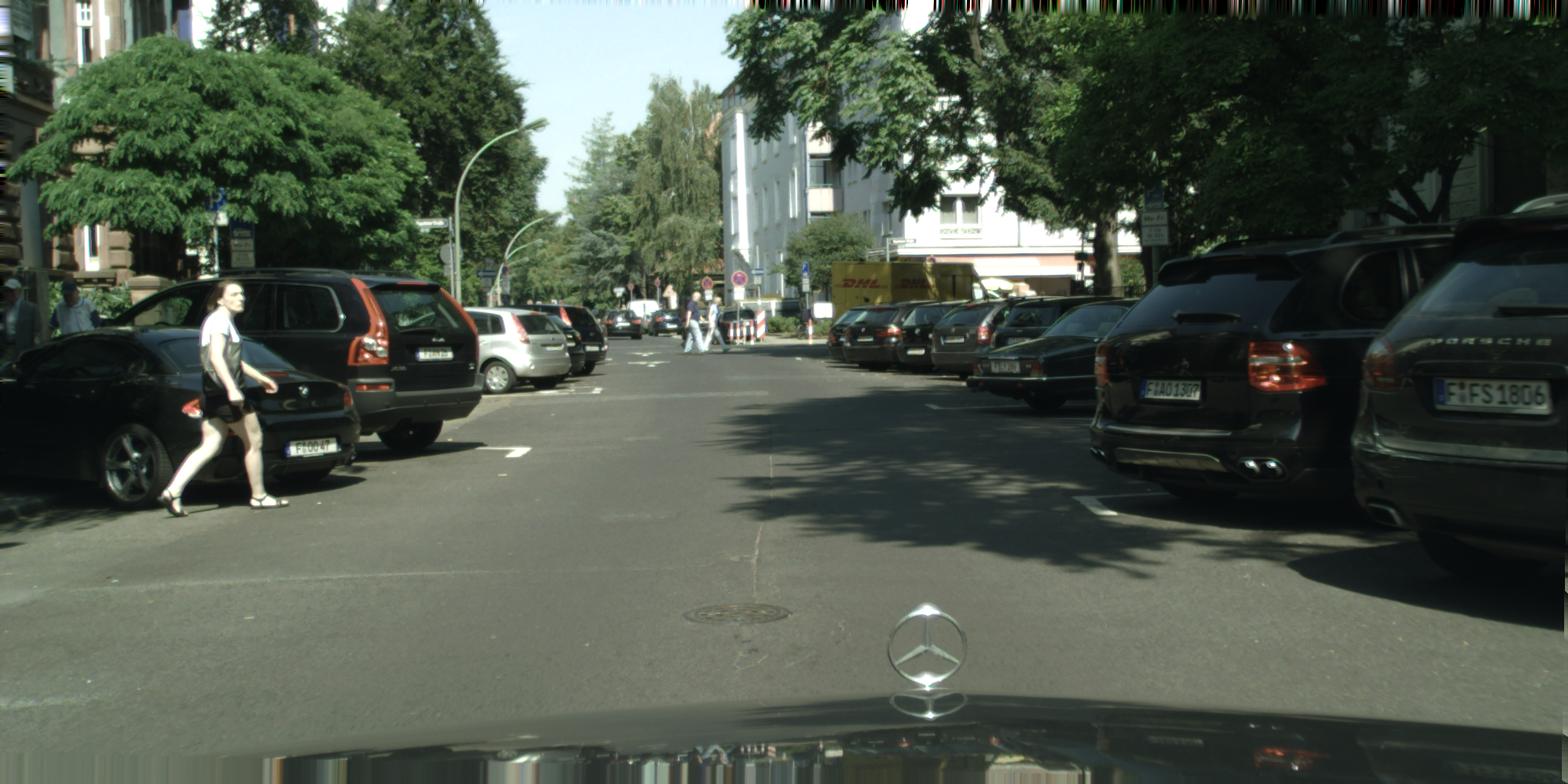}
    \end{subfigure}
    \hfill
    \begin{subfigure}[b]{\subfigwidth}
        \includegraphics[width=\textwidth]{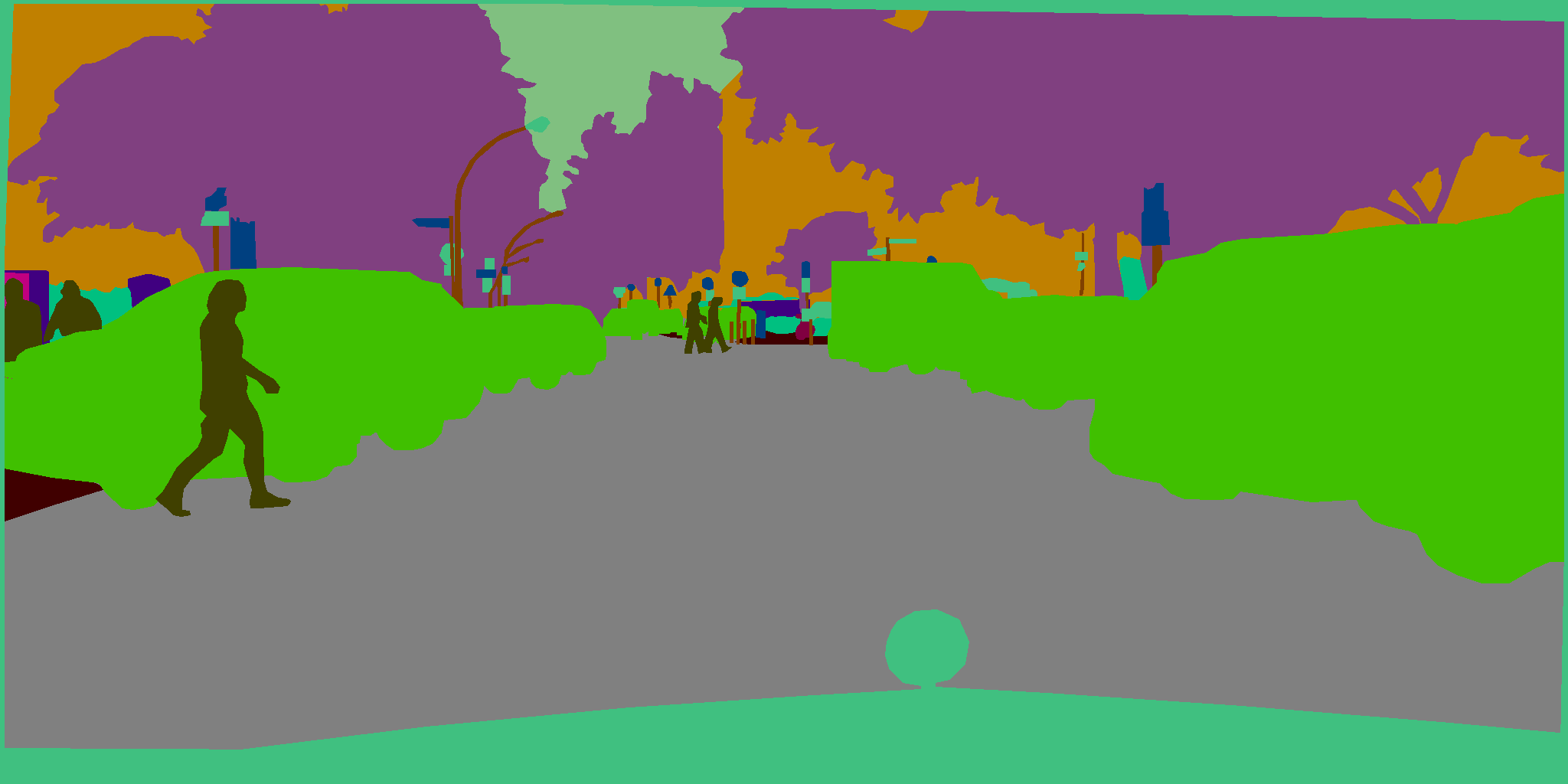}
    \end{subfigure}
    \hfill
    \begin{subfigure}[b]{\subfigwidth}
        \includegraphics[width=\textwidth]{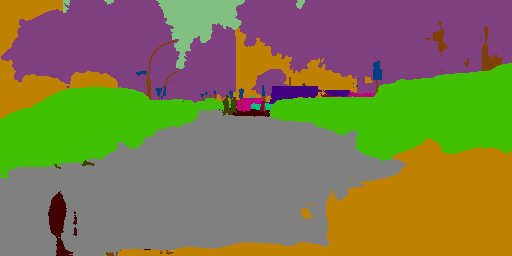}
    \end{subfigure}
    \hfill
    \begin{subfigure}[b]{\subfigwidth}
        \includegraphics[width=\textwidth]{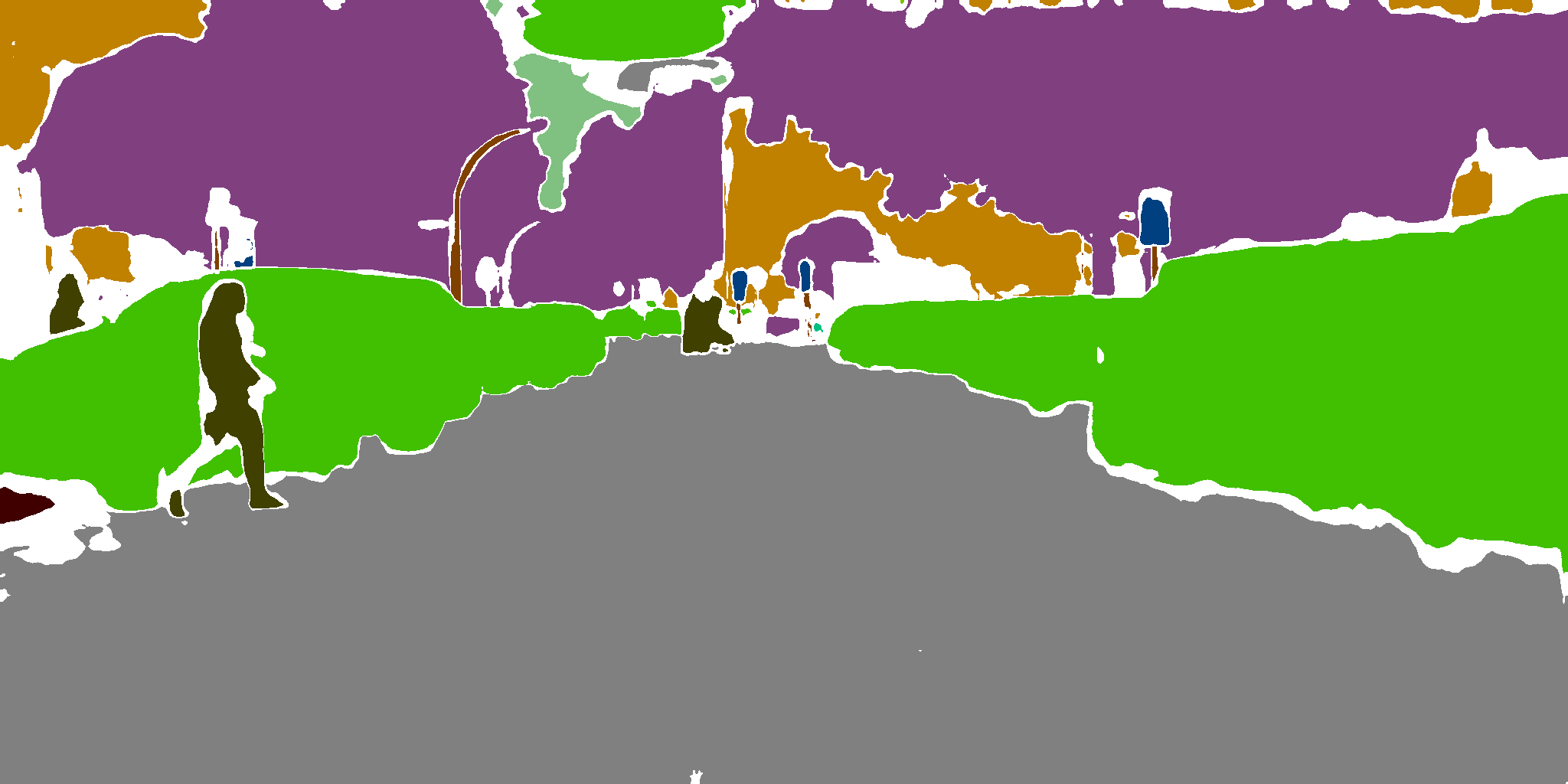}
    \end{subfigure}\\[2em]

    \begin{subfigure}[b]{\subfigwidth}
        \includegraphics[width=\textwidth]{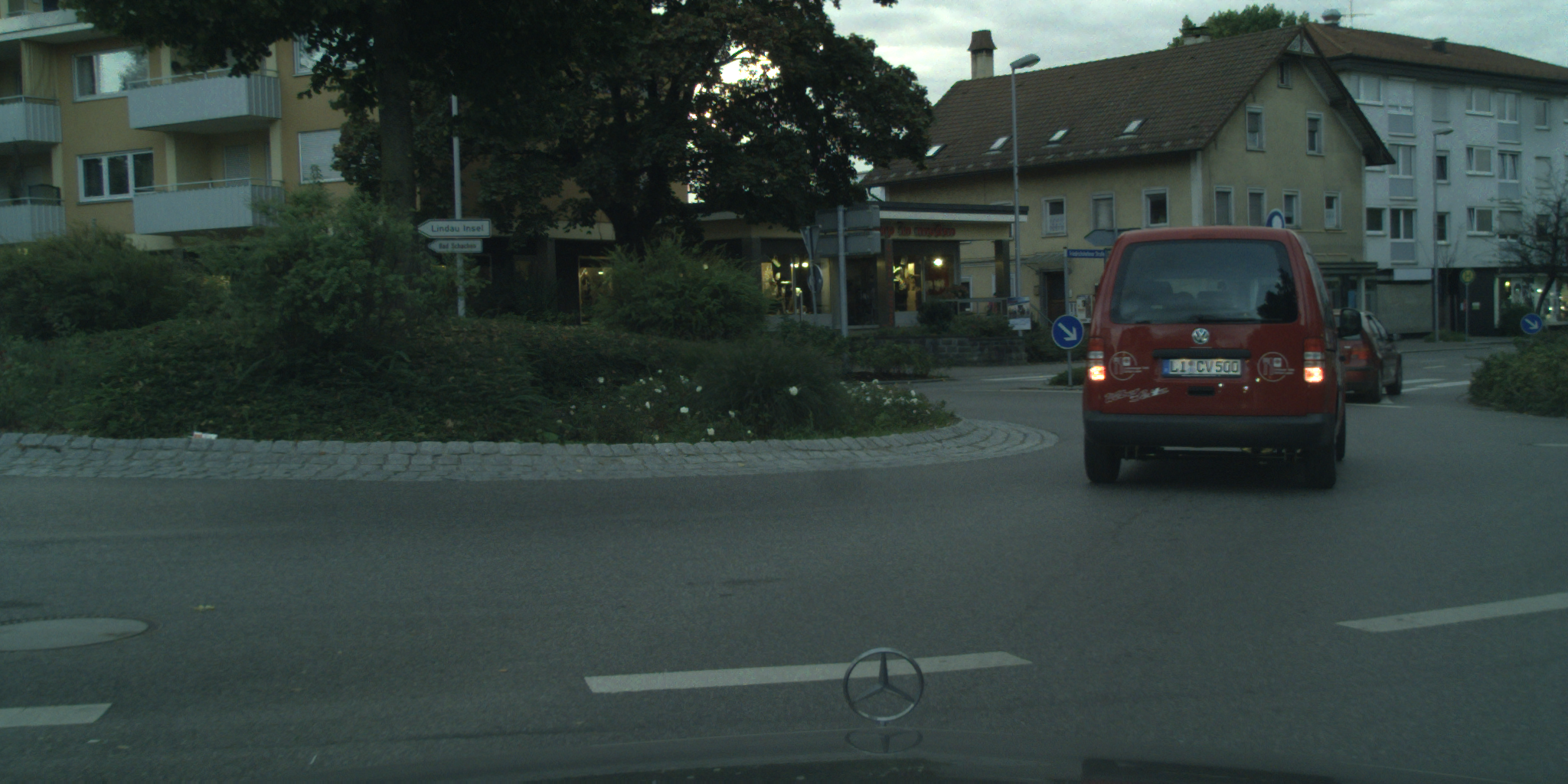}
    \end{subfigure}
    \hfill
    \begin{subfigure}[b]{\subfigwidth}
        \includegraphics[width=\textwidth]{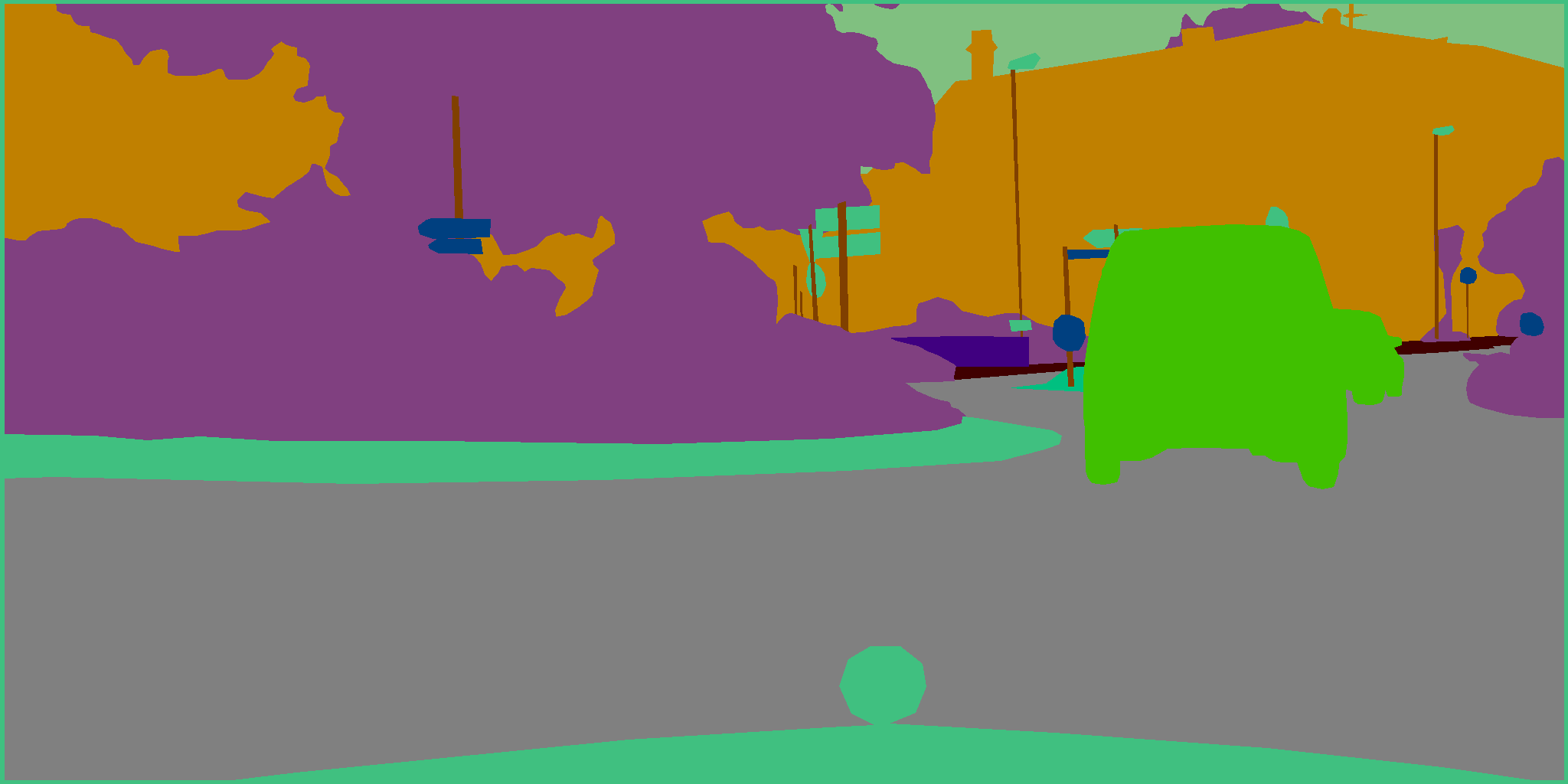}
    \end{subfigure}
    \hfill
    \begin{subfigure}[b]{\subfigwidth}
        \includegraphics[width=\textwidth]{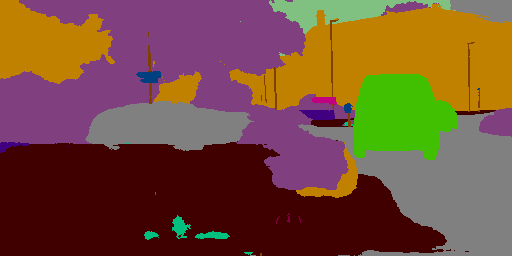}
    \end{subfigure}
    \hfill
    \begin{subfigure}[b]{\subfigwidth}
        \includegraphics[width=\textwidth]{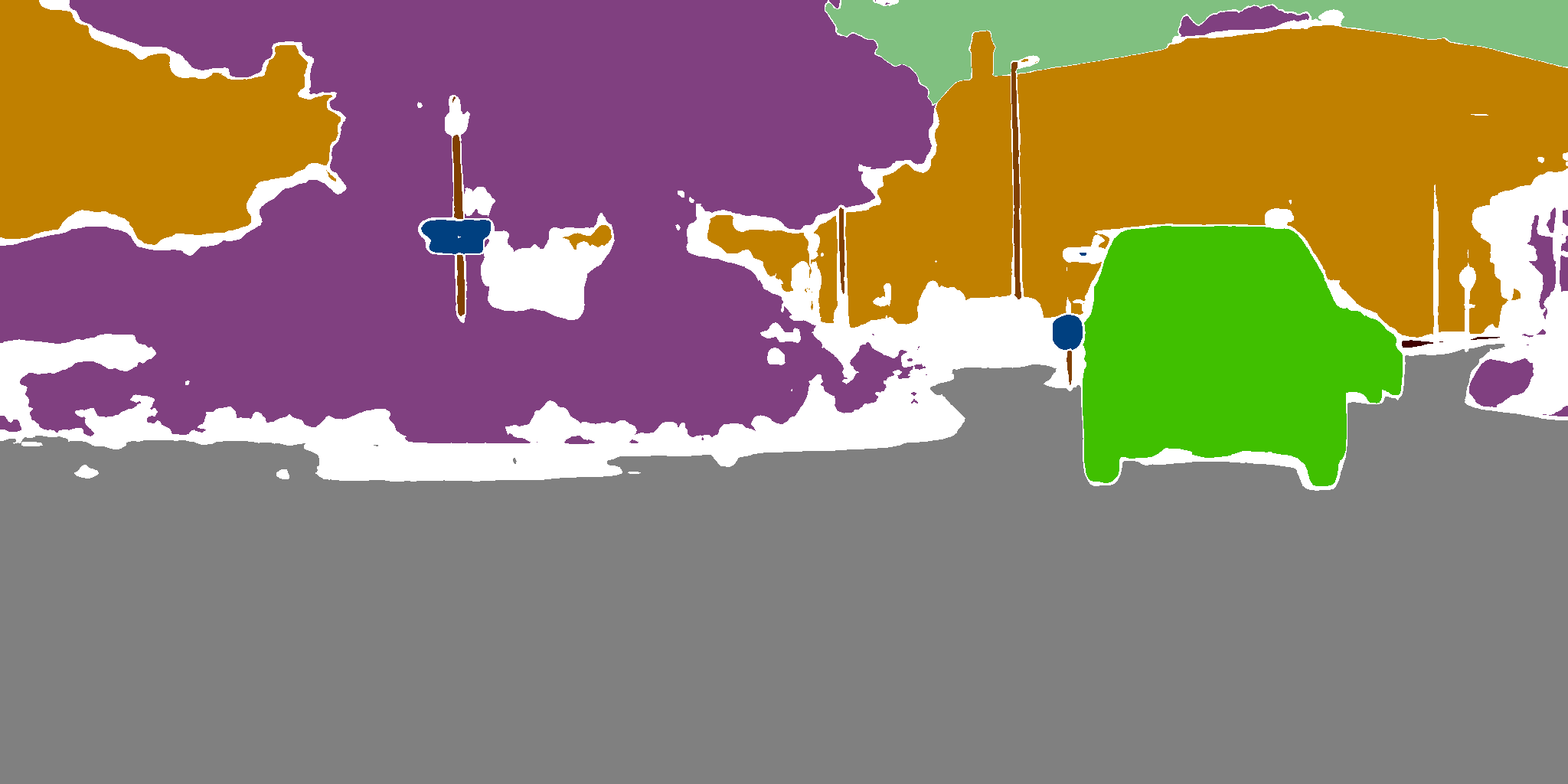}
    \end{subfigure}\\[2em]

    \begin{subfigure}[b]{\subfigwidth}
        \includegraphics[width=\textwidth]{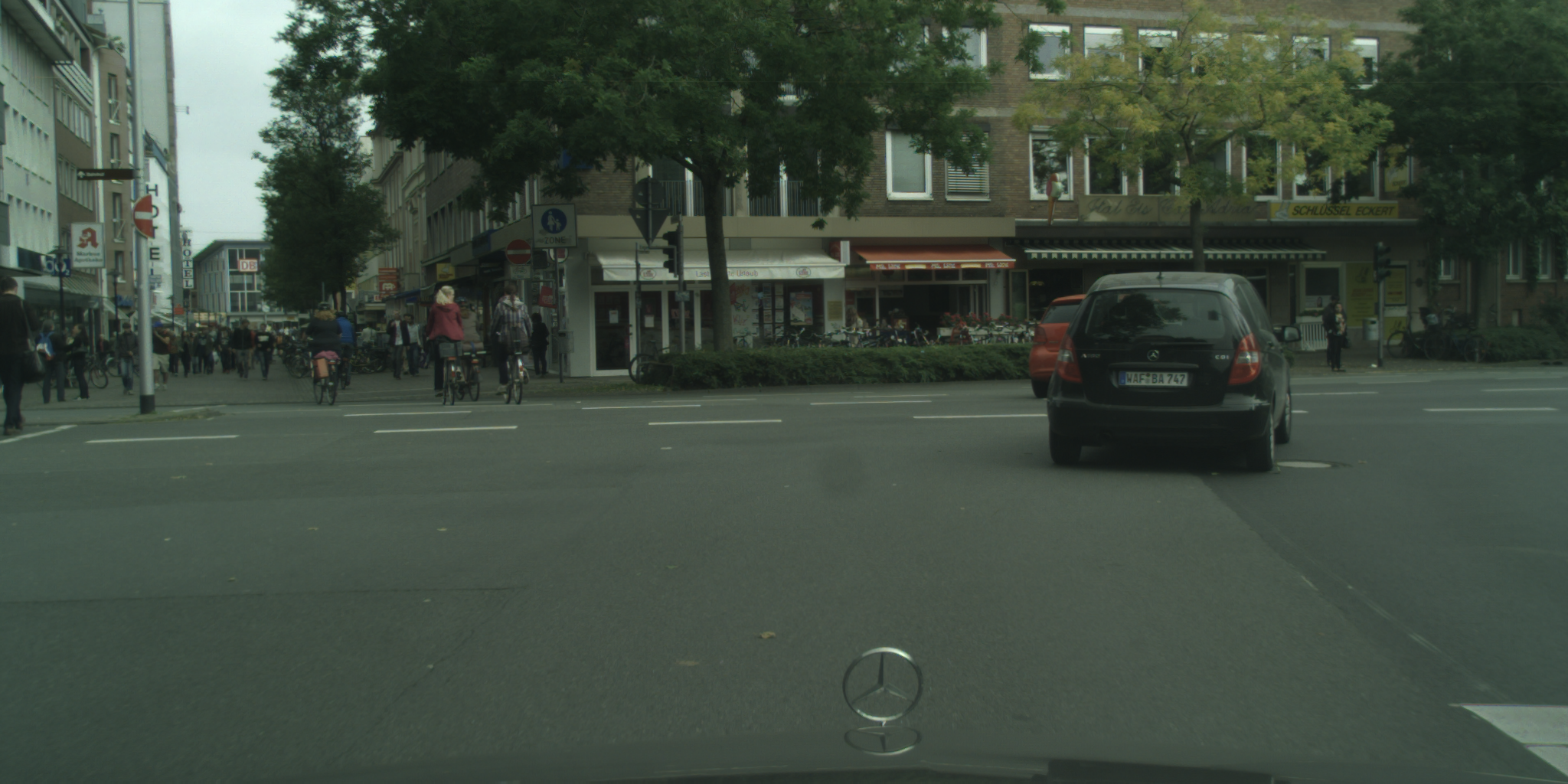}
        \caption{Attacked image}
    \end{subfigure}
    \hfill
    \begin{subfigure}[b]{\subfigwidth}
        \includegraphics[width=\textwidth]{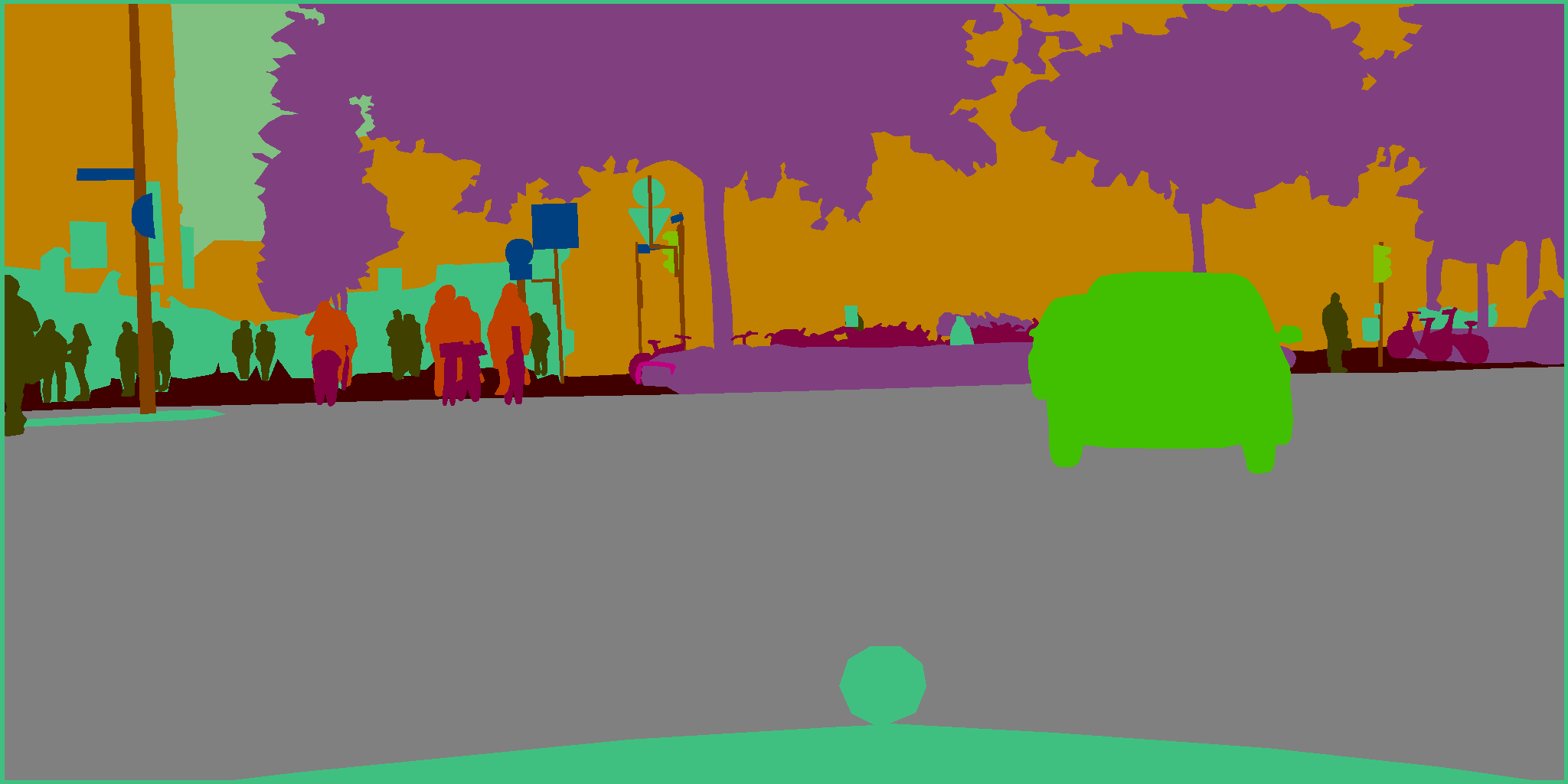}
        \caption{Ground truth seg.}
    \end{subfigure}
    \hfill
    \begin{subfigure}[b]{\subfigwidth}
        \includegraphics[width=\textwidth]{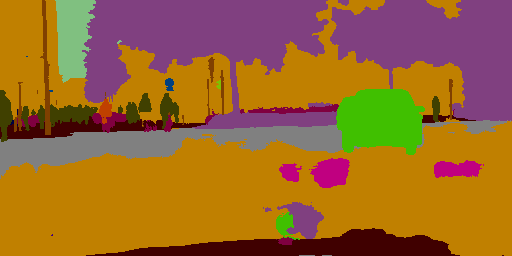}
        \caption{Attacked segmentation}
    \end{subfigure}
    \hfill
    \begin{subfigure}[b]{\subfigwidth}
        \includegraphics[width=\textwidth]{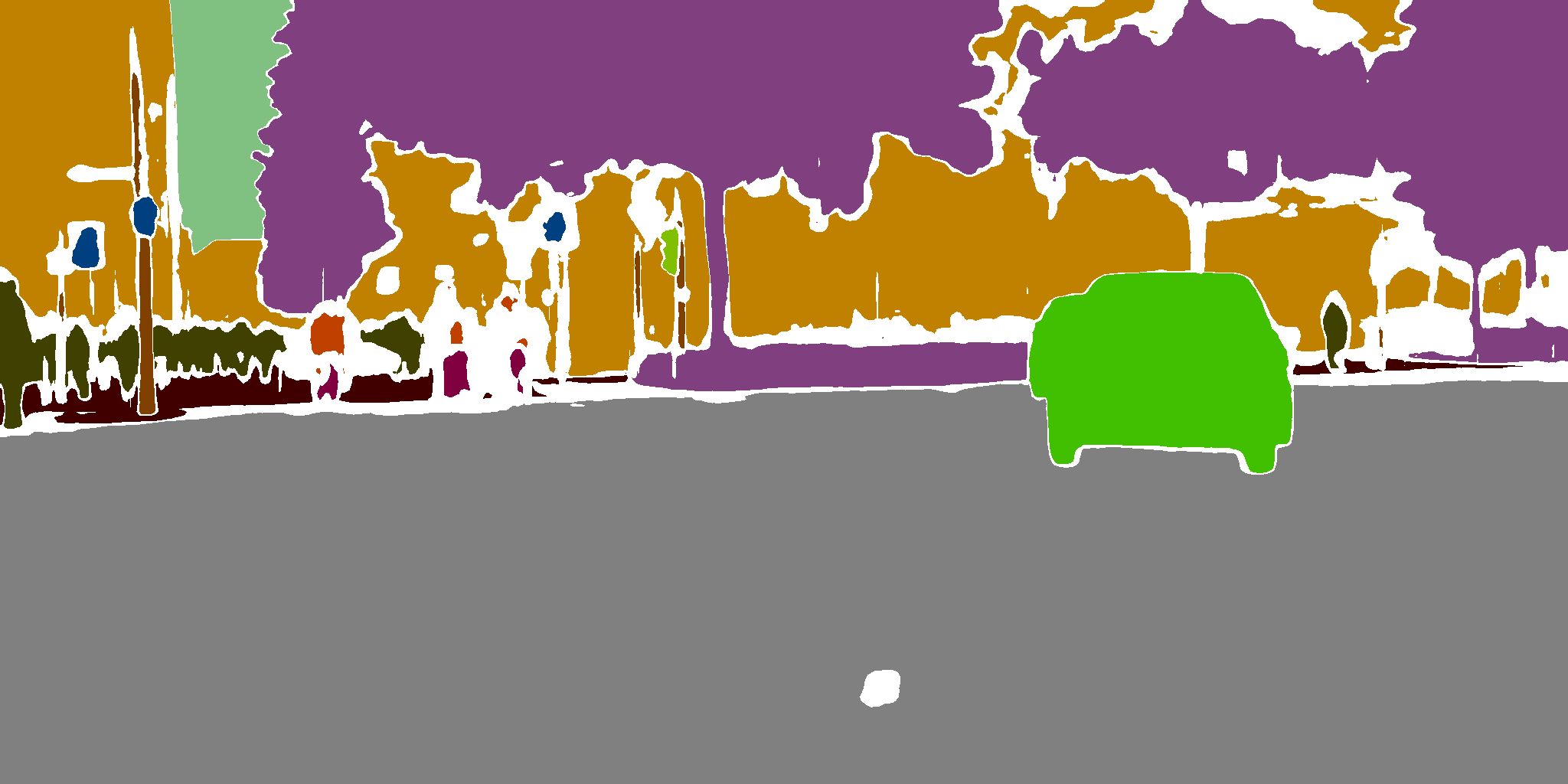}
        \caption{Certified segmentation}
    \end{subfigure}\\

    \caption{Randomly chosen examples like \cref{fig:intro}.}
    \label{fig:add_vis}
\end{figure*}

\fi

\end{document}